\documentclass[letterpaper,11pt]{article}
\usepackage[margin=1in]{geometry}
\usepackage[utf8]{inputenc} % allow utf-8 input
\usepackage[T1]{fontenc}    % use 8-bit T1 fonts

\usepackage{macros}

\title{Boosting and the Expressive Power of Simple Weak Learners via the $\gamma$-VC Dimension}

\author{
    \begin{minipage}[t]{0.3\textwidth}
        \centering
        {\normalsize Arthur da Cunha} \\
        {\small Aarhus University} \\
        {\small dac@cs.au.dk}
    \end{minipage}
    \hfill
    \begin{minipage}[t]{0.3\textwidth}
        \centering
        {\normalsize Kasper Green Larsen} \\
        {\small Aarhus University} \\
        {\small larsen@cs.au.dk}
    \end{minipage}
    \hfill
    \begin{minipage}[t]{0.3\textwidth}
        \centering
        {\normalsize Liang-Yu Zou} \\
        {\small Aarhus University} \\
        {\small zou@cs.au.dk}
    \end{minipage}
}

\date{\vspace*{-1em}}

\begin{document}

% \pagenumbering{gobble}          % Suppress page numbers (no page number displayed)

\maketitle

\begin{abstract}
    Boosting converts weak hypotheses with a small edge over random guessing into highly accurate predictors, but the expressive power of the resulting classifier can depend strongly on the structure of the base class.
    We study this phenomenon through the $\gamma$-VC dimension introduced by Alon et al. (STOC 2021). Our first result shows that this parameter characterizes the sample complexity for weak-to-strong learning up to a constant factor scaling in $\gamma$. We then sharpen the general relationship between the classic VC dimension and the $\gamma$-VC dimension. Finally, we also give improved upper and lower bounds on the $\gamma$-VC dimension for the fundamental concept classes of decision stumps and axis-parallel rectangles in $\mathbb{R}^d$.

\end{abstract}

% \newpage

% \tableofcontents

% \newpage

\pagenumbering{arabic}
\setcounter{page}{1}

\section{Introduction}

Boosting is one of the central paradigms in learning theory.
In its classical formulation, one is given access to a \emph{weak learner}: an algorithm which, for every distribution over labeled examples generated by an unknown target concept, returns a hypothesis whose error is bounded away from $1/2$ by some edge parameter $\gamma/2\in (0, 1/2)$.
The goal is to convert such a weak learner into a \emph{strong learner}, whose error can be made arbitrarily small.
The question of whether weak learnability and strong learnability are equivalent was raised by~\citet{kearns1988learning, kearns1994cryptographic} and answered affirmatively by~\citet{schapire1990strength}.
This line of work culminated in the AdaBoost algorithm of~\citet{freund1997decision}, which remains one of the most influential algorithms in machine learning.

At a high level, boosting works by repeatedly training weak classifiers and combining their predictions into a single final classifier.
Each weak classifier may only be slightly better than random guessing, but the training procedure adaptively focuses attention on examples that previous classifiers handled poorly.
The final prediction is then obtained by a weighted vote of the weak classifiers, or sometimes using more elaborate aggregation rules.
Thus, boosting transforms a collection of simple, moderately accurate hypotheses into a highly accurate aggregate predictor.

A key motivation for boosting is that the weak hypotheses are intended to be simple ``rules of thumb''.
In both theory and practice, the base class is often taken to be a very simple and efficiently learnable class, such as decision stumps, small decision trees, or other low-complexity classifiers.
Decision stumps are particularly canonical: they depend on a single feature and a single threshold, and already in early experimental evaluations of AdaBoost they were used as a standard base hypothesis class~\citep{FreundSchapire96, FreundSchapire99}.
From this perspective, the central question is not merely whether weak learners can be boosted, but what can be learned by boosting hypotheses from a fixed simple base class.

A fundamental question in weak-to-strong learning is how many samples are needed to carry out this transformation.
\citet{larsen2022optimal} gave a sharp answer in terms of the VC dimension of the base class: if the base class $\cH$ has VC dimension $\VC(\cH)$, then there is a boosting algorithm using
\begin{align}
\label{eq:vcworstcase}
    O\left(
        \frac{\VC(\cH)}{\eps \gamma^2}
        +
        \frac{\log(1/\delta)}{\eps}
    \right)
\end{align}
samples to obtain error at most $\eps$ with confidence $1 - \delta$.
Their upper bound is obtained by combining AdaBoost with a subsampling technique of~\citet{hanneke2016optimal}, or with Bagging~\citep{breiman1996bagging,baggingLarsen}.
Moreover, \citet{larsen2022optimal} showed that for every integer $d = \Omega(\log(1/\gamma))$, there exists a base hypothesis class $\cH$ with $\VC(\cH) \le d$ for which the sample complexity bound is tight.

This result gives a complete worst-case answer when the only parameter of the base class is its VC dimension.
However, their lower bound only holds for some specifically designed classes, leaving room for improvements for very simple (low VC dimension) base classes. 
Furthermore, it does not capture the more refined structure of a particular base hypothesis class.
In particular, VC dimension measures the difficulty of learning the base hypothesis class itself, while boosting is concerned with the expressive power obtained by aggregating many hypotheses from the base class, most often via majority voting.
This distinction holds particular significance for simple base classes in the boosting context, such as decision stumps, whose ordinary VC dimension is small but whose boosted expressivity can be substantially richer.

The expressivity question was introduced and studied by~\citet{alon2023boosting}, who introduced the concept of \emph{$\gamma$-VC dimension}.
% Given a base class $\cH \subseteq \{\pm 1\}^{\cX}$, a labeled sample is called $\gamma$-realizable by $\cH$ if, for every distribution on the sample, some $h \in\cH$ has correlation at least $\gamma$ with the labels.
% A set of points is $\gamma$-interpolated by $\cH$ if every binary labeling of the set is $\gamma$-realizable by $\cH$.
% The $\gamma$-VC dimension, denoted $\VC_\gamma(\cH)$, is the largest size of a $\gamma$-interpolated set.

\begin{definition}[$\gamma$-VC dimension]
    Let $\gamma \in (0, 1]$.
    Let $\cH \subseteq \{\pm 1\}^\cX$ be a hypothesis class.
    A sample $S = ((x_1, y_1), \dotsc, (x_n, y_n))$ is \underline{$\gamma$-realizable} with respect to $\cH$ if for any probability distribution $\cQ$ over $S$, there exists a hypothesis $h \in \cH$ such that,
    \[
    \corr_\cQ(h)
    \coloneqq \bbE_{(x, y) \sim \cQ}[y \cdot h(x)]
    \ge \gamma
    .
    \]
    A distribution $\cD$ over $\cX \times \{\pm 1\}$ is called \underline{$\gamma$-realizable} by $\cH$ if any i.i.d. sample drawn from $\cD$ is $\gamma$-realizable with respect to $\cH$.

    A set $\{x_1, \dotsc, x_d\} \subseteq \cX$ is \underline{$\gamma$-interpolated} by $\cH$ if for any concept $c\colon \cX \to \{\pm 1\}$, the sample $S = ((x_1, c(x_1)), \dotsc, (x_d, c(x_d)))$ is $\gamma$-realizable with respect to $\cH$.

    The \underline{$\gamma$-VC dimension} of $\cH$, denoted by $\VC_\gamma(\cH)$, is the maximum size of a set which is  $\gamma$-interpolated by $\cH$.
    If there exist sets of arbitrary large size that are $\gamma$-interpolated by $\cH$, then we define $\VC_\gamma(\cH) = \infty$.
\end{definition}

For $\gamma = 1$, this recovers the usual VC dimension.
For smaller $\gamma$, it measures how rich a family of target concepts can be weakly represented, with edge $\gamma$, by the base class. Besides, $\bbE_{(x, y) \sim \cQ}[y \cdot h(x)]\ge \gamma$ is equivalent to $\bbP_{(x, y) \sim \cQ}[h(x) \ne y] \le (1 - \gamma)/2$, so $\gamma$-realizability is equivalent to weak learnability with edge $\gamma$.

This parameter not only measures expressivity; it also characterizes the sample complexity of weak-to-strong learning.
To see the lower bound, let $P$ be a set of size $\VC_\gamma(\cH)$ that is $\gamma$-interpolated by $\cH$. Then every labeling $c\colon P \to\{\pm 1\}$ is $\gamma$-realizable with respect to $\cH$: for every distribution $\cD$ over $P$, there is an $h \in\cH$ such that
\[
    \bbE_{x \sim\cD}[c(x)h(x)]
    \ge \gamma
    .
\]
Thus the class of all labelings of $P$ is weakly learnable with edge $\gamma$ using the base class $\cH$.
Any weak-to-strong learner that succeeds for all such weakly learnable targets must therefore, in particular, PAC learn the full class $\{\pm 1\}^P$.
The standard VC lower bound for realizable PAC learning implies that
\[
    \Omega\left(
        \frac{\VC_\gamma(\cH) + \log(1/\delta)}{\eps}
    \right)
\]
samples are necessary. For completeness, we have included a formal and self-contained proof of this lower bound in Section~\ref{sec:weak_to_strong-lower_bound}.

Prior work gives a matching information-theoretic upper bound.
\citet{partial} developed a theory of PAC learning for partial concept classes, showing that finite VC dimension of partial concept classes characterizes learnability in this setting.
Subsequent work of~\citet{pacwithoutuniform} gave optimal PAC bounds for partial concept classes using aggregations of the one-inclusion graph algorithm.
Applied to the $\gamma$-\emph{margin} partial class associated with the base hypothesis class $\cH$, this yields sample complexity
\[
    O\left(
        \frac{\VC_\gamma(\cH) + \log(1/\delta)}{\eps}
    \right)
    .
\]
However, this route goes through general-purpose algorithms for partial concept classes, based on the one-inclusion graph, which are transductive and can be computationally impractical for the classes arising from boosting.
Moreover, such an algorithm based on partial concept classes would need full knowledge of the base class and cannot be implemented solely from oracle access to a weak learner. We refer the reader to~\citep{partial} for a formal definition of partial concept classes and the $\gamma$-margin partial class.

\paragraph{Boosting and $\gamma$-VC Dimension.}
Our first result shows that, up to a constant factor scaling in $\gamma$, the optimal weak-to-strong learning sample complexity above can be achieved by a boosting-style algorithm using only a weak learner.
This is important from an algorithmic perspective: boosting does not require knowledge of the rich $\gamma$-margin partial concept class associated with a simple base class $\cH$, but only repeatedly calls a procedure for finding simple hypotheses with nontrivial advantage and aggregates their predictions.
This is the same computational principle that makes boosting efficient in practice.
More precisely, we prove a uniform convergence theorem for \emph{large-margin} voting classifiers in which the usual VC-dimension dependence is replaced by $\VC_{C\gamma}(\cH)$, for a universal constant $C\in (0, 1)$.
Plugging this into the weak-to-strong framework developed in~\citep{larsen2022optimal} and combined with Bagging in~\citep{baggingLarsen}, yields the following guarantee.

\begin{theorem}
    \label{thm:algo}
    There is a universal constant $C \in (0, 1)$ such that the following holds.
    Suppose $\cW$ is a $\gamma$-weak learner with base class $\cH$, and let $d_{C\gamma} = \VC_{C\gamma}(\cH)$.
    Then the {\upshape\textsc{BaggedAdaBoost}} algorithm, using $\cW$, satisfies the following guarantee. For every distribution $\cD$ over $\cX \times \{\pm 1\}$ that is $\gamma$-realizable by $\cH$, and every $\delta \in (0, 1)$, given $n$ i.i.d. samples from $\cD$, the algorithm returns a classifier with error at most
    \[
        O\left(
            \frac{d_{C\gamma} + \log(1/\delta)}{n}
        \right)
    \]
    with probability at least $1 - \delta$. Equivalently, error at most $\eps$ is achieved with $O\left(\frac{d_{C\gamma} + \log(1/\delta)}{\eps}\right)$ samples.
\end{theorem}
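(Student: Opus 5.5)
The plan follows the blueprint of \citet{larsen2022optimal} and \citet{baggingLarsen}, with every appearance of $\VC(\cH)$ replaced by $\VC_{C\gamma}(\cH)$. The key new ingredient is a margin-based uniform convergence bound for voting classifiers over $\cH$, in which the complexity term is the $\gamma$-VC dimension rather than the VC dimension.

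\textbf{Step 1: large margins from AdaBoost.} Run AdaBoost with $\cW$ on any sample (or subsample) $S'$ from a $\gamma$-realizable $\cD$, for $T = O(\gamma^{-2}\log |S'|)$ rounds. Since $S'$ is $\gamma$-realizable, every reweighting admits an $h$ with correlation at least $\gamma$. The standard AdaBoost$^*_\nu$-style margin analysis then shows that the output $f = \sum_t \alpha_t h_t / \sum_t \alpha_t$ satisfies $y f(x) \ge \gamma/2$ for all $(x,y)\in S'$. I take this as given from the weak-to-strong framework of \citet{larsen2022optimal}.

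\textbf{Step 2: uniform convergence with $\VC_{C\gamma}$.} This is the core step and the main obstacle. I would prove the following: for every distribution $\cD$ and sample $S\sim\cD^m$, with probability $1-\delta$, every voting classifier $f\in\mathrm{conv}(\cH)$ with margin at least $\gamma/2$ on all of $S$ has
\[
\Pr_\cD[yf(x)\le \gamma/4] = O\left(\frac{d_{C\gamma}+\log(1/\delta)}{m}\right).
\]
The route I would try first is to pass to the partial concept class $\cF_\gamma$ of sign-patterns realized with margin. Suppose a set $P$ of points is shattered by the margin-$\gamma'$ partial class (with $\gamma' = \gamma/4$, say). Then for every labeling $c$ there is a vote $f$ with $c(x)f(x)\ge\gamma'$ on $P$. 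Hence for every distribution $\cQ$ on $P$ we get $\bbE_\cQ[c(x)f(x)]\ge\gamma'$, and averaging over the hypotheses in the vote yields some $h\in\cH$ with correlation at least $\gamma'$. So $P$ is $\gamma'$-interpolated. Conversely, by minimax, $\gamma'$-interpolation of $P$ gives margin-$\gamma'$ realization of every labeling. Thus the VC dimension of the margin partial class is exactly $\VC_{\gamma'}(\cH)$, and $C$ absorbs the constant loss in margin.

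The remaining issue is that a plain partial-class VC bound only controls realizable error of a consistent hypothesis, and it does not obviously hold uniformly with sharp $1/m$ rates. My tactic is a double-sample/symmetrization argument with a discretization of $f$. Sample $k=O(\gamma^{-2}\log m)$ hypotheses from the vote, so that margins are preserved up to $\gamma/8$ on the $2m$ points except on a small fraction. Then count the achievable sign patterns on the ghost sample with margin. To avoid the $\log m$ blowup that a naive count of $|\cH|^k$ would give, I would instead bound the number of patterns via a Sauer–Shelah-type lemma for the partial class restricted to points where the margin is large. That gives roughly $(2m)^{O(d_{C\gamma})}$, which on its own yields only $d\log m/m$.

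\textbf{Step 3: removing the log with bagging.} To remove the log factor, I would not aim for a sharp uniform bound. Instead I would feed the Step 2 bound with its log factor into the bagging analysis of \citet{baggingLarsen}. That analysis, like Hanneke's subsampling, only needs that the base learner (AdaBoost on subsamples) is a consistent-with-margin learner whose generalization error has the form $O((d+\log(1/\delta))\log m/m)$. Their argument then upgrades the majority vote over bootstrap runs to the optimal $O((d+\log(1/\delta))/m)$. Checking that their proof uses only this black-box property, and the fact that the hypothesis space's effective dimension is $d_{C\gamma}$ rather than $\VC(\cH)$, is the final verification. The delicate part is Step 2: guaranteeing that the counting really scales with $\VC_{C\gamma}$ and does not secretly reintroduce $\VC(\cH)/\gamma^2$.
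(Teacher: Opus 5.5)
Your overall architecture (AdaBoost margins, a $\VC_{C\gamma}$-based generalization statement for large-margin votes, then the bagging reduction of \citet{baggingLarsen}) is the paper's. Your minimax remark, that a set shattered with margin $\mu$ by $\conv(\cH)$ is exactly a $\mu$-interpolated set, is also the fact the paper uses. As you suspected, the gap is in Step 2, and Step 2 is the heart of the theorem.

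A Sauer--Shelah count in terms of $\VC_\mu(\cH)$ constrains $\sign(f)$ only on points where $|f(x)|\ge\mu$. Elsewhere the sign patterns of votes are unconstrained: already $\sign(\conv(\DS_1))$ realizes every labeling of every finite set, with small margins. So you must enumerate the low-margin set $P'$ of the double sample $P$, paying $\binom{2n}{\le |P'|}$, and you need $|P'|$ to be a small fraction of $P$. Your sparsification with $k=O(\gamma^{-2}\log m)$ controls $|g-f|$, not $|g|$. The vote $f$ has margin on $S$, but on the ghost half it may be near $0$ on a constant fraction of points, and this happens precisely in the bad event. Then nothing can be counted in terms of $\VC_{C\gamma}$.

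The missing idea is the random thinning of \citet{larsen2022optimal}. Draw $t=\Theta(\gamma^{-2})$ hypotheses from $f$ and discard each independently with probability $1/2$. By anti-concentration, $g(x)$ fluctuates at scale about $1/\sqrt t$ even where $f(x)=0$. Hence for every fixed $P$, a typical $g\sim\cD_{f,t}$ has $|g(x)|<\mu$ on at most a $\rho$-fraction of $P$ once $\mu\le\rho/(9600\sqrt t)$, i.e.\ for some $\mu=C\gamma$. The inequalities $\cL_\cD(f)\le 3\cL_\cD(f,t)$ and $\cL_S(f,t)\le 1/1200$ then transfer the conclusion back to $f$. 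Fixing $P'$ leaves a total class of VC dimension at most $d_\mu+|P'|$, which gives $|\hat\cF_{\mu,\rho}(P)|\le(e/\rho)^{d_\mu+4\rho n}$.

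This also shows that the rate you aim for in Steps 2--3 is the wrong target. The factor $(e/\rho)^{\Theta(\rho n)}$ is exponential in $n$. It is beaten by the $e^{-\Theta(n)}$ symmetrization bound only for constant $\rho$ and constant target error. Within this route, pushing the error to $0$ forces $\rho\to 0$, hence $\mu/\gamma\to 0$, so the complexity becomes $\VC_\mu$ with vanishing $\mu/\gamma$ rather than $\VC_{C\gamma}$.

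So the argument proves only the constant-error statement: every margin-$\gamma$ vote has error at most $1/200$ once $n\ge C_0(d_{C\gamma}+\log(1/\delta))$. That is exactly the interface the framework consumes; it replaces Corollary~15 of \citet{baggingLarsen}, i.e.\ Theorem~4 of \citet{larsen2022optimal}. A bound of the form $O((d+\log(1/\delta))\log m/m)$, which you propose to feed into bagging, would not suffice, since at $m=\Theta(d+\log(1/\delta))$ it does not give a constant. The form $(d\log(m/d)+\log(1/\delta))/m$ would.

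Finally, your Step 2 targets $\Pr_\cD[yf(x)\le\gamma/4]$. A nonzero threshold corresponds to shattering with nonzero witnesses. The witness-zero quantity $\VC_{C\gamma}$ does not evidently control this when $\cH$ lacks constant functions, whereas the paper only needs $\Pr_\cD[yf(x)\le 0]$.
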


As \emph{margins} are central to our analysis and algorithm, we now formally define margins. Given a weighted vote of weak classifiers $f = \sum_i w_i h_i$ for which $h_i  \in \{\pm 1\}^\cX$ for an input domain  $\cX$, $w_i \ge 0$ and $\sum_i w_i = 1$, the margin of the final voting classifier $\hat{f} = \sign(f)$ on an example $(x, y) \in \cX \times \{-1,1\}$ is defined as $y \cdot f(x)$, which can be thought of as a measure of confidence in the prediction. Classic algorithms like AdaBoost have been shown to produce voting classifiers with $\Omega(\gamma)$ margins on all training samples after sufficiently many iterations~\citep{boostingbook}. Large margins may in turn be used to prove generalization, see e.g.~\citep{SFBL98,breiman1999prediction,gz13,DBLP:conf/colt/HogsgaardL25,DBLP:journals/corr/abs-2511-20407}. Large margin voting classifiers and a set of samples $S$ being $\gamma$-realizable are intimately connected through von-Neumann's minimax principle. In particular, if $S$ is $\gamma$-realizable by $\cH$, then there exists a voting classifier over weak classifiers from $\cH$ with margins $\gamma$ on all samples in $S$. 

Theorem~\ref{thm:algo} strongly motivates the study of the $\gamma$-VC dimension and how large it is for various concept classes.
Moreover, as mentioned above, the algorithm \textsc{BaggedAdaBoost} (
see~\cref{alg:bagged}) is extremely efficient.
It samples $O(\ln(n/\delta))$ independent subsamples $S_t$ from $S$.
Each $S_t$ consists of $n$ independent samples with replacement from $S$.
It then runs the classic AdaBoost algorithm on each $S_t$ until all margins have reached $\Omega(\gamma)$ (which happens after $O(\gamma^{-2} \log n)$ calls to the weak learner~\citep{boostingbook}) and combines the resulting voting classifiers via a majority vote.

\begin{algorithm}
  \DontPrintSemicolon
  \KwIn{Training set $S = ((x_1, y_1), \dotsc, (x_n, y_n))$.
  Weak learner $\cW$.
  Failure probability $0 < \delta < 1$.
  }
  \KwResult{Hypothesis $g : \cX \to \{-1, 1\}$.}
    
    \For{$t = 1, \dotsc, O(\ln(n/\delta))$}{
        Let $S_t$ be a set of $n$ independent samples with replacement from $S$.
    
        Run AdaBoost on $S_t$ with $\cW$ until all margins are $\Omega(\gamma)$.
        Let $f_t : \cX \to \{-1, 1\}$ be the resulting voting classifier.
    }

    $\hat{f} \gets \sign(\sum_t f_t)$.

    \Return{$\hat{f}$}

  \caption{\textsc{BaggedAdaBoost}($S, \cW, \delta$)}\label{alg:bagged}
\end{algorithm}

\paragraph{VC-Dimension and $\gamma$-VC Dimension.}
A fundamental goal is to understand the relationship between the classic VC dimension and the $\gamma$-VC dimension.
Here~\citet{alon2023boosting} proved the following upper bound\footnote{Their paper states the bound without the last $\log(1/\gamma)$ factor, but inspecting their proof \citep[Page 34]{alon2023boosting}, the actual dependency has the additional $\log(1/\gamma)$ factor.} for all concept classes
\[
    \VC_\gamma(\cH)
    = O\left(
        \frac{\VC(\cH)}{\gamma^2}\log\frac{\VC(\cH)}{\gamma} \log \frac{1}{\gamma}
    \right)
    ,
\]
They also showed that this is nearly tight for some classes.
More precisely, for every $s\in \bbN$, there is a hypothesis class $\cH$ with $\VC(\cH) = O(s\log(1/\gamma))$ such that
\[
\VC_\gamma(\cH)
    = \Omega\left(
        \frac{s}{\gamma^2}
    \right)
    = \Omega\left(
        \frac{\VC(\cH)}{\gamma^2 \log\frac{1}{\gamma}}
    \right)
    ,
\]
Our next contribution is to remove the logarithmic factors from both the upper and lower bounds.

\begin{theorem}
    \label{thm:tightVCVCgamma}
    Let $\cH \subseteq \{\pm 1\}^{\cX}$ be a hypothesis class.
    Then, for every $\gamma \in (0, 1]$,
    \[
        \VC_\gamma(\cH)
        = O\left(\frac{\VC(\cH)}{\gamma^2}\right)
        .
    \]
    Moreover, for every $d = \Omega(\log(1/\gamma))$, there is a hypothesis class $\cH$ with $\VC(\cH) \le d$ such that
    \[
        \VC_\gamma(\cH)
        = \Omega\left(\frac{d}{\gamma^2}\right)
        .
    \]
\end{theorem}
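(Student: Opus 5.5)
We treat the two parts of Theorem~\ref{thm:tightVCVCgamma} separately.

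\emph{Upper bound.} Let $P=\{x_1,\dots,x_m\}$ be $\gamma$-interpolated and set $d=\VC(\cH)$. The plan is a counting argument that compares the number of labelings of $P$ (which is $2^m$) with the number of labelings that sparse majority votes over $\cH$ can produce on $P$.

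First, fix a labeling $c$. By von Neumann's minimax principle, $\gamma$-realizability gives a convex combination $f=\sum_i w_i h_i$ with $c(x)f(x)\ge\gamma$ for all $x\in P$.

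Naive sparsification would sample $k=O(\log m/\gamma^2)$ hypotheses from $w$ and apply a Chernoff bound with a union bound over all $m$ points. Combined with Sauer--Shelah, this yields at most $(em/d)^{dk}$ labelings, hence $m\lesssim d\log(m)\log(m/d)/\gamma^2$. That is exactly the logarithmic loss we want to avoid.

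To remove the logs, I would avoid the union bound over points. Sample $k=C/\gamma^2$ hypotheses; a single point is then misclassified with probability at most $e^{-\Omega(C)}$, say $1/10$. So by averaging there is a choice of $k$ hypotheses whose majority agrees with $c$ on at least $0.9m$ points of $P$.

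The number of distinct majority-of-$k$ functions restricted to $P$ is at most $(em/d)^{dk}$. Each such function is within Hamming distance $0.1m$ of at most $2^{H(0.1)m}$ labelings. Therefore
\[
2^m\le (em/d)^{dk}\,2^{H(0.1)m},
\]
which gives $m=O\bigl(dk\log(m/d)\bigr)=O\bigl(d\log(m/d)/\gamma^2\bigr)$. This still carries one logarithm.

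\emph{Main obstacle.} Killing the remaining $\log(m/d)$ factor is the hard step. I would try to replace Sauer--Shelah counting on all of $P$ by a chaining or packing argument. Haussler's packing lemma says $\cH|_P$ has at most $(C'/\epsilon)^{O(d)}$ functions that are $\epsilon$-separated in normalized Hamming distance. Then approximate $f$ at multiple scales: round each sampled $h_j$ to an $\epsilon$-net element, which changes the average of $k$ functions by roughly $\epsilon$ on average over points. With $\epsilon$ a constant multiple of $\gamma$, a Markov argument shows that most points keep margin $\gamma/2$. The count becomes $(C'/\gamma)^{O(dk)}$, which still has a $\log(1/\gamma)$. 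To drop it I would use a multiscale (Dudley-type) decomposition, $f\approx\sum_\ell(\pi_\ell f-\pi_{\ell-1}f)$, where level $\ell$ uses net scale $2^{-\ell}$ and only $k_\ell\approx 2^{-\ell}/\gamma^2$ samples. The increments have small variance, so the total entropy $\sum_\ell d\,k_\ell\log(2^\ell)$ converges to $O(d/\gamma^2)$. Making this rigorous, with the agreement threshold surviving the errors accumulated across scales, is the delicate part.

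\emph{Lower bound.} Take $\cX=[N]$ with $N=c\,d/\gamma^2$ and let $\cH$ be a random class of $2^{d}$ functions (more precisely, enough functions to still have VC dimension at most $d$). Equivalently, take $\cH$ to consist of random $\pm1$ functions on $\cX$. The goal is to show that every labeling $c$ and every distribution $Q$ on $\cX$ admit some $h$ with correlation at least $\gamma$.

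Fix $Q$ and $c$. For random $h$, the correlation $\sum_x Q(x)c(x)h(x)$ is subgaussian with variance $\|Q\|_2^2\ge 1/N$. So it exceeds $\gamma$ with probability about $\exp(-\gamma^2N/2)$ when $Q$ is uniform, and this is the worst case.

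We then need a union bound over a net of $(Q,c)$ pairs. Here $Q\cdot c$ ranges over the $\ell_1$ ball in $\bbR^N$, and a $\gamma/2$-net in the dual sense has size $\exp(O(N\log(1/\gamma)))$. We therefore need $|\cH|\exp(-O(\gamma^2N))\gg N\log(1/\gamma)$, i.e.\ $|\cH|=\exp(O(c))\,N\log(1/\gamma)$. Then $\VC(\cH)\le\log_2|\cH|=O(\log(d/\gamma))$, which is not yet of order $d$.

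To scale to $d$, take a disjoint union of $d/\log(1/\gamma)$ copies on disjoint domains, each of size $\log(1/\gamma)/\gamma^2$. Combining blocks requires care, since the adversary's $Q$ can concentrate on one block; in that case the single block supplies correlation $\gamma$. I would define hypotheses as products across blocks of block-hypotheses and argue via the minimax principle that the required weak learner exists. The resulting VC dimension is additive, $\approx (d/\log(1/\gamma))\cdot\log(1/\gamma)=d$, and the interpolated size is $d/\gamma^2$.
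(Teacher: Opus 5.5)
Your lower bound is essentially sound, but the upper bound has a genuine gap, and it sits exactly at the step the theorem is about. What you actually prove is $m=O(d\log(m/d)/\gamma^2)=O(d\log(1/\gamma)/\gamma^2)$. Removing the last logarithm is only sketched, and the sketch does not close with the schedule you propose.

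With $k_\ell\approx 2^{-\ell}/\gamma^2$ samples of increments $\pi_\ell h-\pi_{\ell-1}h$, which take values in $\{-2,0,2\}$ on an $O(2^{-\ell})$ fraction of $P$, the point-averaged variance of the level-$\ell$ estimate is $O(2^{-\ell}/k_\ell)=O(\gamma^2)$ at \emph{every} level. Over the $\Theta(\log(1/\gamma))$ levels you can therefore only guarantee an error of order $\gamma\sqrt{\log(1/\gamma)}$, which exceeds the margin. You would need a summable budget, e.g.\ $k_\ell\approx 2^{-\ell/2}/\gamma^2$; the entropy $\sum_\ell d\,\ell\,2^{-\ell/2}/\gamma^2$ is still $O(d/\gamma^2)$. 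You would then still have to redo the Markov and Hamming-ball counting, which amounts to re-deriving Dudley's entropy integral by hand.

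The missing idea is that you need neither minimax nor sparsification, only the uniform distribution on $P$. Under it, $\gamma$-interpolation says every $\sigma\in\{\pm1\}^n$ has some $h\in\cH$ with $\frac1n\sum_i\sigma_ih(x_i)\ge\gamma$. So the empirical Rademacher complexity of $\cH$ on $P$ is at least $\gamma$. The classical chaining bound (Dudley's integral plus Haussler's packing lemma) bounds it by $O(\sqrt{d/n})$ with no logarithm, giving $n=O(d/\gamma^2)$. That is the paper's entire proof.

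On the lower bound, your block construction is valid, and easier than you fear. On disjoint blocks the product class $\cH_1\times\cdots\times\cH_B$ has exactly additive VC dimension. For $Q=\sum_b q_bQ_b$, picking a good $h_b$ in each block gives correlation $\sum_b q_b\corr_{Q_b}(h_b)\ge\gamma$, so no minimax is needed.

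The blocks are, however, superfluous, because your single-block arithmetic is off. With $N=cd/\gamma^2$ one has $e^{O(\gamma^2N)}=e^{O(cd)}$, not $e^{O(c)}$. Hence $\log|\cH|=O(cd+\log(d/\gamma))\le d$ once $d\ge C\log(1/\gamma)$, and the single random class already suffices. Your claimed $\VC(\cH)=O(\log(d/\gamma))$ alongside $\VC_\gamma(\cH)\ge cd/\gamma^2$ would in fact contradict the upper bound.

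Also, ``uniform $Q$ is the worst case'' needs a lower-tail estimate $\Pr_h[\langle a,h\rangle\ge\gamma]\ge e^{-O(\gamma^2N)}$ for all $\|a\|_1=1$. Subgaussianity gives only the upper tail, so an anti-concentration bound for weighted Rademacher sums (e.g.\ Montgomery-Smith's) is required.

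The paper bypasses all of this. It applies Lemma~3 of \citet{gronlund2019margin} with $k=u=\Theta(d/\gamma^2)$ and failure probability $2^{-(u+1)}$, and union-bounds over all $2^u$ labelings. It then reads off $\VC(\cH)\le\log N=O(\log(1/\gamma)+d)$.
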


A natural question is whether the lower bound on $\VC(\cH)$ in terms of $\gamma$ in the lower bound above is necessary.
Perhaps surprisingly it is, the requirement cannot be simply dropped.
\citet{alon2023boosting} showed that for any base hypothesis class $\cH$ with $\VC(\cH) = d$, the following alternative upper bound on $\VC_\gamma(\cH)$ also holds
\[
\VC_\gamma(\cH) = O_d\left(\left(\frac{1}{\gamma}\right)^{2-\frac{2}{d+1}}\right),
\]
where $O_d(\cdot )$ hides factors depending only on $d$.
For every fixed $d$, the bound above implies $\VC_\gamma(\cH) = o(\gamma^{-2})$. Consequently, the lower bound $\Omega(d/\gamma^2)$ cannot hold for all $d$ and $\gamma$.

In light of Theorem~\ref{thm:tightVCVCgamma}, the fastest possible growth of $\VC_\gamma(\cH)$ as a function of $d$ and $\gamma$ is, up to constants, $d/\gamma^2$.
Combined with Theorem~\ref{thm:algo}, this recovers the VC-based worst-case sample complexity bound for weak-to-strong learning from~\eqref{eq:vcworstcase}.
However, the point of introducing $\VC_\gamma$ is that it can be much smaller for specific weak-learner classes.
This leads to a more refined question: what is $\VC_\gamma(\cH)$ for the simple base classes often used in boosting?

\paragraph{Expressivity of Simple Concept Classes.}
We first study the prototypical class of decision stumps in $\bbR^d$.
\citet{alon2023boosting} proved
\[
    \Omega(1/\gamma)
    \le \VC_\gamma(\DS_d)
    \le O(d/\gamma)
    ,
\]
where $\DS_d$ denotes the class of decision stumps in $\bbR^d$.
These bounds determine the dependence on $\gamma$, but leave open the exact dependence on the ambient dimension $d$.
For the standard VC dimension, the dependency is $\VC(\DS_d) = \Theta(\log d)$~\citep{shalev2014understanding}, so one might expect that $\VC_\gamma(\DS_d)$ has only a mild dependency on $d$.

We show that the behavior is instead very different.
For $d \le 1/\gamma^2$, it holds up to polylogarithmic factors, that the $\gamma$-VC dimension of decision stumps grows as $\sqrt{d}/\gamma$.
This gives a surprising example where $\VC_\gamma$ has an exponential dependency on the $\VC$ dimension of the base class, but only an inverse linear dependency on the edge $\gamma$ instead of the standard inverse quadratic dependency in Theorem~\ref{thm:tightVCVCgamma}.

\begin{theorem}
    \label{thm:stumps-upper}
    For every integer $d \ge 1$ and every $\gamma \in (0, 1]$,
    \[
        \VC_\gamma(\DS_d)
        = O\left(\min\left\{
            \frac{\sqrt{d}}{\gamma}
            \log\!\left(\frac d\gamma\right)
        , \frac{\log d}{\gamma^2}\right\}\right)
        .
    \]
\end{theorem}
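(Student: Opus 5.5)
The second term of the minimum comes directly from Theorem~\ref{thm:tightVCVCgamma}. The class $\DS_d$ has $\VC(\DS_d)=O(\log d)$, so $\VC_\gamma(\DS_d)=O(\log d/\gamma^2)$. This term is also the smaller one whenever $d\gtrsim 1/\gamma^2$. So the real work is the bound $O(\frac{\sqrt d}{\gamma}\log\frac d\gamma)$, and I would prove it by a counting (sign-pattern compression) argument.

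Let $P=\{x_1,\dots,x_n\}$ be $\gamma$-interpolated by $\DS_d$. By von Neumann's minimax principle, for every labeling $c$ of $P$ there is a convex combination $f=\sum_j w_j h_j$ of stumps with $c(x)f(x)\ge\gamma$ for all $x\in P$. Group the stumps by coordinate: $f=\sum_{i=1}^d g_i$, where $g_i(x)$ depends only on the rank of $x_i$ among the $n$ values of coordinate $i$. Each $g_i$ is a step function in that rank with total variation at most $2W_i$, where $W_i$ is the stump weight on coordinate $i$ and $\sum_i W_i=1$.

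Fix a grid width $\eta$. I replace each $g_i$ by a randomized rounding $\tilde g_i$ to the grid $\eta\bbZ$, coupled monotonically along the ranks, for instance by using one shared uniform offset per coordinate. The rounding should satisfy three properties:
\begin{itemize}
\item $\bbE[\tilde g_i(x)]=g_i(x)$ and $|\tilde g_i-g_i|\le\eta$;
\item the errors are independent across coordinates;
\item $\tilde g_i$ has at most $O(W_i/\eta+1)$ jumps.
\end{itemize}
For a fixed point $x$, the total error $\sum_i(\tilde g_i(x)-g_i(x))$ is a sum of $d$ independent bounded zero-mean terms. By Hoeffding's inequality it has magnitude at most $\gamma/2$ except with probability $\exp(-\Omega(\gamma^2/(d\eta^2)))$. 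I take $\eta=c\gamma/\sqrt d$ for a small constant $c$, so this failure probability is at most $0.01$. In expectation, $\tilde f=\sum_i\tilde g_i$ then has the correct sign on at least $0.99n$ points of $P$. Fix an outcome achieving this that also keeps the jump count near its expectation, say $O(1/\eta+d)$ jumps in total.

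Such an $\tilde f$ restricted to $P$ is determined by a small description: the jump positions (a coordinate and a rank, so $nd$ choices per jump), the jump sizes (multiples of $\eta$, polynomially many choices), and the initial values. This gives at most $(nd/\gamma)^{O(\sqrt d/\gamma+d)}$ possible functions. The last step is a Hamming-ball cover. Every labeling of $P$ lies within Hamming distance $0.01n$ of the sign pattern of some $\tilde f$, hence
\[
2^n\le (nd/\gamma)^{O(\sqrt d/\gamma+d)}\cdot 2^{H(0.01)n}.
\]
In the regime $d\le 1/\gamma^2$ we have $d\le\sqrt d/\gamma$, so the $d$ term is absorbed. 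Solving then gives $n=O(\frac{\sqrt d}{\gamma}\log\frac{nd}{\gamma})$, and hence $n=O(\frac{\sqrt d}{\gamma}\log\frac d\gamma)$. When $d>1/\gamma^2$, the other term of the minimum is smaller anyway.

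The main obstacle is the rounding step. I need a scheme in which three things hold at once: each coordinate's error is unbiased and bounded by $\eta$, the errors are independent across coordinates, and the number of jumps stays $O(W_i/\eta+1)$ per coordinate. The difficulty is that $g_i$ is a difference of monotone parts, not a monotone function. My first attempt would be to write $g_i=u_i-v_i$ with $u_i,v_i$ monotone and round each with its own shared random offset. A monotone function with range $R$ rounded this way produces at most $R/\eta+1$ jumps, and this choice keeps the errors per coordinate independent. Working with a $99\%$ agreement instead of agreement on all of $P$ is what avoids an extra $\sqrt{\log n}$ factor.
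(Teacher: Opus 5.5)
Your proposal is correct. It proves the $\frac{\sqrt d}{\gamma}\log\frac d\gamma$ term by a genuinely different route from the paper; the $\frac{\log d}{\gamma^2}$ term and the regime split are handled exactly as in the paper.

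The paper never passes to a voting classifier. It fixes the uniform distribution on $P$ and notes that stump ranges are prefixes or suffixes of the $d$ coordinate orderings. It then combines the permutation-discrepancy bound of \cref{lem:ds-permutation-discrepancy} with \cref{lem:signed-discrepancy-bound} to get a single labeling on which every stump has correlation $O(\sqrt d\log n/n)$, and concludes from $\gamma n=O(\sqrt d\log n)$.

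You instead use minimax to obtain a margin-$\gamma$ vote for every labeling. You split it into $d$ one-dimensional step functions of total variation at most $2W_i$, round the two monotone Jordan parts with independent uniform offsets on the grid $\eta=c\gamma/\sqrt d$, and count. The details check out:
\begin{itemize}
\item For fixed $x$, the coordinate errors are independent, mean zero, and bounded by $2\eta$. It is $2\eta$ rather than $\eta$ because of the $u_i-v_i$ split, which only changes $c$.
\item The jump bound $4W_i/\eta+2$ per coordinate holds for every outcome of the rounding. So you only need an outcome with at most $0.01n$ sign errors, which exists by linearity of expectation; no concentration of the jump count is needed.
\end{itemize}

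\emph{What your route buys.} It is self-contained: minimax, Hoeffding, and a Hamming-ball cover replace the permutation-discrepancy theorem with an elementary argument. It follows the same sparsify-and-count template the paper uses in \cref{lem:weak_to_strong-size_hypotheses}. It also makes transparent where $\sqrt d$ comes from, namely cancellation of independent per-coordinate rounding errors. Deterministic rounding at scale $\gamma/d$ would give only $O(\frac d\gamma\log\frac{nd}{\gamma})$. Your $99\%$ agreement step avoids the $\sqrt{\log n}$ that a union bound over $P$ would cost.

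\emph{What the paper's route buys.}
\begin{itemize}
\item It is shorter given the cited result.
\item It holds for all $d$ without your additive $d$ term.
\item Its obstruction uses only the uniform distribution. So it bounds even the weaker notion in which only the uniform distribution must be weakly realizable, whereas you need every distribution through minimax.
\item It runs verbatim for axis-parallel rectangles via Tusn\'ady-type bounds. Your rounding exploits the coordinate-wise additive structure of stump votes and does not directly extend there.
\end{itemize}
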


\begin{theorem}
    \label{thm:stumps-lower}
    For every integer $d \ge 1$ and every $\gamma \in (0, 1]$:
    \begin{enumerate}
        \item If $d \le 1/\gamma^2$, then
        \[
            \VC_\gamma(\DS_d)
            = \Omega\left(\frac{\sqrt{d}}{\gamma}\right)
            .
        \]
        \item If $d \ge 1/\gamma^2$, then
        \[
            \VC_\gamma(\DS_d)
            = \Omega\left(\max\left\{\frac{\log(\gamma d)}{\gamma}, \frac{1}{\gamma^2}\right\}\right)
            .
        \]
    \end{enumerate}
\end{theorem}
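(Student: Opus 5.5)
We first reduce the statement to a combinatorial property of $d$ orderings, then give constructions. For a fixed labeling $y$ and distribution $\cQ$ on points $x_1,\dots,x_n$, put $w_i = \cQ(x_i)\,y_i$, so that $\|w\|_1 = 1$.

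\textbf{Reduction.} A stump on coordinate $j$ with threshold $\theta$ and sign $s$ has correlation $s\,(W - 2P^{(j)}_\theta)$. Here $W=\sum_i w_i$, and $P^{(j)}_\theta$ is the sum of $w_i$ over the points lying below $\theta$ in coordinate $j$. Taking $\theta=\pm\infty$ realizes the constant hypotheses, so $|W|\ge\gamma$ already suffices. Otherwise it is enough to find a coordinate $j$ and an interval $I$ of the $j$-th ordering with $|w(I)| \ge 4\gamma$. Such an interval is a difference of two prefixes, so one of these prefixes satisfies $|P^{(j)}_\theta|\ge 2\gamma$, and then $|W-2P^{(j)}_\theta| \ge 3\gamma$.

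Hence a set is $\Omega(\gamma)$-interpolated as soon as the $d$ coordinate orderings satisfy the following \emph{interval-discrepancy property}: for every signed vector $w$ with $\|w\|_1=1$, some interval in some ordering carries $|w(I)|=\Omega(\gamma)$. Rescaling $\gamma$ by a constant then gives the theorem. Singletons are intervals, so we may also assume $\|w\|_\infty = O(\gamma)$. In other words, the only hard vectors are those that spread their mass and mix their signs.

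\textbf{Case $d\le 1/\gamma^2$.} Take $n = c\sqrt d/\gamma$ points whose $d$ coordinate orderings are chosen by a structured design. Write $m=\sqrt d$ and group the points into $m$ blocks of size $L = c/\gamma$. With about $m^2=d$ coordinates we can give every pair of blocks $(a,a')$ its own coordinate. In that coordinate the two blocks appear consecutively, and their points are interleaved or ordered in several patterns. The argument then proceeds in three steps.
\begin{enumerate}
\item Restrict to a single block of mass $\mu_a$. The one-dimensional argument behind $\VC_\gamma(\DS_1)=\Omega(1/\gamma)$ yields an interval of weight $\Omega(\mu_a/(\gamma L))$ scaled appropriately, unless the mass is cancelling within the block.
\item Run an averaging or pigeonhole argument over the $\binom m2$ pair-coordinates. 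It should produce a contiguous run that collects same-signed mass of total size $\Omega(\gamma)$ from about $\sqrt d$ sources.
\item As an alternative to the explicit design, take $d$ independent uniformly random orderings. Prove the property for a fixed $w$ with high probability, using the fact that prefix sums under a random permutation behave like a bridge random walk with fluctuations of order $\|w\|_2$. Then take a union bound over an $\ell_1$-net of $w$'s.
\end{enumerate}
The main obstacle is the adversarial $w$ with $\|w\|_2 \approx 1/\sqrt n$. For such $w$ a single random ordering only gives prefix sums of order $1/\sqrt n = \sqrt{\gamma}\,d^{-1/4}$, which is too small. The $d$ orderings must therefore be exploited jointly, and a net of size $e^{O(n\log n)}$ loses logarithmic factors. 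I expect this to require the structured (block-pair) design rather than pure randomness. This step is where I am least certain the plan succeeds as sketched.

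\textbf{Case $d\ge 1/\gamma^2$.} The bound $\Omega(1/\gamma^2)$ follows from monotonicity. $\DS_{d'}$ embeds into $\DS_d$ by ignoring coordinates, so $\VC_\gamma(\DS_d) \ge \VC_\gamma(\DS_{d'})$ with $d' = \lfloor 1/\gamma^2\rfloor$, and part 1 gives $\Omega(\sqrt{d'}/\gamma)=\Omega(1/\gamma^2)$.

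For $\Omega(\log(\gamma d)/\gamma)$ we use a product construction. Take the one-dimensional $\gamma$-interpolated configuration of $1/\gamma$ well-separated positions along coordinate $1$. Replace each position by a cluster of $k=\Theta(\log(\gamma d))$ points. The remaining coordinates are split into $1/\gamma$ groups of $\gamma d$ coordinates each. Group $p$ is used to shatter cluster $p$ by stumps in the usual way, since $\log_2(\gamma d)$ points are shattered by stumps on $\gamma d$ coordinates, while all points outside cluster $p$ are placed at extreme values in those coordinates. Given $\cQ$ and a labeling, we argue as follows. If some cluster carries mass $\Omega(\gamma)$, a shattering stump labels that cluster perfectly. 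Outside the cluster it behaves like a constant, so mixing it with the constant hypothesis $\pm1$ yields an edge. If instead every cluster is light, each cluster can be treated as a single weighted point. We then apply the one-dimensional argument along coordinate $1$, accepting that clusters may cancel internally. The delicate point is precisely this mixed regime, where cancellation inside clusters has to be handled.
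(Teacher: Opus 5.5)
\paragraph{Part 1 has a genuine gap.} Your reduction to the interval property is fine. But for part 1 ($d\le 1/\gamma^2$) you never fix a configuration or prove the property for one, and the block-pair design cannot work in its natural form. Suppose the blocks outside the distinguished pair stay contiguous, in a fixed block order and with fixed internal orders. Then every interval of every coordinate is a signed sum of at most four intervals of the default order and of the pair ordering. The pair orderings can be packed, along an edge-colouring of $K_{\sqrt d}$, into $O(\sqrt d)$ permutations of all $n$ points. The Spencer--Srinivasan--Tetali bound then yields a colouring whose interval sums are all $O(d^{1/4}\log n)$. Under the uniform distribution this means stump correlations $O(\gamma d^{-1/4}\log n)$, which is $o(\gamma)$ for large $d$.

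The missing idea is to get the $\sqrt d$ gain from $\ell_1$ versus $\ell_2$ \emph{inside} a block of size $d$, using an orthogonal sign matrix, rather than by aggregating mass across $\sqrt d$ blocks. The paper takes $k=\lfloor 1/(2\gamma\sqrt d)\rfloor$ blocks of $d$ points each, where block $i$ has coordinate matrix $i\cdot H_d$ for a Hadamard matrix $H_d$. Some block has $\|w_i\|_1\ge 1/k\ge 2\gamma\sqrt d$, so
\[
\|H_d w_i\|_\infty \ge \frac{\|H_d w_i\|_2}{\sqrt d} = \|w_i\|_2 \ge \frac{\|w_i\|_1}{\sqrt d} \ge 2\gamma ,
\]
and some row $r$ has $|\langle w_i, r\rangle|\ge 2\gamma$. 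Block $i$ takes only the values $\pm i$, so $\{p : p_r = i\}$ and $\{p : p_r=-i\}$ are intervals of the order in coordinate $r$, and their weights differ by $\langle w_i,r\rangle$. This is exactly your interval property. The paper phrases it as an average of four stumps with thresholds $\pm i\pm\frac12$ that realizes $\frac12 r$ padded by zeros.

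Incidentally, $1/\sqrt n=\Theta(\sqrt\gamma\,d^{-1/4})$ is not too small: it is $\Omega(\gamma)$ throughout the regime $d\le 1/\gamma^2$. The real obstacle for random orderings is the union bound over the net. Finally, the $\Omega(1/\gamma^2)$ term of part 2 comes from part 1 by monotonicity, as in the paper, so it inherits this gap.

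\paragraph{The logarithmic term.} For the $\Omega(\log(\gamma d)/\gamma)$ term your cluster construction is essentially the paper's, but the argument needs two corrections.

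First, the ``light'' regime never occurs. With about $1/\gamma$ clusters, pigeonhole always gives a cluster of $\ell_1$-mass at least $\gamma$ (the paper uses $1/(2\gamma)$ clusters and gets $2\gamma$). So coordinate $1$ and the one-dimensional argument are superfluous.

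Second, ``mixing with a constant'' fails when the stump's outside value is forced. Suppose the cluster has mass $\frac12$ with all weights of sign $s$, and the remaining points have total weight $-s/2$. Then the stump matching the cluster with value $s$ outside is the constant $s$, and it and both constants have correlation $0$. You need each cluster pattern to be available with either outside sign. For example, realize every pattern by an upper stump with the outside points at $+M$ and use negations; this gives correlation at least the cluster mass. Alternatively, as the paper does, put all other points at $0$, between the cluster values $\pm1$, and average four stumps with thresholds $\pm\frac12,\pm\frac32$. This gives $\frac12$ times the sign pattern on the cluster and $0$ elsewhere.
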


Our last results concern axis-parallel rectangles.
Here we let $\cR_d$ denote the base hypothesis class containing two hypotheses $h_R^+$ and $h_R^-$ for every axis-parallel rectangle $R$ in $\bbR^d$.
The hypothesis $h_R^+$ has $h_R^+(x) = 1$ for every $x  \in R$ and $h_R^+(x) = -1$ for all other points.
The hypothesis $h_R^-$ is simply $-h_R^+$ (\citet{alon2023boosting} always assume that a base class $\cH$ is symmetric so that $-h \in \cH$ for every $h\in \cH$).
For the standard VC dimension, we have $\VC(\cR_d) = \Theta(d)$. Notice also that a decision tree classifier may be thought of as a collection of axis-parallel rectangles, one for each leaf of the decision tree.

Similarly to decision stumps, we show that the worst case $\VC_\gamma(\cR_d) = O(\VC(\cR_d)/\gamma^2)$ bound from Theorem~\ref{thm:tightVCVCgamma} is very loose in its dependency on $\gamma$.
Concretely, we show the following upper bound.

\begin{theorem}
    \label{thm:intro-rectangles-upper}
    For every fixed integer $d \ge 2$ and every $\gamma \in (0, 1]$,
    \[
        \VC_\gamma(\cR_d)
        = O_d\left(
            \frac{\log^{d-1/2}(1/\gamma)}{\gamma}
        \right)
        .
    \]
\end{theorem}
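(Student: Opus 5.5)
The plan is to reduce $\gamma$-interpolation to a discrepancy-type statement. I use the minimax principle stated earlier: a labeled set $S=\{(x_i,y_i)\}_{i=1}^n$ is $\gamma$-realizable by $\cR_d$ iff there is a distribution $\mu$ on $\cR_d$ such that $y_i\,\bbE_{h\sim\mu}[h(x_i)]\ge\gamma$ for every $i$. Equivalently, failure of $\gamma$-realizability means there is a distribution $\cQ$ (weights $q_i\ge 0$, $\sum_i q_i=1$) with $\sum_i q_i y_i h(x_i)<\gamma$ for every $h\in\cR_d$.

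Since $h_R^{\pm}=\pm(2\cdot\mathbf 1_R-1)$, the quantity $\sum_i q_iy_ih(x_i)$ equals $\pm\bigl(2\sum_{x_i\in R}q_iy_i-\sum_i q_iy_i\bigr)$. So it suffices to show the following: for any $n$ points $P\subset\bbR^d$ with $n\ge C_d\log^{d-1/2}(1/\gamma)/\gamma$, there exist a labeling $y$ and weights $q$ such that $\bigl|\sum_i q_iy_i\bigr|$ is small (say $\le\gamma/2$) and $\bigl|\sum_{x_i\in R}q_iy_i\bigr|<\gamma/4$ for every rectangle $R$. Then $P$ is not $\gamma$-interpolated.

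I will take $q$ uniform, $q_i=1/n$. The task becomes finding a coloring $y\in\{\pm1\}^n$ of $P$ whose combinatorial discrepancy with respect to axis-parallel boxes is at most $c\gamma n$. The whole space is itself a box, so this also controls the total sum.

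The key input is the known upper bound on discrepancy of axis-parallel boxes: any $n$ points in $\bbR^d$ admit a coloring of box discrepancy $O_d(\log^{d-1/2} n)$. This is the Nikolov/Bansal-type bound via $\gamma_2$-norm / Banaszczyk. If one prefers a self-contained route, the dyadic decomposition gives the same bound. Each box in rank space decomposes into $O(\log^d n)$ canonical dyadic boxes. Each point lies in $O(\log^d n)$ canonical boxes, and Banaszczyk's theorem applied to the incidence vectors (norm $O(\log^{d/2}n)$) gives $O(\sqrt{\log n}\,\log^{d/2} n)\cdot$ a per-box bound. A cleaner accounting, using a $\gamma_2$ factorization in which the dyadic structure factors as a product of one-dimensional interval matrices, each with $\gamma_2=O(\log n)$, yields $\gamma_2=O(\log^{d} n)$. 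Nikolov's $\operatorname{disc}\le O(\gamma_2\sqrt{\log m})$ bound then gives something like $\log^{d+1/2}$. I expect matching the exact exponent $d-1/2$ to be the main obstacle, so I will rely on the sharpest known result for boxes, $\log^{d-1/2}$ up to the $\gamma_2$-based tightening, and cite it.

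Given that bound $D(n)\le K_d\log^{d-1/2}n$, $P$ fails to be interpolated whenever $K_d\log^{d-1/2}n< c\gamma n$. Solving, $n=C_d\log^{d-1/2}(1/\gamma)/\gamma$ suffices, since $\log n=\Theta_d(\log(1/\gamma))$ at this scale. Hence every $\gamma$-interpolated set has size below this threshold, which proves the theorem.
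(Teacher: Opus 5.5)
Your proposal is correct and is essentially the paper's proof. You restrict to the uniform distribution and write the correlation of $\pm h_R$ as $\pm\frac{1}{n}\bigl(2\sum_{x_i\in R}y_i-\sum_i y_i\bigr)$, which is at most three times the box discrepancy over $n$ because $P$ itself is cut out by a box. You then invoke the $O_d(\log^{d-1/2}n)$ bound for Tusn\'ady's problem, which the paper obtains by feeding Nikolov's anchored-box result into the Matou\v{s}ek--Nikolov--Talwar analysis, and solve $\gamma n=O_d(\log^{d-1/2}n)$ for $n$.

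Your ``self-contained'' dyadic/$\gamma_2$ digression only gives $\log^{d+1/2}n$ and is not needed. As in the paper, the sharp exponent $d-1/2$ comes from citing the known bound.
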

We complement this with a lower bound showing a similar $\gamma^{-1} \polylog(1/\gamma)$ growth
\begin{theorem}

    \label{thm:intro-rectangles-lower}
    For every fixed integer $d \ge 2$ and every $\gamma \in (0, 1]$,
    \[
        \VC_\gamma(\cR_d)
        = \Omega_d\left(
            \frac{\log^{(d-1)/2}(1/\gamma)}{\gamma}
        \right)
        .
    \]
\end{theorem}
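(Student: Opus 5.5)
The plan is to use the dual (minimax) form of $\gamma$-interpolation and to exhibit, for a suitable point set $P$ of size $n$, a margin-$\gamma$ vote for every labeling. By von Neumann's minimax principle (as recalled after Theorem~\ref{thm:algo}), a labeled sample $(x,c(x))_{x\in P}$ is $\gamma$-realizable by $\cR_d$ iff there is a signed combination $\sum_j \alpha_j h_{R_j}^{+}$ with $\sum_j|\alpha_j|\le 1$ such that $c(x)\sum_j\alpha_j h_{R_j}^+(x)\ge\gamma$ for all $x\in P$. Negative coefficients are allowed because $h_R^-=-h_R^+$. Since $h_R^+=2\cdot\mathbf 1_R-1$ and the all-ones/empty rectangles are available, it suffices to write each labeling as
\[
c(x)=\sum_j \beta_j \mathbf 1_{R_j}(x)\quad (x\in P)
\]
with total mass $\|\beta\|_1\le M$. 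This gives margin $\Omega(1/M)$. The goal therefore becomes: find $n$ points in $[0,1]^d$ such that every $\pm1$ labeling has a box-decomposition of mass $M=O_d\bigl(n/\log^{(d-1)/2}n\bigr)$. Setting $n\approx \gamma^{-1}\log^{(d-1)/2}(1/\gamma)$ then yields the theorem, since $\log n=\Theta(\log(1/\gamma))$.

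For $P$ we take a $d$-dimensional digital net (Hammersley/van der Corput type) with $n=2^k$ points, so that every dyadic box of volume $2^{-k}$ contains exactly one point. Following Roth's orthogonal function method, we index dyadic boxes by shapes $r=(r_1,\dots,r_d)$ with $|r|=k$; there are $\Theta_d(k^{d-1})$ shapes. For each shape, the boxes of that shape partition $[0,1]^d$ and each contains one point of $P$. We then dualize Roth's argument. Take a fixed "trivial" representation of $c$ as a combination of boxes of a single shape, which costs mass $n$. Average it over all $\Theta(k^{d-1})$ shapes, and exploit cancellation through the Haar (hyperbolic cross) structure: each shape-$r$ representation is rewritten via the Haar functions $\sum_{R\text{ of shape } r}\pm h_R$, where each Haar function is a signed sum of $2^d$ sub-boxes. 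Combining across shapes with random or optimized signs and applying orthogonality (a square-function / Khintchine-type bound) is meant to reduce the total mass by a factor $\sqrt{k^{d-1}}$. This mirrors how Roth's $L^2$ bound gives $\log^{(d-1)/2}$. A cleaner route is to go through LP duality in the primal form. We need that for every $w\in\bbR^P$ with $\|w\|_1=1$, some box has $|w(R)|\gtrsim \log^{(d-1)/2}(n)/n$ (the term $w(P)$ is handled with the empty or full box). We prove this by testing $w$ against the Roth function $F=\sum_r f_r$: we compute $\sum_{R}\pm w(R)$ over the dyadic boxes in two ways, upper bounding it by (number of boxes)$\cdot\max_R|w(R)|$ and lower bounding it using the one-point-per-box property.

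The main obstacle is this last step. Roth's method controls discrepancy against Lebesgue measure, and here there is no continuous term: $w$ is an arbitrary real weighting supported on $P$. So the lower-bound pairing must use the rigid combinatorial structure of the digital net, namely one point per dyadic box of every shape, to force $\sum_r\langle f_r,w\rangle$ to be large for every $w$, and not only for $\pm1/n$ weightings. I would first try the case $d=2$ with $P$ the van der Corput set, checking that the mass-$n/\sqrt{k}$ decomposition works explicitly. In that case the shapes are $(j,k-j)$ and the Haar functions are products of 1D Haar functions, so the computation is concrete. I would then extend to general $d$ by tensorization, which should produce the $k^{(d-1)/2}$ gain; all constants depend only on $d$, consistent with the $\Omega_d$ in the statement.
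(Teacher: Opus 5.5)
Your reduction matches the paper's. After the full-box trick, it suffices to show that every weight vector $w$ on $P$ with $\|w\|_1=1$ admits an axis-parallel box $R$ with $|w(R)|=\Omega_d(\log^{(d-1)/2}n/n)$. Roth's method is also the paper's tool.

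The gap is that this lower bound, which is the entire content of the theorem, is left open, and the mechanism you sketch for it cannot produce the gain. Suppose you pair against the $\Theta(n|K|)$ dyadic boxes (or their sub-boxes) and bound $\sum_R\pm w(R)$ by (number of boxes)$\cdot\max_R|w(R)|$. Each point lies in only one box of each shape, so the left side is at most $|K|\,\|w\|_1=|K|$, and the best you can conclude is $\max_R|w(R)|=\Omega(1/n)$. Roth's gain needs two ingredients you do not have:
\begin{itemize}
\item The test boxes must be anchored boxes $[0,q]$ with the corner $q$ ranging over a fine grid. This is the paper's matrix $B$, i.e.\ the function $q\mapsto\sum_{p\le q}w_p$. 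Then a Haar function in the variable $q$, on a cell containing a single point $p$, pairs to $w_p$ times a product of tent functions evaluated at $p$.
\item The upper bound must come from Cauchy--Schwarz against $\|\sum_k g_k\|_2$. Orthogonality across shapes makes this $\sqrt{N|K|}$ instead of the trivial $|K|\sqrt N$.
\end{itemize}

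The obstacle you name (no Lebesgue term, arbitrary signs of $w$) is removed by choosing the Haar coefficients adaptively. Orthogonality between different shapes holds cell by cell, so each cell's coefficient may depend on $w$. The paper first replaces $w$ by $-w$ if needed so that $\sum_{w_p>0}w_p\ge 1/2$, and then keeps only cells whose unique point has $w_p>0$; putting coefficient $\sign(w_p)$ on every single-point cell would work equally well. Either way, every such cell contributes $|w_p|$ times the tent product, with no cancellation.

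The tent product vanishes on the cell boundary, so ``one point per dyadic box'' is not the property you need. What you need is that each point lies alone in the central sub-cell of its cell for a constant fraction of the $|K|$ shapes. A digital net does not guarantee this: the van der Corput set contains the origin, which is a corner of every dyadic box. Moreover, for $d\ge 4$ and $k\ge 2$ no set of $2^k$ points has exactly one point in every dyadic box of volume $2^{-k}$, since base-$2$ $(0,k,d)$-nets exist only for $d\le 3$. So the point set you describe does not exist in general.

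The paper builds the right point set probabilistically (Lemma~\ref{lem:rectangles-good-point-set}). With it, $\sum_k g_k^\top Bw=\Omega_d(N|K|/m)$, and Cauchy--Schwarz gives $\|Bw\|_\infty=\Omega_d(\sqrt{|K|}/m)$.
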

In summary, our results show that $\VC_\gamma$ is the right information-theoretic parameter for weak-to-strong learning, give a boosting-style algorithm matching this parameter up to a constant-factor change in the margin, sharpen the general upper bound on $\VC_\gamma$, and characterize its behavior for two natural families of simple geometric weak learners.
The results also reveal a striking phenomenon: classes such as decision stumps have logarithmic ordinary VC dimension in terms of the ambient dimension, yet their boosted expressivity can grow polynomially with the ambient dimension.

\subsection{Overview of Techniques}
The algorithm, \textsc{BaggedAdaBoost}, originates in~\citep{baggingLarsen} and is a combination of the heuristic bagging~\citep{breiman1996bagging} (bootstrap aggregation) with AdaBoost. Our new analysis follows the weak-to-strong learning framework of~\citet{larsen2022optimal}. The crucial insight from that work is as follows: Given a base hypothesis class $\cH$ and margin $\gamma$, if one can show that given $m_{\delta}$ i.i.d.\ samples $S$ from a distribution $\cD$, it holds with probability $1-\delta$ that \emph{every} voting classifier $h$ with margins at least $\gamma$ on all samples in $S$, satisfies $\er_\cD(h) \leq 1/200$ (or more generally a sufficiently small constant), then one can use bagging, or another subsampling technique by~\citet{hanneke2016optimal}, to obtain a (strong) classifier $f$ that with probability $1-\delta$ has $\er_\cD(f) = O(m_\delta/n)$ when given $n$ samples. The framework by~\citet{larsen2022optimal} proceeds to show that $m_\delta = O(\VC(\cH)/\gamma^2 + \log(1/\delta))$ suffices, resulting in the tight VC-based upper bound in~\eqref{eq:vcworstcase}. Our main technical contribution in the analysis of \textsc{BaggedAdaBoost} is to show that $m_\delta = O(\VC_{C\gamma}(\cH) + \log(1/\delta))$ samples suffice for the same $\er_{\cD}(h) \leq 1/200$ guarantee. The proof of this claim is based on a random sparsification of voting classifiers with margins $\gamma$ on a set of $m_\delta$ samples. Invoking both concentration and anti-concentration results shows that most samples have margins at least $C \gamma$ after sparsification. Disregarding the small fraction of low margin samples allows us to use the $C\gamma$-VC dimension together with a Sauer-Shelah type counting argument to prove generalization.

We next establish the general upper bound
\[
    \VC_\gamma(\cH) = O(\VC(\cH)/\gamma^2).
\]
The proof is short and uses empirical Rademacher complexity.
If
$S = \{x_1, \dotsc, x_n\}$ is $\gamma$-interpolated, then for every Rademacher
sequence $\sigma\in\{\pm1\}^n$ there is an $h\in\cH$ with
\[
    \frac{1}{n}\sum_{i=1}^n \sigma_i h(x_i) \ge \gamma .
\]
Taking expectations over $\sigma$ gives a lower bound $\gamma$ on the empirical
Rademacher complexity of $\cH$ on $S$.
Standard bounds for VC classes~\citep{dudley1967sizes, haussler1995sphere} upper bound this quantity by $O(\sqrt{\VC(\cH)/n})$, which gives
$n = O(\VC(\cH)/\gamma^2)$.
This removes the logarithmic factor in the
upper bound of~\citet{alon2023boosting}.
The
matching lower bound for any $d = \Omega(\log(1/\gamma))$ is obtained by using the probabilistic construction of~\citet{gronlund2019margin}.
Their construction produces small classes whose convex hull contains a
large-margin realization of any fixed labeling with high probability.
A union bound over all labelings
then yields a class of VC dimension at most $d$ that $\gamma$-interpolates
$\Omega(d/\gamma^2)$ points.

Let us next give a brief overview of the main ideas behind our bounds on $\VC_\gamma(\cH)$ for various concrete base hypothesis sets $\cH$. A recurring
theme is that $\gamma$-interpolation can be viewed as a discrepancy-type
condition.
Indeed, if a point set $P$ is $\gamma$-interpolated by a base class
$\cH$, then for every coloring $c\colon P\to\{\pm1\}$ and every distribution
on $P$, some hypothesis in $\cH$ has correlation at least $\gamma$ with
$c$.
In particular, applying this to the uniform distribution shows that every
coloring must have a large imbalance on at least one range induced by
$\cH$.
Thus upper bounds on the relevant discrepancy of the induced set
systems translate directly into upper bounds on $\VC_\gamma(\cH)$.
This connection was observed by~\citet{alon2023boosting}.
In particular, we use a variant of combinatorial discrepancy called \emph{signed discrepancy}, which is more suitable for our analysis.
For a system $\cF$ of subsets of a finite set $P$, the signed discrepancy of $\cF$ is
\[
\discpm(\cF) = \min_{c\colon P \to \{\pm 1\}} \max_{F \in \cF} \abs*{\sum_{p \in F} c(p) - \sum_{p \notin F} c(p)},
\]
which can be shown is at most three times the discrepancy when $\cF$ contains $P$.

For decision stumps, the general VC-based bound is far from tight.
The key upper-bound step is to relate $\VC_\gamma(\DS_d)$ to the signed discrepancy of the set system induced by decision stumps on an arbitrary point set $P\subseteq\bbR^d$. 
Each coordinate induces a permutation of the points obtained by sorting them along that coordinate, and each decision stump produces a prefix or suffix of one of these permutations.
Thus, the set system induced by decision stumps is a subfamily of the set system consisting of all intervals in these permutations.
By the discrepancy bound for intervals in permutations due to~\citet{spencer2001discrepancy}, the signed discrepancy of the set system induced by decision stumps is $O(\sqrt{d}\log n)$. 
Since $\gamma$-interpolation would force signed discrepancy at least
$\gamma n$, this yields
\[
    \VC_\gamma(\DS_d)
        = O\!\left(\frac{\sqrt{d}}{\gamma}
              \log\!\left(\frac d\gamma\right)\right).
\]

The lower bounds for decision stumps are constructive.
For the
$\Omega(\log(\gamma d)/\gamma)$ bound, we partition the point set into
$1/(2\gamma)$ blocks and use coordinates to realize all sign patterns inside
one block while being zero elsewhere.
For the stronger
$\Omega(\sqrt{d}/\gamma)$ lower bound in the range $d \le 1/\gamma^2$, we use
Hadamard matrices.
The points are arranged in blocks, each block being a
scaled copy of a Hadamard matrix.
For any signed measure induced by a labeling
and a distribution, one block has large $\ell_1$ mass; orthogonality of the
Hadamard rows then gives a row with correlation $\Omega(\gamma)$.
Finally,
that row is simulated, up to a constant factor, by a convex combination of a
constant number of decision stumps.
This proves that every labeling is
weakly realizable with edge $\gamma$.

For axis-parallel rectangles, the upper bound is based on 
discrepancy.
The relevant problem is precisely the
high-dimensional version of Tusn\'ady's problem: bounding the discrepancy of
axis-parallel boxes on arbitrary point sets.
We use the best known upper bound~\citep{nikolov2017tighter,matouvsek2020factorization} for this problem.
As for decision stumps,
we pass from ordinary discrepancy to signed discrepancy, which gives signed discrepancy
$O_d(\log^{d-1/2} n)$, and hence
\[
    \VC_\gamma(\cR_d)
        = O_d\!\left(\frac{\log^{d-1/2}(1/\gamma)}{\gamma}\right).
\]

The lower bound for rectangles adapts Roth's classical method for geometric
discrepancy~\citep{roth1954irregularities}, in a discretized form~\citep{chazelle1997lower}.
We construct a point set for which, for every signed measure on the points,
some anchored box captures mass
$
\Omega_d\!\left(\frac{\log^{(d-1)/2} n}{n}\right).
$
Combining this with $\gamma$-interpolation, it follows that
\[
    \VC_\gamma(\cR_d)
        = \Omega_d\!\left(\frac{\log^{(d-1)/2}(1/\gamma)}{\gamma}\right).
\]
In the following technical sections, we have reorganized our results in comparison to this technical overview. In particular, we have moved the analysis of \textsc{BaggedAdaBoost} last as it relies most heavily on prior work. In Section~\ref{sec:generalVC}, we prove our results relating $\gamma$-VC dimension and VC dimension. In Section~\ref{sec:discrepancy} we give the necessary background on discrepancy needed for our results on decision stumps in Section~\ref{sec:decision_stumps} and axis-parallel rectangles in Section~\ref{sec:axisparallel}. Finally, we analyse \textsc{BaggedAdaBoost} in Section~\ref{sec:baggedada}.

\section{General Bounds on the \texorpdfstring{$\gamma$}{gamma}-VC dimension}
\label{sec:generalVC}
In this section, we provide the general upper and lower bounds on the $\gamma$-VC dimension of a hypothesis class in terms of its VC dimension.

\begin{theorem}[\cref{thm:tightVCVCgamma} restated, upper bound]\
    \label{thm:general-upper_bound}
    Let $\gamma \in (0, 1]$ and $\cH \subseteq \{\pm 1\}^\cX$ be a hypothesis class with VC dimension $d \ge 1$, then $\VC_\gamma(\cH) = O(d/\gamma^2)$.
\end{theorem}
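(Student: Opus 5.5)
Let $S=\{x_1,\dotsc,x_n\}\subseteq\cX$ be a set that is $\gamma$-interpolated by $\cH$. The plan is to lower bound the empirical Rademacher complexity of $\cH$ on $S$ by $\gamma$, and upper bound it by $O(\sqrt{d/n})$ using the standard chaining bound for VC classes. Comparing the two gives $n=O(d/\gamma^2)$.

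\textbf{Lower bound.} Fix any sign vector $\sigma\in\{\pm1\}^n$ and consider the concept $c$ with $c(x_i)=\sigma_i$. Apply $\gamma$-realizability of the sample $((x_i,\sigma_i))_i$ to the uniform distribution $\cQ$ on the sample. This yields some $h_\sigma\in\cH$ with $\frac1n\sum_i\sigma_i h_\sigma(x_i)\ge\gamma$. Taking $\sigma$ uniformly random then gives
\[
\widehat{\mathfrak R}_S(\cH)\coloneqq\bbE_\sigma\Bigl[\sup_{h\in\cH}\frac1n\sum_{i=1}^n\sigma_i h(x_i)\Bigr]\ge\gamma .
\]
Note that only the uniform distribution is used here; the full strength of $\gamma$-realizability is not needed.

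\textbf{Upper bound.} Restrict $\cH$ to $S$, giving a finite set $\cH|_S\subseteq\{\pm1\}^n$ of VC dimension at most $d$. Equip it with the normalized $L_2$ metric $\rho(u,v)=(\frac1n\sum_i(u_i-v_i)^2)^{1/2}$. Haussler's packing bound~\citep{haussler1995sphere} gives covering numbers $N(\cH|_S,\rho,\eps)\le (C/\eps)^{O(d)}$ for all $\eps\in(0,2]$, with no dependence on $n$. Dudley's entropy integral~\citep{dudley1967sizes} then gives
\[
\widehat{\mathfrak R}_S(\cH)\le \frac{C'}{\sqrt n}\int_0^{2}\sqrt{\log N(\cH|_S,\rho,\eps)}\,d\eps\le \frac{C'}{\sqrt n}\int_0^2\sqrt{O(d)\log(C/\eps)}\,d\eps=O\Bigl(\sqrt{d/n}\Bigr),
\]
since $\int_0^2\sqrt{\log(C/\eps)}\,d\eps$ is a constant. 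Here the chaining is applied to the subgaussian process $h\mapsto\frac1{\sqrt n}\sum_i\sigma_ih(x_i)$, whose increments have subgaussian parameter $\rho(h,h')$.

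\textbf{Conclusion.} Combining the two bounds gives $\gamma\le O(\sqrt{d/n})$, i.e.\ $n\le O(d/\gamma^2)$. This holds for every $\gamma$-interpolated set, so $\VC_\gamma(\cH)=O(d/\gamma^2)$.

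The main obstacle is the upper bound, specifically avoiding a $\sqrt{\log n}$ factor. The naive route through Massart's finite-class lemma and Sauer--Shelah only gives $O(\sqrt{d\log(n/d)/n})$, which reintroduces exactly the logarithm we want to remove. This is why the $n$-independent covering bound of Haussler combined with Dudley chaining is essential. I would cite this combination as a standard fact rather than reprove it; if a self-contained argument were needed, I would present the chaining sum over dyadic scales $\eps_k=2^{-k}$ directly.
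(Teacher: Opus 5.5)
Your proposal is correct and follows the same approach as the paper. Both arguments lower bound the empirical Rademacher complexity on a $\gamma$-interpolated set by $\gamma$ using the uniform distribution, and then apply the Dudley--Haussler $O(\sqrt{d/n})$ bound for VC classes; your remark that Massart's lemma combined with Sauer--Shelah would reintroduce a $\sqrt{\log(n/d)}$ factor correctly identifies why the chaining bound is the essential ingredient.
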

\begin{proof}[Proof of \cref{thm:general-upper_bound}]
    Let $P = \{x_1, \dotsc, x_n\} \subseteq \cX$ be a point set of size $n$ that is $\gamma$-interpolated by $\cH$.
    By definition of $\gamma$-interpolation, and considering the uniform distribution over $P$, we have that
    \[
    \forall c\colon P \to \{\pm 1\}, \exists h \in \cH, \frac{1}{n}\sum_{i=1}^n h(x_i)c(x_i) \ge \gamma
    .
    \]
    Let $\bm\sigma = (\sigma_1, \dotsc, \sigma_n)$ be a random Rademacher vector of size $n$, i.e., $\sigma_i$ is uniformly distributed over $\{\pm 1\}$ for each $i \in [n]$.
    Then for any realization of $\bm\sigma$, if we let $c(x_i) = \sigma_i$ for each $i \in [n]$, we have that there exists $h \in \cH$ such that $(1/n)\sum_{i=1}^n \sigma_i h(x_i) \ge \gamma$.
    Therefore, by taking expectation over $\bm\sigma$, we have
    \[
    \bbE_{\bm\sigma\in \{\pm 1\}^n}\left[\sup_{h \in \cH} \frac{1}{n}\sum_{i=1}^n \sigma_i h(x_i)\right]
    \ge \gamma
    ,
    \]
    where the left-hand side is exactly the empirical Rademacher complexity of $\cH$ on $P$.
    Classical results on empirical Rademacher complexity of VC-classes~\citep{dudley1967sizes,haussler1995sphere} imply that the left-hand side is at most $O(\sqrt{d/n})$ (see \citet[Section 8.3]{Vershynin_2026} for a textbook treatment).
    Therefore, we have $\gamma = O(\sqrt{d/n})$, which implies $n = O(d/\gamma^2)$.

\end{proof}

\begin{theorem}[\cref{thm:tightVCVCgamma} restated, lower bound]
    \label{thm:general-lower_bound}
    Let $\gamma \in (0, 1/40)$.
    For every integer $d = \Omega(\log(1/\gamma))$, there exists a hypothesis class $\cH \subseteq \{\pm 1\}^\cX$ of VC dimension at most $d$ such that $\VC_\gamma(\cH) = \Omega(d/\gamma^2)$.
\end{theorem}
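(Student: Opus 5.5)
We prove the lower bound with a probabilistic construction in the spirit of \citet{gronlund2019margin}. Fix $n = c\,d/\gamma^2$ for a small constant $c>0$, and let the domain be $\cX = [n]$. We draw a class $\cH$ of $N = 2^{d}$ hypotheses, where each $h\in\cH$ is an independent uniformly random vector in $\{\pm1\}^n$. We also add the negations, giving a symmetric class of size at most $2^{d+1}$. Since $\abs{\cH}\le 2^{d+1}$, we get $\VC(\cH)\le \log_2\abs{\cH} \le d+1$ deterministically; the additive $1$ is absorbed by rescaling $d$. It remains to show that, with positive probability, $[n]$ is $\gamma$-interpolated by $\cH$, i.e.\ every labeling $c\in\{\pm1\}^n$ is $\gamma$-realizable.

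By von Neumann's minimax principle, which the paper already invoked when connecting realizability to margins, a labeled sample is $\gamma$-realizable if and only if some convex combination $f=\sum_h w_h h$ of hypotheses in $\cH$ satisfies $c_i f(i)\ge\gamma$ for all $i\in[n]$. So for each fixed labeling $c$ we need such an $f$ to exist except with probability well below $2^{-n}$. A union bound over the $2^n$ labelings then finishes the proof.

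\textbf{Fixed labeling.} Fix $c$. By symmetry (multiply coordinates by $c_i$) we may take $c=\mathbf 1$. We then need a convex combination of random sign vectors with every coordinate at least $\gamma$. The natural choice is to keep the hypotheses whose correlation with $\mathbf 1$ is unusually high, i.e.\ $\sum_i h(i)\ge t\sqrt n$, and to average them. For $t\approx 4\gamma\sqrt n=\Theta(\sqrt{d})$, anti-concentration of the binomial gives each $h$ probability at least $e^{-O(t^2)}=2^{-d/2}$ of being selected, as long as $c$ is small enough. Hence among $2^d$ hypotheses about $k\approx 2^{d/2}$ are selected, with failure probability $\exp(-\Omega(2^{d/2}))$.

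Conditioned on being selected, each coordinate of $h$ has mean about $t/\sqrt n\approx 4\gamma$. The average of $k$ such vectors has every coordinate at least $\gamma$ unless some coordinate deviates by $3\gamma$. This happens with probability about $n\exp(-\Omega(k\gamma^2))$, using Hoeffding for sampling without replacement, or an exchangeability argument, since conditioned on the row sum the coordinates are negatively associated.

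\textbf{Main obstacle.} The delicate step is making the per-labeling failure probability at most $2^{-2n}=2^{-2cd/\gamma^2}$. The bound $\exp(-\Omega(2^{d/2}\gamma^2))$ is small enough exactly when $2^{d/2}\gamma^2\gtrsim d/\gamma^2$, i.e.\ when $d\ge C\log(1/\gamma)$. This is precisely where the hypothesis $d=\Omega(\log(1/\gamma))$ enters. The condition $\gamma<1/40$ keeps $t\le\sqrt n$ in the regime where binomial anti-concentration applies. If the simple averaging scheme loses constants, the fallback is to invoke the construction of \citet{gronlund2019margin} directly: it shows that with high probability a random class of this size contains a $\gamma$-margin convex combination for any fixed labeling, with failure probability $\exp(-\Omega(n))$ with a suitably large constant. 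The union bound over $2^n$ labelings then gives a class $\cH$ with $\VC(\cH)\le d$ and $\VC_\gamma(\cH)\ge n=\Omega(d/\gamma^2)$.
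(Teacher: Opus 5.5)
Your proof is correct, but it takes a more self-contained route than the paper. The paper builds nothing by hand. It invokes Lemma~3 of \citet{gronlund2019margin} as a black box with $k=u=\lceil C_0 d/\gamma^2\rceil$ and $\delta=2^{-(u+1)}$, union-bounds over the $2^u$ labelings, and bounds $\VC(\cH)\le\log N$ using the class size $N=\Theta(\gamma^{-2}\log u\log(\gamma^{-2}\delta^{-1}\log u)e^{\Theta(\gamma^2k)})$. In that argument, the hypothesis $d=\Omega(\log(1/\gamma))$ is what absorbs the $\mathrm{poly}(1/\gamma)$ prefactor of $N$.

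You instead reprove the $k=u$ case of that lemma directly. You draw $2^d$ uniform sign vectors, keep those with $\sum_i h(i)\ge 4\gamma n$, and average them. Each vector is kept with probability $e^{-O(\gamma^2 n)}\ge 2^{-d/2}$ by a reverse-Chernoff bound. This makes transparent that $\log(1/\gamma)$ enters because you need $k\gamma^2\gtrsim n$, i.e.\ roughly $2^{d/2}\gtrsim d/\gamma^4$ surviving hypotheses, for the Hoeffding failure probability to beat the $2^n$ union bound. The paper's route is shorter, but it imports machinery designed for the harder regime of few negative labels (the source of the $e^{\Theta(\gamma^2k)}$ factor), which is unnecessary here. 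Your route is elementary and gives explicit constants. Your stated fallback is exactly the paper's proof. Both arguments rely on the same union bound over labelings and on the easy direction of minimax: a $\gamma$-margin convex combination implies $\gamma$-realizability.

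Two small points. First, the concentration you need is across the selected hypotheses for a fixed coordinate. Conditioned on the selected index set, these are exactly i.i.d.: each $h_j$ is independent, its selection depends only on $h_j$ itself, and each coordinate has conditional mean at least $4\gamma$ by exchangeability. So plain Hoeffding applies, and no without-replacement or negative-association argument is needed. Second, adding negations is unnecessary. You can keep $|\cH|\le 2^d$ and avoid the $+1$ in the VC bound.
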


It suffices to consider $\gamma \in (0, 1/40)$ here. For $\gamma \in [1/40, 1]$, we trivially have $\VC_\gamma(\cH) \geq \VC(\cH)\geq \frac{\VC(\cH)}{1600\gamma^2}$.
To prove this theorem, we first quote the following result from~\citet{gronlund2019margin}, where $\conv(\cH)$ denotes the set of all weighted votes $f = \sum_i w_i h_i$ with $w_i \geq 0$ and $\sum_i w_i = 1$ with each $h_i \in \cH$.

\begin{lemma}[{\citealp[Lemma 3]{gronlund2019margin}}]\label{lem:margin-lower-bound}
    For every $\gamma \in (0, 1/40)$, $\delta \in (0, 1)$ and integers $k \le u$, there exists a distribution $\cQ = \cQ(k, u, \gamma, \delta)$ over a collection of hypothesis classes, each of which is a subset of $\{\pm 1\}^{\cX}$, where $\cX$ is a finite set of size $u$, such that
    \begin{itemize}
        \item For all $\cH \in \supp(\cQ)$, we have $|\cH| = N$;
        \item For every labeling $c \in \{\pm 1\}^\cX$, if no more than $k$ points $x \in \cX$ satisfy $c(x) = -1$, then
        \[
        \bbP_{\cH \sim \cQ}\left[\exists f \in \conv(\cH) : \forall x \in \cX, f(x)c(x) \ge \gamma\right]
        \ge 1 - \delta
        ,
        \]
        where $N = \Theta(\gamma^{-2}\log u \log(\gamma^{-2}\delta^{-1}\log u)e^{\Theta(\gamma^2k)})$.
    \end{itemize}
\end{lemma}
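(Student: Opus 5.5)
The plan is a direct probabilistic construction. $\cQ$ draws $N$ hypotheses i.i.d. from one fixed \emph{biased} random-function distribution on $\cX$. A union bound over the $u$ points, together with an averaging argument, then shows that every labeling with few negative points is realized with margin $\gamma$. Concretely, a random hypothesis $h$ is drawn by setting $h(x)=+1$ independently for each $x \in \cX$ with probability $\tfrac12 + 2\gamma$, and $h(x)=-1$ otherwise. The class $\cH$ consists of $N$ independent copies. Every $\cH$ in the support then has $|\cH| = N$; if coincidences matter, we pad or rename so that the size is exactly $N$.

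Fix a labeling $c$ and let $T = \{x : c(x) = -1\}$, so $|T| \le k$. Call $h$ \emph{good for $T$} if $h(x) = -1$ on at least a $(\tfrac12 + 3\gamma)$-fraction of $T$; if $T = \emptyset$, every $h$ is good. Conditioning on goodness does not affect the coordinates outside $T$, which remain independent with $\bbE[h(x)] = 4\gamma$. Inside $T$, the conditional law is exchangeable, so each $x \in T$ has $\bbE[-h(x) \mid \text{good}] \ge 6\gamma$. Thus, conditional on goodness, $\bbE[c(x)h(x)] \ge 4\gamma$ for every $x \in \cX$. The key quantitative step is an \emph{anti-concentration} bound for the binomial: a coin biased by $2\gamma$ towards $+1$ shows a $-1$-majority of excess $3\gamma$ over $|T|$ trials with probability at least $p_k := e^{-\Theta(\gamma^2 k)}$. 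When $|T| \lesssim \gamma^{-2}$ this probability is a constant; otherwise we use standard lower bounds on binomial tails, e.g., via Stirling or the reverse Chernoff bound.

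Next, let $M = \Theta(\gamma^{-2}\log(u/\delta'))$, and let $f$ be the uniform average of the first $M$ good hypotheses in $\cH$. For each $x$, the values $c(x)h_j(x)$ over these $M$ hypotheses are i.i.d. conditional on goodness and have mean at least $4\gamma$. Hoeffding's inequality gives $c(x)f(x) \ge \gamma$ except with probability $e^{-\Omega(\gamma^2 M)} \le \delta'/u$, and a union bound over $\cX$ gives $f(x)c(x) \ge \gamma$ for all $x$ with probability $1-\delta'$. It remains to ensure that $\cH$ contains at least $M$ good hypotheses. The number of good hypotheses is $\mathrm{Bin}(N, \ge p_k)$, so by a Chernoff bound, $N = \Theta(M p_k^{-1} + p_k^{-1}\log(1/\delta))$ suffices for this to fail with probability at most $\delta/2$. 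Taking $\delta' = \delta/2$ yields the claim with $N = \Theta(\gamma^{-2}\log(u/\delta)\, e^{\Theta(\gamma^2 k)})$.

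The main obstacle I expect is matching the exact form of $N$ stated in \cref{lem:margin-lower-bound}, which is $\gamma^{-2}\log u \cdot \log(\gamma^{-2}\delta^{-1}\log u)\, e^{\Theta(\gamma^2 k)}$. This separates the $\log u$ factor from the $\delta$-dependence differently from the naive bound above. To recover it, I would first take $M = \Theta(\gamma^{-2}\log u)$ so that the union bound over points costs only $1/\mathrm{poly}(u)$. I would then boost the confidence for the existence of $M$ good hypotheses by requiring $N p_k \gtrsim M \log(M/\delta)$; this produces exactly the $\log(\gamma^{-2}\delta^{-1}\log u)$ factor. The other delicate point is the anti-concentration estimate $p_k \ge e^{-\Theta(\gamma^2 k)}$ uniformly over all $|T| \le k$, including small $|T|$. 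This needs $\gamma < 1/40$ so that the biases $2\gamma, 3\gamma$ stay bounded away from $\tfrac12$; I would handle it by splitting into the cases $|T| \le \gamma^{-2}$ (constant probability) and $|T| > \gamma^{-2}$ (a Gaussian-type lower tail).
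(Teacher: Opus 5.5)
The paper does not prove this lemma; it imports it from \citet{gronlund2019margin}. Your argument therefore has to stand on its own, and its core does. The ingredients are:
\begin{itemize}
\item a fixed product distribution biased toward $+1$, which does not depend on $c$, as the quantifier order requires;
\item conditioning on goodness on the negative set $T$, with exchangeability giving conditional correlation at least $4\gamma$ at every point;
\item a Chernoff bound for having $M$ good hypotheses among $N$;
\item Hoeffding plus a union bound over the $u$ points, valid because, given the goodness indicators, the good hypotheses are i.i.d.\ from the conditional law.
\end{itemize}
Together these give a correct proof with $N=\Theta(\gamma^{-2}\log(u/\delta)\,e^{\Theta(\gamma^2k)})$.

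One caution on the anti-concentration step. Use a genuine tail estimate: a reverse Chernoff bound, or a sum of about $\sqrt{|T|}$ consecutive point masses above the threshold. A single Stirling term loses a factor $1/\sqrt{|T|}$. That is a factor $\gamma$ when $|T|\approx\gamma^{-2}$, which is not of the form $e^{-\Theta(\gamma^2k)}$.

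Your last paragraph, however, is both unnecessary and wrong.

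It is unnecessary because your bound is already at most a constant times the stated one. For $u\ge 2$, the condition $\gamma<1/40$ gives $\log(\gamma^{-2}\delta^{-1}\log u)\ge\max\{7,\log(1/\delta)\}$. Hence $\log u\cdot\log(\gamma^{-2}\delta^{-1}\log u)$ dominates $\log u+\log(1/\delta)$ up to a constant factor. Adding hypotheses to every $\cH$ in the support can only enlarge $\conv(\cH)$, so you may pad up to the stated $N$ (as an indexed family if necessary). In the paper's application only $\log|\cH|$ is used, as an upper bound on the VC dimension, so a smaller class is in fact better.

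It is wrong because with $M=\Theta(\gamma^{-2}\log u)$ the Hoeffding-plus-union-bound step only certifies failure probability $u^{-\Theta(1)}$. That is not at most $\delta$ for small $\delta$, and the application takes $\delta=2^{-u-1}$. Demanding $Np_k=\Omega(M\log(M/\delta))$ only makes the existence of $M$ good hypotheses more likely; it does nothing for the margin event of their average.

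If you did want a product of logarithms, the mechanism is repetition. Split $\cH$ into $\lceil\log_2(1/\delta)\rceil$ independent blocks of size $\Theta(M/p_k)$. Each block, with probability at least $1/2$, produces an average with margin $\gamma$ on all of $\cX$. Since every block's average lies in $\conv(\cH)$, the block failure probabilities multiply.
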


\begin{proof}[Proof of \cref{thm:general-lower_bound}]
    Let $d \geq C_2 \log(1/\gamma)$ be an integer for $C_2>0$ a sufficiently large constant, let $k = u = \lceil C_0 \cdot d/\gamma^2 \rceil$ and $\delta = 1/2^{u+1}$, where $C_0 > 0$ is a universal constant.
    Let $\cQ$ be the distribution over hypothesis classes given by \cref{lem:margin-lower-bound} with parameters $(k, u, \gamma, \delta)$.
    Since $k = u$, we have that the second property in \cref{lem:margin-lower-bound} works for every labeling, which implies that for every labeling $c \in \{\pm 1\}^\cX$,
    \[
    \bbP_{\cH \sim \cQ}\left[\nexists f \in \conv(\cH) : \forall x \in \cX, f(x)c(x) \ge \gamma\right]
    \le \delta
    .
    \]
    Since $|\cX| = u$ and $\delta = 1/2^{u+1}$, a union bound over all $2^u$ labelings implies that
    \begin{align*}
        &\bbP_{\cH \sim \cQ}\left[\forall c \in \{\pm 1\}^\cX : \exists f \in \conv(\cH) : \forall x \in \cX, f(x)c(x) \ge \gamma\right] \\
        &= 1 - \bbP_{\cH \sim \cQ}\left[\exists c \in \{\pm 1\}^\cX : \nexists f \in \conv(\cH) : \forall x \in \cX, f(x)c(x) \ge \gamma\right] \\
        &\ge 1 - \sum_{c \in \{\pm 1\}^\cX} \bbP_{\cH \sim \cQ}\left[\nexists f \in \conv(\cH) : \forall x \in \cX, f(x)c(x) \ge \gamma\right]\\
        &\ge 1 - 2^u \cdot \delta = \frac{1}{2}
        .
    \end{align*}
    The positive probability implies that there exists $\cH \in \supp(\cQ)$ such that for every labeling $c \in \{\pm 1\}^\cX$, there exists $f \in \conv(\cH)$ such that $\forall x \in \cX, f(x)c(x) \ge \gamma$.

    Let $\cH$ be a hypothesis class that satisfies this property.
    Fix any labeling $c \in \{\pm 1\}^\cX$, and let $f$ be the corresponding function in $\conv(\cH)$ such that $\forall x \in \cX, f(x)c(x) \ge \gamma$.
    Then for any distribution $\cD$ over $\cX$, we have $\bbE_{x \sim \cD}[f(x)c(x)] \ge \gamma$.
    Since $f \in \conv(\cH)$, there must exist a hypothesis $h \in \cH$ such that $\bbE_{x \sim \cD}[h(x)c(x)] \ge \gamma$.
    Since this holds for every labeling and every distribution, we have that $\cX$ is $\gamma$-interpolated by $\cH$.
    Therefore, $\VC_\gamma(\cH) \ge |\cX| = u = \Omega(d/\gamma^2)$.

    Finally, we show that the VC dimension of $\cH$ is at most $d$.
    Since $|\cH| = N$, the VC dimension of $\cH$ is at most $\log N$.
    By the definition of $N$ and the assumption $d \geq C_2 \log(1/\gamma)$, we have
    \[
    \log N
    \le C_1 \cdot \left(\log(1/\gamma) + C_0 \cdot d\right)
    \le C_1 \cdot \left(\frac{1}{C_2} + C_0\right) \cdot d
    \le d
    ,
    \]
    where $C_1 > 0$ is a universal constant that comes from the size of $N$, and the last inequality holds by choosing suitable $C_0, C_2$.
    This completes the proof.
\end{proof}

\section{Discrepancy Background} \label{sec:discrepancy}

In this section, we introduce some techniques from discrepancy theory, which will be used to analyse the $\gamma$-VC dimension of decision stumps and axis-parallel rectangles.

For a system $\cF$ of subsets of a finite set $P$, we define \emph{discrepancy} as follows:
\begin{itemize}
    \item For any subset $F \subseteq P$, the discrepancy of $F$ with respect to a coloring $c\colon P \to \{\pm 1\}$ is defined as $\disc(F, c) \coloneqq \abs*{\sum_{p \in F} c(p)}$.
    \item The discrepancy of $\cF$ with respect to a coloring $c$ is defined as $\disc(\cF, c) \coloneqq \max_{F \in \cF} \disc(F, c)$.
    \item The discrepancy of $\cF$ is defined as $\disc(\cF) \coloneqq \min_{c\colon P \to \{\pm 1\}} \disc(\cF, c)$.
\end{itemize}

For convenience of our analysis, we also define \emph{signed discrepancy} as follows: for any subset $F \subseteq P$, the signed discrepancy of $F$ with respect to $c$ is defined as $\discpm(F, c) \coloneqq \abs*{\sum_{p \in F} c(p) - \sum_{p \notin F} c(p)}$.
$\discpm(\cF, c)$ and $\discpm(\cF)$ are defined similarly.
When $\cF$ contains the set $P$ as one of its subsets, we can bound the signed discrepancy by the discrepancy as follows:

\begin{lemma}\label{lem:signed-discrepancy-bound}
    Let $P$ be a set of $n$ points, and $\cF$ be a system of subsets of $P$, which contains the set $P$ as one of its subsets.
    Then $\discpm(\cF) \le 3 \cdot \disc(\cF)$.
\end{lemma}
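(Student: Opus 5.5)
Take $c$ to be a coloring attaining $\disc(\cF)$, so that every $F \in \cF$ satisfies $\abs*{\sum_{p \in F} c(p)} \le \disc(\cF)$. We will show that this same coloring already witnesses $\discpm(\cF, c) \le 3\disc(\cF)$. Since $\discpm(\cF)$ is a minimum over colorings, it is at most $\discpm(\cF, c)$, and the lemma follows.

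The key step is an algebraic rewriting of the signed quantity. For any $F \subseteq P$,
\[
\sum_{p \in F} c(p) - \sum_{p \notin F} c(p) = 2\sum_{p \in F} c(p) - \sum_{p \in P} c(p).
\]
Apply the triangle inequality to the right-hand side. This gives
\[
\discpm(F, c) \le 2\,\disc(F, c) + \disc(P, c).
\]

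Now use both hypotheses on $\cF$. Since $F \in \cF$, the first term is at most $2\disc(\cF)$. Since $P \in \cF$ by assumption, the second term is at most $\disc(\cF)$. So $\discpm(F, c) \le 3\disc(\cF)$ for every $F \in \cF$. Taking the maximum over $F \in \cF$ gives $\discpm(\cF, c) \le 3\disc(\cF)$, as claimed.

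There is no real obstacle here. The only point needing care is that the complement sum is controlled through the full set $P$, and this is exactly why the lemma assumes $P \in \cF$.
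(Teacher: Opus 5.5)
Your proposal is correct and is essentially the paper's proof: you fix a coloring $c$ attaining $\disc(\cF)$, rewrite the signed sum as $2\sum_{p\in F}c(p)-\sum_{p\in P}c(p)$, and apply the triangle inequality. The hypothesis $P\in\cF$ then bounds both terms, giving $\discpm(\cF)\le\discpm(\cF,c)\le 3\disc(\cF)$.
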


\begin{proof}[Proof of \cref{lem:signed-discrepancy-bound}]
    Let $c\colon P \to \{\pm 1\}$ be a coloring that attains the minimum of $\disc(\cF)$.
    Since $\cF$ contains $P$, $\disc(P, c) = \abs*{\sum_{p \in P} c(p)}$ is the discrepancy of $P$ with respect to $c$.
    By definition, we have
    \begin{align*}
        \disc(\cF) = \disc(\cF, c) = \max_{F \in \cF} \disc(F, c) \ge \disc(P, c).
    \end{align*}
    For every $F\in \cF$, we have
    \begin{align*}
        \discpm(F, c)
        &= \abs*{\sum_{p \in F} c(p) - \sum_{p \notin F} c(p)}\\
        &= \abs*{2 \cdot \sum_{p \in F} c(p) - \sum_{p \in P} c(p)}\\
        &\le 2 \cdot \abs*{\sum_{p \in F} c(p)} + \abs*{\sum_{p \in P} c(p)}\\
        &\le 2 \cdot \disc(F, c) + \disc(P, c)\\
        &\le 3 \cdot \disc(\cF, c)\\
        &= 3 \cdot \disc(\cF).
    \end{align*}
    Thus,
    \[
    \discpm(\cF)
    \le \discpm(\cF, c)
    \le 3 \cdot \disc(\cF),
    \]
    which concludes the proof.
\end{proof}

\section{Decision Stumps} \label{sec:decision_stumps}
In this section, we discuss the $\gamma$-VC dimension of decision stumps.
A decision stump is a simple classifier that makes a decision based on a single feature.
The class of $d$-dimensional decision stumps $\DS_d$ contains all concepts of the form
\[
h(\bx) = \begin{cases} 1 & \text{if } x_i \ge t \\ -1 & \text{otherwise} \end{cases}
\qquad\text{or} \qquad
h(\bx) = \begin{cases} -1 & \text{if } x_i \ge t \\ 1 & \text{otherwise} \end{cases}
\]
where $\bx = (x_1, x_2, \dotsc, x_d) \in \bbR^d$ is a $d$-dimensional vector and $t \in \bbR$ is the threshold.

In terms of the $\gamma$-VC dimension of $\DS_d$, \citet[Section 5.3]{alon2023boosting} provided an upper bound of $O(d/\gamma)$, and a lower bound of $\Omega(1/\gamma)$.
Therefore, the dependence on $\gamma$ is tight, but the dependence on $d$ remains open.
In the following, we tighten both the upper and lower bounds, using techniques from discrepancy theory.

\subsection{Upper Bound}
To establish the upper bound, we first bound the discrepancy of the set system induced by decision stumps. By \cref{lem:signed-discrepancy-bound}, this also yields a bound on the signed discrepancy.
Then, we apply the connection between $\gamma$-VC dimension and signed discrepancy to derive an upper bound on $\VC_\gamma(\DS_d)$.

In the following, given a $d$-dimensional point set $P \subseteq \bbR^d$, we denote by $\cDS_d(P)$ the set system induced by the class $\DS_d$ on $P$, i.e., $\cDS_d(P) = \{P\cap h^{-1}(1): h \in \DS_d\}$.

\begin{lemma}
    \label{lem:ds-discrepancy}
    For any $d$-dimensional point set $P$ of size $n$, the discrepancy of the set system $\cDS_d(P)$ satisfies $\disc(\cDS_d(P)) = O(\sqrt{d}\log n)$.
\end{lemma}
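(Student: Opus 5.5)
Since the overview already names the tool, the plan is to reduce to Spencer, Srinivasan and Tetali's bound on the discrepancy of intervals in permutations~\citep{spencer2001discrepancy}. It states that for any $d$ permutations $\pi_1,\dotsc,\pi_d$ of an $n$-element ground set, the set system of all intervals (contiguous blocks) of all $\pi_j$ has discrepancy $O(\sqrt{d}\log n)$. So it suffices to show that every set in $\cDS_d(P)$ is an interval of one of $d$ suitably chosen permutations of $P$. Any coloring witnessing the interval bound then witnesses the same bound for the subfamily $\cDS_d(P)$, because discrepancy is monotone under taking subfamilies.

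First I define the permutations. For each coordinate $i \in [d]$, let $\pi_i$ order $P$ by nondecreasing $i$-th coordinate, breaking ties arbitrarily but fixing the order. A stump of the first kind with coordinate $i$ and threshold $t$ cuts out $\{p \in P : p_i \ge t\}$. This set is exactly the points whose $i$-th coordinate is at least $t$, so it is a suffix of $\pi_i$ no matter how ties were broken. Ties only reorder points with equal $i$-th coordinate, and such points are either all included or all excluded. Similarly, a stump of the second kind gives $\{p : p_i < t\}$, which is a prefix of $\pi_i$. The empty set and $P$ itself are also intervals. Hence $\cDS_d(P)$ is contained in the family of intervals of $\pi_1,\dotsc,\pi_d$.

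Next I would handle two small points. If $P$ contains repeated points, I treat it as a multiset; the permutation argument is unchanged, since duplicates always lie on the same side of every threshold. For $n \le 1$ the bound is trivial, so I may assume $\log n \ge \log 2$. Applying the cited theorem then gives a coloring $c$ with $\disc(\cDS_d(P), c) = O(\sqrt d \log n)$, which is the claim.

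The main obstacle is making sure the cited result is stated for arbitrary intervals of $d$ arbitrary permutations, with the $O(\sqrt d\log n)$ form. If only prefixes are covered, that is still enough, since by the argument above every stump set is a prefix of $\pi_i$ or of its reverse. In that case I would use $2d$ permutations, which changes only the constant. If the reference instead gave the bound in a different form, for example $O(\sqrt{d}\log n)$ only for $d \le n$, I would cover $d > n$ using the trivial bound $\disc \le n \le \sqrt{d}\log n$ (for $n \ge 2$).
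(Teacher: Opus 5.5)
Your proposal is correct and follows the paper's own proof: sort $P$ by each coordinate, note that every stump set is a prefix or suffix (hence an interval) of one of these $d$ permutations, and invoke the interval-discrepancy bound of \citet{spencer2001discrepancy} together with monotonicity of discrepancy under passing to a subfamily. One small flaw, in a fallback you do not actually need: $d>n$ does not imply $n\le\sqrt{d}\log n$ (take $d=n+1$ with $n$ large), but the cited bound holds for any number of permutations, so the case $d>n$ needs no separate treatment.
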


To prove \cref{lem:ds-discrepancy}, we use the following discrepancy result on permutation families from \citet{spencer2001discrepancy}.

\begin{lemma}[{\citealp[Corollary 2.1]{spencer2001discrepancy}}]
    \label{lem:ds-permutation-discrepancy}
    Let $\pi_1,\ldots,\pi_l$ be $l$ permutations of $[n]=\{1,2,\ldots,n\}$, and let $\cI$ be the set system consisting of all intervals in these permutations, i.e., $\cI = \{\{\pi_k(i),\pi_k(i+1),\ldots,\pi_k(j)\}:k\in [l], 1\leq i\leq j\leq n\}$. 
    Then
    \[
        \disc(\cI)=O\!\left(\sqrt{l}\log n \right).
    \]
\end{lemma}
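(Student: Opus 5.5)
We plan to prove the lemma with Beck's partial coloring (entropy) method, run once per halving round. The proof reduces intervals to prefixes, decomposes each prefix dyadically, and spends an $O(\sqrt{l})$ discrepancy budget per round over $O(\log n)$ rounds.

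\emph{Step 1 (reductions).} Every interval $\{\pi_k(i),\dots,\pi_k(j)\}$ is the difference of the prefixes $\{\pi_k(1),\dots,\pi_k(j)\}$ and $\{\pi_k(1),\dots,\pi_k(i-1)\}$. Hence $\disc(\cI)\le 2\disc(\cI_{\mathrm{pre}})$ for the same coloring, where $\cI_{\mathrm{pre}}$ is the family of all prefixes of the $l$ permutations, and it suffices to bound prefix discrepancy.

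For a current set $U$ of uncolored elements with $|U|=m$, restrict each $\pi_k$ to $U$ with the induced order. Call a \emph{dyadic block} of level $s=2^j$ a set of $s$ consecutive elements of the restricted $\pi_k$ starting at a position divisible by $s$. Every prefix of the restricted order is a disjoint union of at most one block per level.

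\emph{Step 2 (one round).} We want a partial coloring $\chi\colon U\to\{-1,0,1\}$ with at least $m/2$ nonzero entries such that every dyadic block $B$ of size $s$ satisfies $|\chi(B)|\le C\sqrt{l}\,w(s)$. The weights are $w(s)=(s/l)^{-1/4}$ for $s\ge l$ and $w(s)=(l/s)^{-1/4}$ for $s<l$, so that $\sum_{s \text{ dyadic}} w(s)=O(1)$. Then every prefix of every restricted permutation gets $|\chi|\le C'\sqrt{l}$.

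To obtain $\chi$, we invoke the entropy partial coloring lemma in its form for sets with prescribed bounds $\Delta_B=\lambda_B\sqrt{|B|}$. That lemma yields such a $\chi$ provided $\sum_B g(\lambda_B)\le m/5$, where $g(\lambda)\approx e^{-\lambda^2/9}$ for $\lambda\ge 1$ and $g(\lambda)\approx \log(1/\lambda)$ for small $\lambda$.

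At level $s$ there are at most $lm/s$ blocks, each with $\lambda = C\sqrt{l/s}\,w(s)$.
\begin{itemize}
\item For $s\ge l$, write $x=s/l\ge 1$. Then $\lambda = Cx^{-3/4}$, so each block contributes $g(\lambda) = O(\log x + 1)$. The level total is $O\bigl((m/x)\log(ex)\bigr)$, which sums over dyadic $x$ to $O(m/C)$ once the threshold regime is handled. Here the constant $C$ shifts where $\lambda$ becomes small.
\item For $s<l$, write $y=l/s\ge 1$. Then $\lambda=Cy^{1/4}$, and the level total is $my\,e^{-C^2y^{1/2}/9}$, which is summable and tiny for large $C$.
\end{itemize}
Choosing $C$ a large constant makes the total at most $m/5$.

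\emph{Step 3 (iteration).} We repeat Step 2 on the still uncolored elements, for $O(\log n)$ rounds until all elements are colored. The uncolored set at least halves each round, and a final constant-size remainder is colored arbitrarily.

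An original prefix of $\pi_k$, intersected with the uncolored set $U$ of a round, is a prefix of the restricted $\pi_k$. So its discrepancy gains at most $C'\sqrt{l}$ per round, giving $O(\sqrt{l}\log n)$ in total. By Step 1 the same bound holds for $\cI$.

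The main obstacle is the entropy bookkeeping in Step 2. The naive choice of a uniform $O(\sqrt{l})$ bound per level costs $\log n$ per round and leads to $\sqrt{l}\log^2 n$. The weights must make the per-round budget summable over levels while keeping the entropy sum below $m/5$ at both extremes: very large blocks with tiny $\lambda$, and very small blocks near or below size $l$. I would first verify carefully the regime $s\approx l$, where neither asymptotic form of $g$ is accurate, adjusting $C$ or the exponent $1/4$ if needed.
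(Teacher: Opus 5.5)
\paragraph{Comparison with the paper.} The paper does not prove this lemma. It imports it as Corollary~2.1 of Spencer, Srinivasan and Tetali and uses it as a black box, so your proposal replaces a citation rather than varying an argument in the paper. Your argument is correct, and it is the standard partial-coloring proof of this bound. It applies the entropy lemma once per phase to the dyadic blocks of all restricted permutations, with level-dependent bounds summing to $O(\sqrt{l})$, and iterates over $O(\log n)$ phases. The reduction from intervals to prefixes costs a factor $2$.

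What your route buys over the citation is transparency. The only external input is the partial coloring lemma, and the $\log n$ factor comes purely from the number of phases. You also bound prefix discrepancy directly, which is exactly what the decision-stump application needs: a stump induces a prefix or a suffix (the complement of a prefix) of a coordinate order.

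\paragraph{Two adjustments.} Neither affects the conclusion.

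First, the entropy partial coloring lemma with budget $\sum_B g(\lambda_B)\le m/5$ only guarantees that at least $m/10$ elements are colored, not $m/2$. So the uncolored set shrinks by a factor $9/10$ per phase rather than halving. That still gives $O(\log n)$ phases, after which a constant-size remainder can be colored arbitrarily.

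Second, the regime you flag, $s\approx l$, is harmless, since $\lambda\approx C$ there from both sides. The only transition is near $x=s/l\approx C^{4/3}$, and the bookkeeping closes as follows.
\begin{itemize}
\item For $x\le C^{4/3}$ you have $\lambda=Cx^{-3/4}\ge 1$, so the cost per block is exponentially small.
\item For $C^{4/3}<x\le(10C)^{4/3}$, each of the at most $m/x$ blocks costs $O(1)$.
\item For larger $x$, each block costs $O(\log(x^{3/4}/C))$ with $x^{3/4}/C\ge 10$.
\end{itemize}
Over the geometrically spaced values of $x$, each of these three ranges sums to $O(mC^{-4/3})$. The levels $s<l$ contribute $O\bigl(m\sum_{j\ge 0}2^{j}e^{-C^2 2^{j/2}/9}\bigr)$. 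Hence a large constant $C$ meets the budget $m/5$.
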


\begin{proof}[Proof of \cref{lem:ds-discrepancy}]
    For each coordinate $i\in[d]$, order the points of $P$ according to their $i$-th coordinate, breaking ties arbitrarily, and let $\pi_i$ denote the resulting permutation of $P$. 
    A decision stump using the $i$-th coordinate can only produce a set of points that is a prefix or suffix of the permutation, which is also an interval.
    Let $\cI$ be the set system consisting of all intervals in the permutations $\pi_1,\ldots,\pi_d$. Then, $\cDS_d(P)\subseteq \cI$. By \cref{lem:ds-permutation-discrepancy}, we have
    \[
        \disc(\cDS_d(P))\leq \disc(\cI)=O\!\left(\sqrt{d}\log n \right).
    \]
\end{proof}

% Next, we bound the signed discrepancy of $\cDS_d(P)$ using the bound on the discrepancy of $\cDS_d(P)$.

With this in place, we can prove the following upper bound for the $\gamma$-VC dimension of $\DS_d$.

\begin{theorem}[\cref{thm:stumps-upper} restated]
    \label{thm:ds-upper_bound}
    For every integer $d \ge 1$ and every $\gamma \in (0, 1]$,
    \[
        \VC_\gamma(\DS_d)
        = O\left(\min\left\{
            \frac{\sqrt{d}}{\gamma}
            \log\!\left(\frac d\gamma\right)
        , \frac{\log d}{\gamma^2}\right\}\right)
        .
    \]
\end{theorem}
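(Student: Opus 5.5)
We prove the two bounds inside the minimum separately. Both rest on the same reduction from $\gamma$-interpolation to signed discrepancy. Let $P = \{x_1, \dotsc, x_n\} \subseteq \bbR^d$ be $\gamma$-interpolated by $\DS_d$.

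\textbf{The reduction.} Fix any coloring $c\colon P \to \{\pm 1\}$ and apply $\gamma$-realizability with the uniform distribution on $P$. This gives a stump $h$ with $\sum_{i} c(x_i) h(x_i) \ge \gamma n$. Let $F = P \cap h^{-1}(1)$. Then
\[
\sum_i c(x_i)h(x_i) = \sum_{p\in F} c(p) - \sum_{p\notin F} c(p),
\]
so $\discpm(F, c) \ge \gamma n$. We have $F \in \cDS_d(P)$; this holds also for stumps of the second (negated) form, since their positive set is still in the induced family by definition. Because $c$ was arbitrary, $\discpm(\cDS_d(P)) \ge \gamma n$.

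\textbf{The first bound.} The family $\cDS_d(P)$ contains $P$ itself: take any coordinate with threshold $t$ below all points. So \cref{lem:signed-discrepancy-bound} and \cref{lem:ds-discrepancy} give
\[
\gamma n \le \discpm(\cDS_d(P)) \le 3\,\disc(\cDS_d(P)) \le C\sqrt{d}\log n .
\]
Hence $n \le (C\sqrt d/\gamma)\log n$. Solving this self-referential inequality is the only mildly delicate step. Write $A = C\sqrt d/\gamma$, so $n \le A\log n$. If $n > 2A\log(2A)$, then, since $n/\log n$ is increasing, we would get $n/\log n > A$, a contradiction. Therefore
\[
n = O(A\log A) = O\!\left(\frac{\sqrt d}{\gamma}\log\frac d\gamma\right),
\]
using $\log(\sqrt d/\gamma) = O(\log(d/\gamma))$.

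\textbf{The second bound.} Use the Rademacher argument from the proof of \cref{thm:general-upper_bound}, but with finite-class (Massart) complexity in place of the VC bound. Restricted to $P$, the class $\DS_d$ has at most $2d(n+1)$ distinct behaviours. Massart's lemma then gives empirical Rademacher complexity at most $\sqrt{2\log(2d(n+1))/n}$. Combined with the lower bound $\gamma$ on this complexity from that proof, we get
\[
\gamma^2 n \le 2\log(2d(n+1)).
\]
Solving as before yields $n = O\big((\log d + \log(1/\gamma))/\gamma^2\big)$.

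The remaining obstacle is removing the $\log(1/\gamma)$ term to obtain exactly $O(\log d/\gamma^2)$. There are two routes.
\begin{itemize}
\item Cite the Rademacher/VC bound $O(\sqrt{\VC/n})$ with $\VC(\DS_d) = O(\log d)$, exactly as in \cref{thm:general-upper_bound}. This directly gives $n = O(\log d/\gamma^2)$ and avoids the issue entirely. This is the route I would take.
\item Alternatively, observe that the $\log(1/\gamma)$ term only matters when $\log(1/\gamma) \gtrsim \log d$, i.e.\ when $d \le \gamma^{-O(1)}$. In that regime the first bound is already the smaller one (up to constants), so the minimum is unaffected.
\end{itemize}
Taking the smaller of the two bounds proves the theorem.
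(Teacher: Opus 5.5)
Your proof is correct and follows the paper's argument. The first term comes from the uniform-distribution reduction to signed discrepancy, combined with \cref{lem:signed-discrepancy-bound} and the Spencer-based \cref{lem:ds-discrepancy}; the second term comes from the general Rademacher bound of \cref{thm:general-upper_bound} with $\VC(\DS_d)=O(\log d)$, which is exactly the route you chose. Your Massart alternative is unnecessary, and its regime claim only holds if you split at $d\approx\gamma^{-2}$ rather than at an arbitrary $d\le\gamma^{-O(1)}$.
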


The $O(\log d/\gamma^2)$ term comes from the general upper bound on the $\gamma$-VC dimension in \cref{thm:tightVCVCgamma}, and that decision stumps in $\bbR^d$ have VC dimension $\Theta(\log d)$ \citep{shalev2014understanding}.
Thus, we only need to prove the $O(\sqrt{d}/\gamma \cdot \log(d/\gamma))$ term.

\begin{proof}[Proof of \cref{thm:ds-upper_bound}]
    Let $P$ be a set of $n$ points in $\bbR^d$ that is $\gamma$-interpolated by $\DS_d$.
    Let $\cD$ be the uniform distribution over $P$.
    Then, for any concept $c\colon P \to \{\pm 1\}$, there exists a decision stump $DS\in \DS_d$ such that
    \begin{align}
        \label{eq:ds-upper-corr}
    \corr_\cD(DS)
    = \frac{1}{n} \cdot \left(\sum_{p \in DS(P)} c(p) - \sum_{p \notin DS(P)} c(p)\right)
    \ge \gamma,
    \end{align}
    where $DS(P)$ is the subset of $P$ that is classified as $1$ by $DS$.

    On the other hand, there always exists a decision stump that classifies every point as $1$, which means $\cDS_d(P)$ always contains $P$.
    Combining \cref{lem:signed-discrepancy-bound} and \cref{lem:ds-discrepancy}, we have 
    \[
    \discpm(\cDS_d(P))
    \le 3\disc(\cDS_d(P))
    = O(\sqrt{d}\log n).
    \]
    This implies that there exists a concept $c_1\colon P \to \{\pm 1\}$ such that for any decision stump $DS$, $\abs*{\sum_{p \in DS(P)} c_1(p) - \sum_{p \notin DS(P)} c_1(p)} = O(\sqrt{d}\log n)$.
    Combining this with \cref{eq:ds-upper-corr}, we have
    \[
    \gamma \cdot n = O(\sqrt{d}\log n),
    \]
    which solves to $n = O((\sqrt{d}/\gamma)\log(d/\gamma)).$ This concludes the proof.
\end{proof}

\subsection{Lower Bounds}
In this section, we provide two lower bounds on the $\gamma$-VC dimension of decision stumps. 

\subsubsection{A looser lower bound}
\begin{theorem}[\cref{thm:stumps-lower} restated, $d \ge 1/\gamma^2$]
    \label{thm:ds-lower_bound_loose}
    For every integer $d \ge 1$ and $\gamma \in (0, 1]$, if $d \ge 1/\gamma$, then $\VC_\gamma(\DS_d) = \Omega(\log(\gamma d)/\gamma)$.
\end{theorem}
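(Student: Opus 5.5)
Following the technical overview, the plan is an explicit construction. Let $m = \lfloor \log_2(\gamma d)\rfloor$ (assume $\gamma d \ge 2$; otherwise the bound is trivial, since $\VC_\gamma(\DS_d)\ge 1$). Let $B = \lceil 1/(2\gamma)\rceil$ blocks, or a slightly smaller constant multiple of $1/\gamma$ so that the arithmetic closes. Put $m$ points in each block, for a total of $n = Bm = \Omega(\log(\gamma d)/\gamma)$ points.

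\textbf{Construction.} For each block $b$ and each sign pattern $s \in \{\pm1\}^m$ we reserve one coordinate $j(b,s)$; this uses $B\cdot 2^m \le B\gamma d \le d$ coordinates. Any remaining coordinates are set to $0$. In coordinate $j(b,s)$ we place the $m$ points of block $b$ so that a threshold stump realizes exactly the labels $s$ on them, with $s=+1$ points at $1$ and $s=-1$ points at $-1$. All points outside block $b$ are placed at $0$ in this coordinate. Then the stump $x_{j} \ge 1/2$ outputs $s$ on block $b$ and $-1$ elsewhere, and its negation outputs $-s$ on block $b$ and $+1$ elsewhere. Averaging these two stumps with the all-ones and all-minus-ones constant stumps would give a hypothesis that is $s$ on block $b$ and $0$ elsewhere. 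Exact zero is not available from a single stump, so we use the following correlation computation.

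\textbf{Correlation argument.} Fix a labeling $c$ and a distribution $\cQ$ on $P$. Set $w_p = \cQ(p)c(p)$, so that $\sum_p |w_p| = 1$. Choose the block $b$ with $\sum_{p\in b}|w_p| \ge 1/B$. Let $s = \sign(w)$ restricted to block $b$, and let $R = \sum_{p\notin b} w_p$. The stump $h_1$ (which is $s$ on $b$ and $-1$ elsewhere) has correlation $\sum_{b}|w_p| - R$. The stump $h_2$, built from the pattern $-s$ with its negation (which is $s$ on $b$ and $+1$ elsewhere), has correlation $\sum_b|w_p| + R$. Both patterns are available because all $2^m$ patterns are reserved. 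The larger of the two is therefore at least $\sum_b |w_p| \ge 1/B \ge \gamma$ once $B \le 1/\gamma$. So every labeling is $\gamma$-realizable, $P$ is $\gamma$-interpolated, and $\VC_\gamma(\DS_d) \ge Bm = \Omega(\log(\gamma d)/\gamma)$.

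\textbf{Main obstacle.} The delicate step is the bookkeeping of the constants, and in particular the stated hypothesis $d \ge 1/\gamma$. We need $B\cdot 2^m \le d$ with $B \approx 1/\gamma$, which forces $m \approx \log_2(\gamma d)$. This is positive only when $d \gtrsim 1/\gamma$, matching the hypothesis. I would also double-check that ties and threshold choices let a single stump produce exactly the pattern on the block, and that points outside the block fall consistently on one side. The $0$ versus $\pm1$ placement above handles both.
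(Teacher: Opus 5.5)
Your construction is the paper's: about $1/\gamma$ blocks of about $\log(\gamma d)$ points each, one coordinate per (block, sign pattern) with that block at $\pm 1$ and all other points at $0$, and a pigeonhole choice of a block of large $\ell_1$ mass. Your finishing step is different and slightly sharper.

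\begin{itemize}
\item The paper averages four stumps to get exactly $r/2$, where $r$ equals $s$ on the block and $0$ elsewhere. Two of its four stumps are the constants $+1$ and $-1$ on $P$, which only cancel. So the paper needs block mass $2\gamma$ and uses $1/(2\gamma)$ blocks.
\item Your stumps equal $s$ on the block and $\mp 1$ outside, with correlations $A \mp R$. The better one has correlation $A+|R| \ge A$. This amounts to averaging the thresholds $1/2$ and $-1/2$, which yields $r$ itself, so mass $\gamma$ suffices.
\end{itemize}

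This part is correct. You do not even need the $-s$ coordinate: the threshold $-1/2$ on $j(b,s)$ already gives $s$ on the block and $+1$ outside. Your side remark that averaging $h$, $-h$ and the two constant stumps gives $r$ is wrong, since that average is identically $0$, but you never use it.

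The step that fails is dismissing $\gamma d < 2$ as trivial. The hypothesis $d \ge 1/\gamma$ allows $1 \le \gamma d < 2$. There your $m = \lfloor \log_2(\gamma d)\rfloor$ is $0$ and the construction is empty. Yet the target $\log(\gamma d)/\gamma$ is not $O(1)$ in this range: for $\gamma d = 3/2$ it is $\Theta(1/\gamma)$. So the fact $\VC_\gamma(\DS_d) \ge 1$ does not suffice.

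The paper avoids this by working with $\log(2\gamma d)$, which is at least $1$ whenever $\gamma d \ge 1$. In your setup this works as follows.
\begin{itemize}
\item For $\gamma \le 1/2$, take $B = \lfloor 1/(2\gamma)\rfloor$ and $m = \lfloor \log_2(2\gamma d)\rfloor \ge 1$. Then $B 2^m \le d$, and the heavy block has mass at least $1/B \ge 2\gamma$. Also $n = Bm \ge \log_2(2\gamma d)/(8\gamma)$, uniformly for all $\gamma d \ge 1$.
\item For $\gamma > 1/2$ and $\gamma d < 2$, one has $d \le 3$, and the target is genuinely $O(1)$.
\end{itemize}
With this adjustment your proof is complete.
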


The lower bound above is different from the second term of \cref{thm:stumps-lower}, which states that $\VC_\gamma(\DS_d) = \Omega(\max\{\log(\gamma d)/\gamma, 1/\gamma^{2}\})$ for $d \ge 1/\gamma^2$.
This is because when $1/\gamma \le d \le 1/\gamma^2$, we have $\sqrt{d}/\gamma \ge \sqrt{1/\gamma}/\gamma \ge \log(1/\gamma)/\gamma \ge \log(\gamma d)/\gamma$, which means the first term of \cref{thm:stumps-lower} dominates.
When $d \ge 1/\gamma^2$, we always have $\VC_\gamma(\DS_d) \ge \VC_\gamma(DS_{\lfloor\gamma^{-2}\rfloor}) = \Omega(1/\gamma^2)$, where the last equality follows from the first term of \cref{thm:stumps-lower}.

\begin{proof}[Proof of \cref{thm:ds-lower_bound_loose}]
    \textbf{Case 1.} If $\gamma = 1$, then $\VC_\gamma(\DS_d) = \VC(\DS_d)$, and the latter is known to be $\Theta(\log d)$ as shown in~\citet{shalev2014understanding}.
    Thus, the lower bound holds.

    \textbf{Case 2.} If $1/2 < \gamma < 1$, then $d \ge 2$.
    We have $\VC_\gamma(\DS_d) \ge \VC(\DS_d) = \Theta(\log d)$, and it suffices to show that $\log d \ge \log(2\gamma d)/(4\gamma)$, which is equivalent to $d^{4\gamma-1} \ge 2\gamma$.
    Since $d \ge 2$ and $1/2 < \gamma < 1$, we have $d^{4\gamma-1} > 2 > 2\gamma$, which concludes the proof.

    \textbf{Case 3.} If $0 < \gamma \le 1/2$, then $d \ge 2$ and $1/(2\gamma) \ge 1$.
    Since rounding $\gamma$ up to the nearest power of $1/2$ (which is at most $1/2$) only changes the bound by a constant factor, we assume without loss of generality that $\gamma$ is a power of $1/2$.
    We now construct a sequence of $n = \lfloor\log(2\gamma d)\rfloor/(2\gamma)$ points in $\bbR^d$ that is $\gamma$-interpolated by $\DS_d$.

    Let $P = (\bp_1, \dotsc, \bp_n)$ be a sequence of $n$ points in $\bbR^d$.
    Split $P$ into $k = 1/(2\gamma)$ subsequences $P_1, \dotsc, P_k$ of equal length, then the length of each subsequence is $l = 2\gamma n$.
    Let $A \in \bbR^{d \times n}$ be the matrix whose $j$-th column is $\bp_j$.
    Since $d \ge 2^{l} \cdot k$, we can define the coordinates of the points in $P$ as follows:
    \begin{itemize}
        \item For the $i$-th subsequence $P_i$, let $A_i$ be the submatrix of $A$ restricted to columns $[1 + l(i - 1), li]$ and rows $[1 + 2^l(i - 1), 2^l i]$.
        We set each row of $A_i$ to a distinct $\{\pm 1\}$-pattern of length $l$.
        \item All other entries of $A$ are set to $0$.
    \end{itemize}

    \begin{figure}[htbp]
        \centering
        \includegraphics[width=0.9\textwidth]{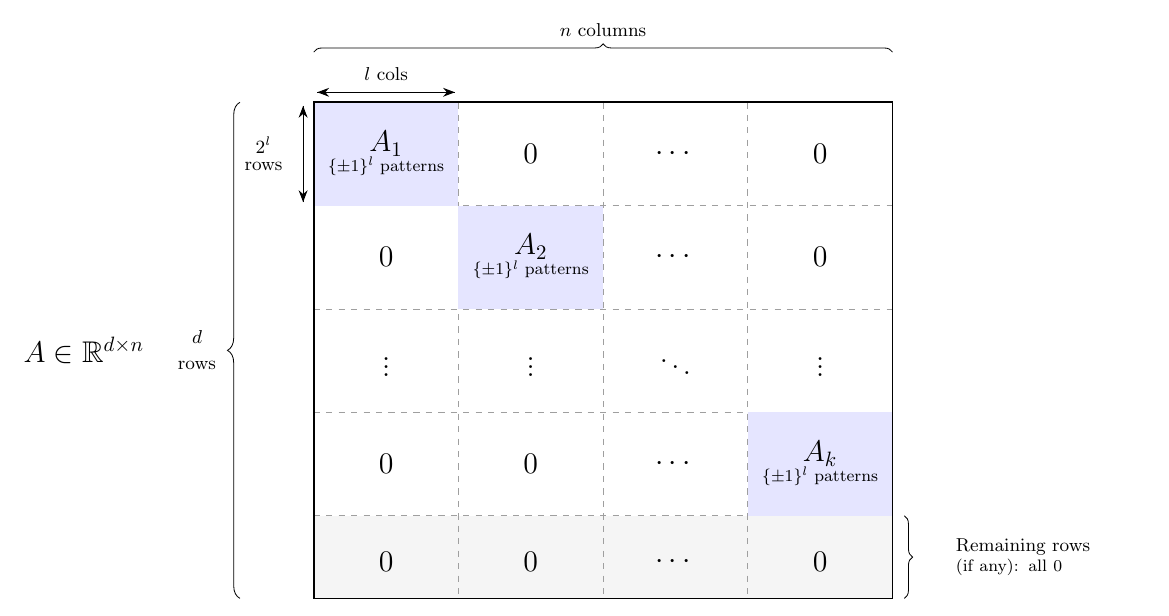}
        \caption{Illustration of the block construction.
        Each submatrix $A_i$ corresponds to a subsequence $P_i$, and the rows of $A_i$ are distinct $\{\pm 1\}$-patterns of length $l$.}
        \label{fig:ds-lower-loose}
    \end{figure}

    Next, we show that such a point sequence $P$ is $\gamma$-interpolated by $\DS_d$.
    Let $c\colon P \to \{\pm 1\}$ be any concept, and let $\cD$ be any distribution over $P$.
    We define a weight vector $\bw \in \bbR^n:\norm{\bw}_1 = 1$ such that for any point $\bp \in P$, $w_\bp = \cD(\bp) \cdot c(\bp)$.
    Our goal is to show that there exists a decision stump $h$ such that $\bbE_{\bp \sim \cD}[c(\bp)h(\bp)] = \bw \cdot \bh \ge \gamma$, where $\bh\in \{\pm 1\}^n$ is the projection of $h$ on $P$.

    Since there are $k$ subsequences, there must exist a subsequence $P_i$ such that $\cD(P_i) = \sum_{\bp \in P_i} \abs{w_\bp} \ge 1/k$. Fix such a subsequence $P_i$.
    Let $\bw_i$ be the subvector of $\bw$ corresponding to the points in $P_i$.
    By the construction of the matrix $A$, we can find a row $\br=(r_1,\ldots,r_n)\in\{-1, 0, 1\}^n$ of $A$, whose projection on $P_i$ is exactly the sign pattern of $\bw_i$, and the projection on all other points is $\mathbf{0}$.
    Then, we have
    \[
    \bw \cdot \br = \sum_{j=1}^{n} w_j r_j = \sum_{\bp \in P_i} w_\bp r_\bp = \sum_{\bp \in P_i} w_\bp \cdot \sign(w_\bp) = \cD(P_i) \ge \frac{1}{k} = 2\gamma,
    \]
    where the last equality holds because $k = 1/(2\gamma)$.
    Finally, we show that there exists a convex combination of four decision stumps that approximates the vector $\br$.
    Suppose the row $\br$ corresponds to the $r$-th feature, we define a function $g$ as follows:
    \[
    g(\bp) = \frac{\ind^{\pm}\{p_r \ge 1/2\} + \ind^{\pm}\{p_r < 3/2\} + \ind^{\pm}\{p_r < -3/2\} + \ind^{\pm}\{p_r \ge -1/2\}}{4},
    \]
    where $\ind^{\pm}\{\cdot\}$ is the indicator function that outputs $1$ if the condition is satisfied and $-1$ otherwise.
    And we have
    \[
    g(\bp) = \begin{cases}
        \frac{1}{2} & \text{if } p_r = 1 \\
        0 & \text{if } p_r = 0 \\
        -\frac{1}{2} & \text{if } p_r = -1
    \end{cases}.
    \]
    Therefore, let $\bg\in \{-\frac{1}{2}, 0, \frac{1}{2}\}^n$ be the projection of $g$ on $P$, we have $\bg = 1/2 \cdot \br$, which implies that $\bw \cdot \bg = 1/2 \cdot \bw \cdot \br \ge \gamma$.
    Besides, since $g$ is a convex combination of four decision stumps, there must exist a decision stump $h$ such that $\bw \cdot \bh \ge \bw \cdot \bg \ge \gamma$.
    This concludes the proof.

\end{proof}

\subsubsection{A tighter lower bound}
\begin{theorem}[\cref{thm:stumps-lower} restated, $d \le 1/\gamma^2$]
    \label{thm:ds-lower_bound_tight}
    For every integer $d \ge 1$ and $\gamma \in (0, 1]$, if $d \le 1/\gamma^2$, then $\VC_\gamma(\DS_d) = \Omega(\sqrt{d}/\gamma)$.
\end{theorem}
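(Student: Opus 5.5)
We build an explicit point set of size $n=\Theta(\sqrt d/\gamma)$ from Hadamard blocks, as the technical overview indicates. By monotonicity in $d$ and constant rescaling of $\gamma$, we may assume $d=m^2$ with $m$ a power of two, that $k=\lceil 1/(c\gamma\sqrt d)\rceil$ is an integer with $k\ge 1$, and that $c$ is a small constant. The case $k$ small, i.e.\ $\gamma\sqrt d=\Theta(1)$, is fine because $d\le 1/\gamma^2$.

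\textbf{Construction.} Take $k$ blocks. Block $i$ consists of $d$ points, with coordinates supported on a fresh set of rows. We therefore need $kd$ coordinates in total, whereas only $d$ are available. To fix this, use $d'=d/k$-sized Hadamard matrices, so block size is $d'$ and there are $k$ blocks. This gives $kd'=d$ coordinates and $n=kd'=d$ points. The budget is then wrong, so we rebalance. Let each block be a Hadamard matrix $H$ of order $m$, giving $m$ points and $m$ rows. Use $k=d/m$ blocks, which consumes exactly $d$ coordinates, and $n=d$ points. The point $p_j$ in block $i$ has coordinates equal to column $j$ of $H$ on block $i$'s rows, scaled by $3$ so the values lie in $\{\pm 3\}$. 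All other coordinates are $0$.

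\textbf{Correlation.} Fix a labeling $c$ and distribution $\cD$, and set $w_p=\cD(p)c(p)$. Some block $B_i$ has $\norm{\bw_i}_1\ge 1/k=m/d$. Since $H^\top H=mI$, we have $\sum_r (H\bw_i)_r^2=m\norm{\bw_i}_2^2\ge \norm{\bw_i}_1^2$. Hence some row $r$ satisfies $\abs{\langle H_r,\bw_i\rangle}\ge \norm{\bw_i}_1/\sqrt m\ge \sqrt m/d$. The sign is handled by negation, since $\DS_d$ is symmetric. The resulting edge is $\Theta(1/\sqrt d)$ with $n=d$, so $n=\Theta(\sqrt d/\gamma)$ holds exactly when $\gamma=\Theta(1/\sqrt d)$.

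\textbf{General $\gamma$.} For $\gamma\le 1/\sqrt d$, we want more points with the same $d$. Replace each block by $t=1/(\gamma\sqrt d)$ copies, but place the copies on disjoint \emph{ranges of values} of the same coordinates rather than on disjoint coordinates. Copy $s$ uses values $\pm 1$ shifted to $4s\pm1$ on the block's rows, and $0$ shifted to $4s$. A signed row pattern restricted to a single copy is then simulated by four stumps with thresholds around $4s$, exactly as in the proof of \cref{thm:ds-lower_bound_loose}. On the other copies, the four stumps must cancel. They do cancel: for values below $4s-2$ and above $4s+2$ the combination $g$ evaluates to $0$, by the same telescoping as before.

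The overall structure is therefore $m\times m$ Hadamard blocks, with $d/m$ coordinate groups and $t$ value-shifted copies per group. This gives $n=(d/m)\cdot t\cdot m=dt=\sqrt d/\gamma$ points. The heaviest block has mass at least $1/(dt/m)=m\gamma/\sqrt d$. The Hadamard row then gives correlation at least $\gamma\cdot m/(\sqrt d\sqrt m)\cdot\Theta(1)$. Choosing $m=\sqrt d$ makes this $\Theta(\gamma)$, and the factor $1/2$ loss from simulating the row by stumps is absorbed into constants by rescaling $\gamma$.

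\textbf{Main obstacle.} The delicate step is making the stump simulation vanish exactly on the other blocks. Blocks sharing rows must sit at value $4s$ (the block's "zero") relative to the thresholds, and points of other coordinate groups have value $0$ on these rows. We must check that $0$ lies outside every window $[4s-2,4s+2]$, which means shifting so that $s\ge 1$. Once the four-stump combination $g$ is verified to equal $\tfrac12 H_r$ on block $(i,s)$ and $0$ elsewhere, averaging finishes the proof: some single stump achieves $\bw\cdot\bh\ge\bw\cdot\bg\ge\Omega(\gamma)$.
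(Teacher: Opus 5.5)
Your final construction has the right ingredients: Hadamard blocks, value-separated copies sharing coordinates, the four-stump simulation, and symmetry for the sign. However, the parameter choice $m=\sqrt d$ breaks the bound. Redo your own computation. With $d/m$ coordinate groups and $t$ copies there are $dt/m$ blocks, and the heaviest has $\ell_1$-mass at least $m/(dt)$. The Hadamard step converts this into correlation only at rate $1/\sqrt m$, so the edge is $\sqrt m/(dt)=\gamma\sqrt{m/d}$ for $t=1/(\gamma\sqrt d)$. This is $\Theta(\gamma)$ only when $m=\Theta(d)$. With $m=\sqrt d$ it is $\gamma d^{-1/4}$, so your set of $\sqrt d/\gamma$ points is only $\Theta(\gamma d^{-1/4})$-interpolated, and you would prove merely $\VC_{\gamma}(\DS_d)=\Omega(d^{1/4}/\gamma)$. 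The same slip is in your warm-up: you derive the bound $\sqrt m/d$ yourself, and it equals $1/\sqrt d$ only when $m=d$, i.e.\ a single block. Splitting the coordinates into groups is exactly what costs the factor $\sqrt{d/m}$. It spreads the $\ell_1$ mass over more blocks without shortening the Hadamard rows enough to compensate.

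The fix is to drop the coordinate groups entirely. Take $m=d$; a Hadamard matrix of order $d$ exists once $d$ is a power of two. Place all $t=\Theta(1/(\gamma\sqrt d))$ copies on all $d$ coordinates at separated value ranges, so the value separation alone produces the extra points. Then the heaviest copy has mass at least $1/t=\Theta(\gamma\sqrt d)$, and some Hadamard row has correlation at least $\|w_i\|_1/\sqrt d=\Theta(\gamma)$ with it. Your four-stump average, which you correctly checked vanishes on the other copies, loses only a factor $2$. You also need $t$ to be a positive integer. This is where $d\le 1/\gamma^2$ enters, after shrinking $d$ by a constant factor; the case of constant $\gamma$ is trivial. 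This is exactly the paper's proof, which separates the copies by scaling ($A_i=i\cdot H_d$, thresholds at $\pm i\pm\frac12$) rather than by shifting.
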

\begin{proof}[Proof of \cref{thm:ds-lower_bound_tight}]
    \textbf{Case 1.} If $1/4 < \gamma \le 1$, then $\sqrt{d}/(8\gamma) \le 1/(8\gamma^2) < 2$.
    By the definition of $\gamma$-VC dimension, we have $\VC_\gamma(\DS_d) \ge \VC(\DS_d) \ge 2 > \sqrt{d}/(8\gamma)$, which concludes the proof.

    \textbf{Case 2.} If $0 < \gamma \le 1/4$, then $1/(16\gamma^2) \ge 1$.
    Since rounding $d$ down to the largest power of $4$ not exceeding $\min\{d, 1/(16\gamma^2)\}$ (which is at least $1$) only changes the bound by a constant factor, we assume that $d$ is a power of $4$ and $d \le 1/(16\gamma^2)$ from now on.
    Define an integer $k$ to be $\lfloor 1/(2\gamma \sqrt{d})\rfloor$.
    Since $1 \le d \le 1/(16\gamma^2)$, we have
    \[
    k = \lfloor\frac{1}{2\gamma \sqrt{d}}\rfloor
    > \frac{1}{2\gamma \sqrt{d}} - 1
    \ge \frac{1}{2\gamma \sqrt{d}} - \frac{1}{4\gamma \sqrt{d}}
    = \frac{1}{4\gamma \sqrt{d}}
    \ge 1.
    \]

    We now construct a sequence of $n = d \cdot k > \sqrt{d}/(4\gamma)$ points in $\bbR^d$ that is $\gamma$-interpolated by $\DS_d$.
    Let $P = (\bp_1, \dotsc, \bp_n)$ be a sequence of $n$ points in $\bbR^d$.
    Split $P$ into $k$ subsequences $P_1, \dotsc, P_k$ of equal length $d$.
    Let $A \in \bbR^{d \times n}$ be the matrix whose $j$-th column is $\bp_j$, and let $A_i$ be the $d \times d$ submatrix of $A$ restricted to columns $[1 + d(i - 1), di]$, which corresponds to the subsequence $P_i$.
    The values of the entries of $A$ are defined as follows: for each $i \in [k]$, we set $A_i = i \cdot H_d$ where $H_d$ is a Hadamard matrix of order $d$.

    \begin{figure}[htbp]
        \centering
        \includegraphics[width=0.6\textwidth]{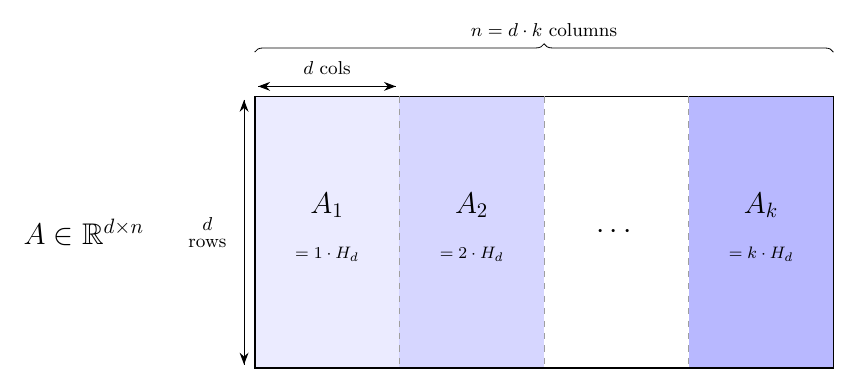}
        \caption{Illustration of the block construction.
        Each submatrix $A_i$ is a $i$-scaled copy of a Hadamard matrix, and corresponds to the subsequence $P_i$.}
        \label{fig:ds-lower-tight}
    \end{figure}

    Next, we show that such a point sequence $P$ is $\gamma$-interpolated by $\DS_d$.
    Let $c\colon P \to \{\pm 1\}$ be any concept, and let $\cD$ be any distribution over $P$.
    We define a weight vector $\bw \in \bbR^n:\norm{\bw}_1 = 1$ such that for any point $\bp \in P$, $w_\bp = \cD(\bp) \cdot c(\bp)$.
    Our goal is to show that there exists a decision stump $h$ such that $\bbE_{\bp \sim \cD}[c(\bp)h(\bp)] = \bw \cdot \bh \ge \gamma$, where $\bh\in \{\pm 1\}^n$ is the projection of $h$ on $P$.

    Since there are $k$ subsequences, there must exist a subsequence $P_i$ such that $\cD(P_i) = \sum_{\bp \in P_i} \abs{w_\bp} \ge 1/k \ge 2\gamma \sqrt{d}$. Fix such a subsequence $P_i$.
    Let $\bw_i$ be the subvector of $\bw$ corresponding to the points in $P_i$.
    Since $H_d$ is a Hadamard matrix, for any vector $\bx$, we have $\norm{H_d\bx}_2 = \sqrt{d} \cdot \norm{\bx}_2$.
    Therefore, we have
    \begin{align*}
    \norm{H_d\bw_i}_\infty
    \ge \frac{\norm{H_d\bw_i}_2}{\sqrt{d}}
    = \norm{\bw_i}_2
    \ge \frac{\norm{\bw_i}_1}{\sqrt{d}}
    = \frac{\cD(P_i)}{\sqrt{d}}
    \ge 2\gamma,
    \end{align*}
    where the second inequality comes from Cauchy-Schwarz. This implies that there exists a row $\br_i\in \{\pm 1\}^d$ of $H_d$ such that $\abs{\bw_i \cdot \br_i} \ge 2\gamma$.
    Let vector $\br\in\{-1, 0, 1\}^n$ be the extension of $\br_i$ to length $n$, i.e., the projection of $\br$ on $P_i$ is exactly $\br_i$, and the projection on all other points is $0$.
    Then, we have $\abs{\bw \cdot \br} = \abs{\bw_i \cdot \br_i} \ge 2\gamma$.

    Finally, we show that there exists a convex combination of four decision stumps that approximates the vector $\br$.
    Suppose the row $\br_i$ corresponds to the $r$-th feature, we define a function $g$ as follows:
    \[
    g(\bp) = \frac{\ind^{\pm}\{p_r \ge i - \frac{1}{2}\} + \ind^{\pm}\{p_r < i + \frac{1}{2}\} + \ind^{\pm}\{p_r < -i - \frac{1}{2}\} + \ind^{\pm}\{p_r \ge -i + \frac{1}{2}\}}{4},
    \]
    where $\ind^{\pm}\{\cdot\}$ is the indicator function that outputs $1$ if the condition is satisfied and $-1$ otherwise.
    And we have
    \[
    g(\bp) = \begin{cases}
        \frac{1}{2} & \text{if } p_r = i \\
        0 & \text{if } p_r \in \{-k, \dotsc, -1, 1, \dotsc, k\} \setminus \{i, -i\} \\
        -\frac{1}{2} & \text{if } p_r = -i
    \end{cases}.
    \]
    Therefore, let $\bg\in \{-\frac{1}{2}, 0, \frac{1}{2}\}^n$ be the projection of $g$ on $P$, we have $\bg = 1/2 \cdot \br$.
    By the symmetry of decision stumps, we can construct $g'$ such that $\bg' = -\bg = -1/2 \cdot \br$.
    Therefore, we have $\bw \cdot \bg = 1/2 \cdot \bw \cdot \br \ge \gamma$ or $\bw \cdot \bg' = -\bw \cdot \bg = -1/2 \cdot \bw \cdot \br \ge \gamma$.
    Since both $g$ and $g'$ are convex combinations of four decision stumps, there exists a decision stump $h$ such that $\bw \cdot \bh \ge \max\{\bw \cdot \bg, \bw \cdot \bg'\} \ge \gamma$.
    This concludes the proof.
\end{proof}

\section{Axis-Parallel Rectangles}
\label{sec:axisparallel}

In this section, we analyze the $\gamma$-VC dimension of the class of high-dimensional axis-parallel rectangles.
Let $\cR_d$ be the set of all axis-parallel rectangles in $\bbR^d$. The class of axis-parallel rectangles contains a corresponding hypothesis $h_R$ for any axis-parallel rectangle $R \in \cR_d$, where $h_R(x) = 1$ if $x \in R$ and $-1$ otherwise. Notably, \citet[Section 5]{alon2023boosting} assumed that the base hypothesis class is symmetric, i.e., if $h \in \cH$, then $-h \in \cH$.
To make the class symmetric, for any axis-parallel rectangle $R \in \cR_d$, we include two hypotheses $h_R$ and $-h_R$ in the class, where $h_R(\bx) = 1$ if $\bx \in R$ and $-1$ otherwise.

Formally, we denote the class of $d$-dimensional axis-parallel rectangles by $\R_d$, defined as follows:
\[
\R_d = \left\{h_R, -h_R: R = \prod_{i=1}^d [a_i, b_i], a_i \le b_i, a_i, b_i \in \bbR\right\},
\]
where for any point $\bx = (x_1, \dotsc, x_d) \in \bbR^d$,
\[
h_R(\bx) = \begin{cases}
1 & \text{if } a_i \le x_i \le b_i \text{ for all } i\in [d], \\
-1 & \text{otherwise}.
\end{cases}
\]

Bounding the $\gamma$-VC dimension of $\R_d$ is highly related to high-dimensional Tusnády's problem that we briefly introduce in the following. We refer readers to~\citet{matouvsek2020factorization} for a more comprehensive survey on this problem.

\paragraph{Tusnády's problem.}
Let $P$ be a set of $n$ points in $\bbR^2$ and $\cR_2$ be the set of 2-dimensional axis-parallel rectangles.
Define $\cR_2(P) \coloneqq \{R\cap P: R \in \cR_2\}$ as the set of all subsets of $P$ that can be obtained by intersecting $P$ with an axis-parallel rectangle.
The goal is to understand the discrepancy of $\cR_2(P)$.
The high-dimensional version of Tusnády's problem is defined similarly, where $\cR_2$ is replaced by $\cR_d$.

For the best known discrepancy upper bound, \citet[Theorem 1]{nikolov2017tighter} provides an upper bound on the discrepancy of the set systems produced by intersecting a point set with $d$-dimensional anchored rectangles. Substituting this result into the analysis of \citet[Theorem 1.1]{matouvsek2020factorization} immediately gives an upper bound of $O_d(\log^{d-1/2}n)$.
Based on this, we prove that the $\gamma$-VC dimension of $\R_d$ is upper bounded by $O_d\left(\log^{d-1/2}(1/\gamma)/\gamma\right)$.

For the lower bound, \citet[Theorem 1.1]{matouvsek2020factorization} gives a lower bound of $\Omega_d(\log^{d-1}n)$, which is tight up to a factor of $\sqrt{\log n}$.

Unfortunately it is not clear how to translate this lower bound to the $\gamma$-VC dimension setting. In particular, discrepancy naturally corresponds to the \emph{uniform distribution} being $\gamma$-realizable by $\cH$. This is not a problem for upper bounding $\gamma$-VC dimension in terms of discrepancy upper bounds as $\gamma$-interpolation requires \emph{every} distribution to be $\gamma$-realizable. For lower bounds, discrepancy lower bounds do not necessarily imply that every distribution (in particular highly non-uniform distributions) is $\gamma$-realizable by $\cH$.

To get a meaningful lower bound for the $\gamma$-VC dimension, we instead adapt a discretized variant of Roth's method (see~\citet{roth1954irregularities,chazelle1997lower}) and generalize it to higher dimensions.
Roth's method is used to prove a lower bound for geometric discrepancy of axis-parallel rectangles, and we refer readers to \citet[Section 6.1]{matousek1999geometric} and \citet[Section 3.1]{DBLP:books/daglib/0012928} for textbook treatments.

\subsection{Upper Bound}
Substituting the result of \citet[Theorem 1]{nikolov2017tighter} into the analysis of \citet[Theorem 1.1]{matouvsek2020factorization} gives us the following upper bound for the discrepancy of $\cR_d(P)$.
\begin{lemma}\label{lem:rectangles-bound-discrepancy}
    For every fixed integer $d \ge 2$, every set $P$ of $n$ points in $\bbR^d$, $\disc(\cR_d(P)) = O_d(\log^{d-1/2}n)$.
\end{lemma}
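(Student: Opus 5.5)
The plan is to reduce general axis-parallel boxes to anchored boxes and then quote the anchored-box bound of \citet[Theorem 1]{nikolov2017tighter}. That result gives $\disc(\cA_d(P)) = O_d(\log^{d-1/2} n)$, where $\cA_d(P)$ denotes the system of sets $P \cap \prod_{i=1}^d (-\infty, b_i]$. Following \citet[Theorem 1.1]{matouvsek2020factorization}, the reduction runs through the $\gamma_2$ factorization norm, not through colorings directly. We know $\disc(\cF) = O(\gamma_2(A_\cF)\sqrt{\log m})$ for a system with $m$ sets and incidence matrix $A_\cF$, and $\gamma_2$ is subadditive and lower bounds hereditary discrepancy up to logarithmic factors. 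So the actual statement we use from Nikolov is the $\gamma_2$-type bound, $\gamma_2(A_{\cA_d(P)}) = O_d(\log^{d-1} n)$. This is the form in which it plugs into Matou\v{s}ek--Nikolov--Talwar's framework.

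The key steps are as follows.

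First, by perturbation we may assume all coordinates are distinct. Only the combinatorial order type matters, so we may also assume $P$ lies in the grid $[n]^d$. The number of distinct sets in $\cR_d(P)$ is then at most $n^{2d}$, hence $\log m = O_d(\log n)$.

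Second, we use inclusion–exclusion. Every box $\prod_i [a_i, b_i]$ has incidence vector equal to a signed sum of $2^d$ anchored-box incidence vectors, with corners chosen from $\{a_i - 1/2, b_i\}$ in each coordinate. Hence $A_{\cR_d(P)} = M \cdot A_{\cA_d(P)}$ in a suitable sense: each row of $A_{\cR_d(P)}$ is a $\pm 1$ combination of at most $2^d$ rows of $A_{\cA_d(P)}$. The $\gamma_2$ norm is controlled under left multiplication by a matrix whose rows have $\ell_1$ norm at most $2^d$, since $\gamma_2(MA) \le \|M\|_{1\to\infty\text{-rows}}\, \gamma_2(A)$ via the row-norm characterization. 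Therefore $\gamma_2(A_{\cR_d(P)}) \le 2^d \gamma_2(A_{\cA_d(P)}) = O_d(\log^{d-1} n)$.

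Third, we apply the Banaszczyk-based bound $\disc \le O(\gamma_2 \sqrt{\log m})$ to obtain $\disc(\cR_d(P)) = O_d(\log^{d-1} n \cdot \sqrt{\log n}) = O_d(\log^{d-1/2} n)$.

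The main obstacle is the third step's bookkeeping around which form of Nikolov's theorem is being substituted. If \citet[Theorem 1]{nikolov2017tighter} is stated as a $\gamma_2$ bound of order $\log^{d-1} n$ for anchored boxes, the argument above is immediate. If instead it is stated directly as a discrepancy bound, the inclusion–exclusion must not be done at the level of colorings: discrepancy is not subadditive under taking signed combinations of sets with a single shared coloring. The naive fix does work, though. For a fixed coloring $c$, $|c(R)| \le \sum_{\text{corners}} |c(A_{\text{corner}})| \le 2^d \disc(\cA_d(P), c)$, because the same coloring is applied to all anchored boxes. So even a discrepancy-form statement transfers with a $2^d$ loss, and I would present that simpler argument, using the coloring that is optimal for $\cA_d(P)$.
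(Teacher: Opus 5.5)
The argument you say you would actually present is correct and is essentially the paper's route: fix a coloring $c$ optimal for the anchored-box system, write each box on $P$ as a signed sum of $2^d$ anchored boxes so that $|\sum_{p\in R}c(p)|\le 2^d\disc(\cA_d(P),c)$, and plug in Nikolov's $O_d(\log^{d-1/2}n)$ discrepancy bound for anchored boxes; the paper likewise substitutes Nikolov's Theorem~1 into the anchored-to-general-box reduction of Matou\v{s}ek, Nikolov and Talwar. Do discard the $\gamma_2$ version, though: the claim $\gamma_2(A_{\cA_d(P)})=O_d(\log^{d-1}n)$ is false (on the grid $[m]^d$ the anchored-box matrix is the $d$-fold tensor power of the triangular all-ones matrix, so its $\gamma_2$ norm is $\Theta(\log^d m)$), so Steps~2--3 as written only recover the older $O(\log^{d+1/2}n)$ bound, whereas Nikolov's saving comes from applying Banaszczyk's signed-series theorem to the last coordinate on top of a $\gamma_2$ factorization of the $(d-1)$-dimensional system, which is why the reduction to general boxes has to be done at the level of a fixed coloring.
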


With this in place, we first bound the signed discrepancy and then prove the following upper bound for the $\gamma$-VC dimension of $\R_d$.

\begin{theorem}[\cref{thm:intro-rectangles-upper} restated]
    \label{thm:rectangles-upper-bound}
    For every fixed integer $d \ge 2$ and $\gamma \in (0, 1]$, $\VC_\gamma(\R_d) = O_d(\log^{d-1/2}(1/\gamma)/\gamma)$.
\end{theorem}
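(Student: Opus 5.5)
The plan mirrors the proof of \cref{thm:ds-upper_bound} for decision stumps. Let $P$ be a set of $n$ points in $\bbR^d$ that is $\gamma$-interpolated by $\R_d$, and let $\cD$ be the uniform distribution on $P$. For every coloring $c\colon P\to\{\pm1\}$, $\gamma$-interpolation gives a hypothesis $h\in\R_d$ with $\frac1n\sum_{p\in P}c(p)h(p)\ge\gamma$. The hypothesis has the form $h=h_R$ or $h=-h_R$. In either case,
\[
    \abs*{\sum_{p\in R\cap P}c(p)-\sum_{p\in P\setminus R}c(p)}\ge\gamma n,
\]
so $\discpm(F,c)\ge \gamma n$ for the set $F=R\cap P\in\cR_d(P)$. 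Since this holds for every $c$, it follows that $\discpm(\cR_d(P))\ge\gamma n$.

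Next I need an upper bound on $\discpm(\cR_d(P))$. The set system $\cR_d(P)$ contains $P$ itself, taking a box large enough to contain all points. So \cref{lem:signed-discrepancy-bound} applies, and together with \cref{lem:rectangles-bound-discrepancy} it gives
\[
    \discpm(\cR_d(P))\le 3\disc(\cR_d(P)) = O_d(\log^{d-1/2}n).
\]
Combining the two bounds yields $\gamma n\le C_d\log^{d-1/2}n$ for a constant $C_d$ depending only on $d$. The boundary case of small $n$ causes no trouble: $\log^{d-1/2} n$ stays bounded below by a positive constant for $n\ge 2$, and $n=1$ is trivial.

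The remaining step is solving this inequality, which is routine. Set $n=K\cdot\log^{d-1/2}(1/\gamma)/\gamma$ for a large constant $K=K_d$, with $\gamma$ bounded away from $1$; for $\gamma$ close to $1$ the bound follows from the $O(d/\gamma^2)$ bound of \cref{thm:general-upper_bound}. At this value $\log n=\log(1/\gamma)+O_d(\log\log(1/\gamma))+\log K=O_d(\log(1/\gamma))$. Therefore $C_d\log^{d-1/2}n\le C_d'\log^{d-1/2}(1/\gamma)<\gamma n$ once $K>C_d'$. Because $\gamma n$ grows linearly in $n$ while the right-hand side grows polylogarithmically, the inequality fails for every larger $n$ as well. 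Hence $n=O_d(\log^{d-1/2}(1/\gamma)/\gamma)$.

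I expect no real obstacle, since the discrepancy input is taken from \cref{lem:rectangles-bound-discrepancy}. The only points needing care are the sign handling for $-h_R$, which is absorbed by the absolute value in $\discpm$, and keeping the constants in the inversion of $\gamma n\lesssim\log^{d-1/2}n$ uniform in $\gamma$ for fixed $d$.
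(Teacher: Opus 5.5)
Your proposal is correct and follows the paper's proof essentially verbatim: with the uniform distribution, $\gamma$-interpolation forces $\discpm(\cR_d(P))\ge\gamma n$. Since $P\in\cR_d(P)$, \cref{lem:signed-discrepancy-bound} and \cref{lem:rectangles-bound-discrepancy} give $\discpm(\cR_d(P))=O_d(\log^{d-1/2}n)$, and solving the resulting inequality yields the bound. Your more careful inversion is a welcome addition, with one caveat: as $\gamma\to 1$ the right-hand side $\log^{d-1/2}(1/\gamma)/\gamma$ tends to $0$, so neither your appeal to \cref{thm:general-upper_bound} nor the paper's proof rescues the literal statement there, and the bound should be read with $\log(1/\gamma)$ replaced by, e.g., $\log(2/\gamma)$.
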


\begin{proof}[Proof of \cref{thm:rectangles-upper-bound}]
    Let $P$ be a set of $n$ points in $\bbR^d$ that is $\gamma$-interpolated by $\R_d$.
    Let $\cD$ be the uniform distribution over $P$.
    Then, for any concept $c\colon P \to \{\pm 1\}$, there exists a rectangle $R$ such that
    \[
    \abs{\corr_\cD(h_R)}
    = \abs{\corr_\cD(-h_R)}
    = \frac{1}{n} \cdot \abs*{\sum_{p \in R} c(p) - \sum_{p \notin R} c(p)}
    \ge \gamma.
    \]

    On the other hand, since there is always a rectangle that can contain all points in $P$, \cref{lem:signed-discrepancy-bound} and \cref{lem:rectangles-bound-discrepancy} upper bounds the signed discrepancy of $\cR_d(P)$, i.e., $\discpm(\cR_d(P)) \le 3\disc(\cR_d(P)) = O_d(\log^{d-1/2}n)$.
    This implies that there exists a concept $c_1\colon P \to \{\pm 1\}$ such that for any rectangle $R$, $\abs*{\sum_{p \in R} c_1(p) - \sum_{p \notin R} c_1(p)} = O_d(\log^{d-1/2}n)$.
    Combining with the previous inequality, we have
    \[
    \gamma \cdot n = O_d(\log^{d-1/2}n),
    \]
    which solves to $n = O_d(\log^{d-1/2}(1/\gamma)/\gamma)$.
\end{proof}

\subsection{Lower Bound}

In this section, we prove a lower bound for the $\gamma$-VC dimension of $\R_d$.
Specifically, we adapt a discretized variant of Roth's method~\citep{roth1954irregularities,chazelle1997lower} and generalize it to higher dimensions. Briefly speaking, we construct a set $P$ of $n$ points in $\bbR^d$ such that for any coloring $c\colon P \to \{\pm 1\}$, and any distribution $\cD$ over $P$, there exists a rectangle $R$ such that $\corr_\cD(h_R) = \Omega_d(\log^{(d-1)/2}n/n)$, which implies $\VC_\gamma(\R_d) = \Omega_d(\log^{(d-1)/2}(1/\gamma)/\gamma)$.

Fix integers $n\ge 1$ and $d\ge 2$. Let $m$ be a power of 2, such that $n \le m/(2^{d+1}e) < 2n$.
Given the unit box $[0, 1]^d$, we partition it into $m$ axis-parallel $d$-dimensional cells of same volume $1/m$.
Formally, let $\bk = (k_1, \dotsc, k_d)\in \bbN^d$ be a vector which satisfies $0 \le k_i \le \log m$ and $\sum_i k_i = \log m$. For any $\bk$, the unit box is partitioned as follows: on the $i$-th dimension, we partition the interval $[0, 1]$ into $2^{k_i}$ sub-intervals of the same length $2^{-k_i}$, and we take the Cartesian product of these sub-intervals across all dimensions to get these cells.
And we call the resulting grid $\cG_\bk$.

\begin{figure}[htbp]
    \centering
    \includegraphics[width=0.9\textwidth]{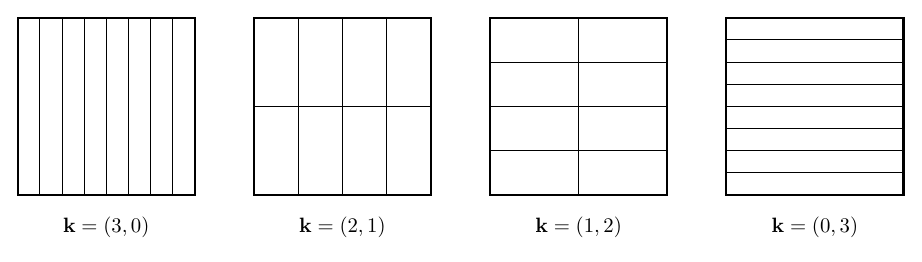}
    \caption{Illustration of different grid constructions when $d = 2$ and $m = 8$.}
\end{figure}

Let $K$ be the set of all vectors $\bk$ satisfying the above conditions.
Using stars and bars\footnote{See \url{https://en.wikipedia.org/wiki/Stars_and_bars_(combinatorics)}.},  we can calculate the size of $K$ as follows:
\[
\abs{K} = \binom{\log m + d - 1}{d - 1} \ge \frac{\log^{d-1}m}{(d - 1)!} = \Omega_d(\log^{d-1}m).
\]

A point $\bp = (p_1, \dotsc, p_d)$ is said to be \emph{well-centered} for a grid $\cG_\bk$ if $\bp$ is contained in the center sub-cell of a cell $\sigma= \prod_{i=1}^d [a_i, b_i]$ of $\cG_\bk$, i.e., $a_i + (b_i - a_i)/4 < p_i < a_i + 3(b_i - a_i)/4$ for all $i\in [d]$.
Given a point set and a grid, we say a cell of the grid is \emph{good} if it only contains one point and this point is in the center sub-cell of the cell.

\begin{lemma}[Existence of a good point set]\label{lem:rectangles-good-point-set}
    There exists a point set of size at least $m/(2^{d+1}e)$ in $[0, 1]^d$ such that each point is contained in a good cell for at least $\abs{K}/(2^{d+1}e)$ grids.\footnote{The corresponding grids for different points can be different.}
\end{lemma}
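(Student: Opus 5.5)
We will use the probabilistic method with a random point set followed by alteration. Let $N = m/2^d$ and draw $N$ points $\bp_1,\dotsc,\bp_N$ independently and uniformly from $[0,1]^d$. Fix a point $\bp_j$ and a grid $\cG_\bk$; let $\sigma$ be the (a.s.\ unique) cell of $\cG_\bk$ containing $\bp_j$.

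\textbf{Step 1: probability that a fixed cell is good for $\bp_j$.} First, $\bp_j$ lies in the center sub-cell of $\sigma$ with probability exactly $2^{-d}$, since the center sub-cell has half the side length in each coordinate. Second, the other $N-1$ points must avoid $\sigma$. Each does so independently with probability $1-1/m$, so this happens with probability
\[
(1-1/m)^{N-1} \ge (1-1/m)^{m} \ge e^{-1}\cdot(1-1/m) \ge 1/(2e),
\]
or simply $\ge e^{-N/(m-1)}\ge e^{-1}$ for $m$ large. These two events are independent, because the second depends only on the other points given $\sigma$, and $\sigma$ has volume $1/m$ regardless. Hence
\[
\Pr[\text{the cell of } \cG_\bk \text{ containing } \bp_j \text{ is good}] \ge 2^{-d}e^{-1}.
\]
Summing over $\bk\in K$, the expected number $X_j$ of grids for which $\bp_j$ lies in a good cell satisfies $\bbE[X_j]\ge |K|/(2^de)$.

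\textbf{Step 2: Markov-type averaging.} Call $\bp_j$ \emph{successful} if $X_j \ge |K|/(2^{d+1}e)$. Since $X_j\le |K|$, writing $q=\Pr[\bp_j \text{ successful}]$ gives
\[
|K|/(2^de) \le \bbE[X_j] \le q|K| + (1-q)\,|K|/(2^{d+1}e),
\]
so $q \ge 1/(2^{d+1}e)$. The expected number of successful points is then at least $N/(2^{d+1}e)$, and we fix an outcome attaining at least this many. This gives only $m/(2^{2d+1}e)$ points, which misses the target by a factor $2^d$. So we instead take $N=m$ points. Then the avoidance probability is $(1-1/m)^{m-1}\ge e^{-1}$ exactly, and every bound above holds with $N=m$, giving expected successful count $\ge m/(2^{d+1}e)$. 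This matches the statement.

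\textbf{Step 3: deleting unsuccessful points.} This is the main subtle step. We keep only the successful points. Deleting points can only help: a cell containing exactly one point stays that way or becomes empty. Each kept point's cell for a given grid still contains only that point, positioned in the center sub-cell, so every good cell remains good. Therefore each retained point still lies in a good cell for at least $|K|/(2^{d+1}e)$ grids, and we retain at least $m/(2^{d+1}e)$ points.

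The main obstacle is the independence and counting in Step 1: we must check the avoidance bound $(1-1/m)^{m-1}\ge 1/e$. We must also make sure that the goodness of a point is defined relative to the final set, which the monotonicity under deletion in Step 3 handles. Boundary measure-zero issues, such as points on cell boundaries, occur with probability $0$ and can be ignored.
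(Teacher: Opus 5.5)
Your proposal is correct and takes essentially the same route as the paper: $m$ independent uniform points, the bound $2^{-d}(1-1/m)^{m-1}\ge 1/(2^d e)$ on a cell being good, and an averaging step. The only real difference is the order of averaging: you apply the reverse-Markov bound per point in expectation before fixing an outcome, whereas the paper first fixes an outcome with at least $m|K|/(2^d e)$ good cells and then counts deterministically. You also make explicit the deletion step (discarding unsuccessful points keeps the remaining points' cells good), which the paper leaves implicit, and the $N=m/2^d$ detour in your Steps 1--2 can simply be dropped.
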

\begin{proof}[Proof of \cref{lem:rectangles-good-point-set}]
    We use the probabilistic method to prove the existence of such a point set.
    Let $\{\bp_1, \dotsc, \bp_m\}$ be a set of $m$ random points that are sampled independently and uniformly from $[0, 1]^d$.

    Fix any grid $\cG_\bk$.
    For any cell $\sigma$ of $\cG_\bk$, define an indicator random variable $X_\sigma$ that equals 1 if $\sigma$ is good, and equals 0 otherwise.
    Then, the probability of the event $X_\sigma = 1$ is
    \begin{align*}
        \bbP_{P}[X_\sigma = 1]
        &= \sum_{i=1}^m \left(\bbP[\bp_i\text{ is in the center sub-cell of } \sigma] \cdot \prod_{j=1:j \neq i}^{m} \bbP[\bp_j \notin \sigma]\right)\\
        &= m \cdot \frac{1}{2^d \cdot m} \cdot \left(1 - \frac{1}{m}\right)^{m-1}
        \ge \frac{1}{2^de},
    \end{align*}
    where the $1/(2^d \cdot m)$ term is exactly the volume of the center sub-cell of $\sigma$.
    Let $X_\bk = \sum_{\sigma \in \cG_\bk} X_\sigma$ be the number of good cells in $\cG_\bk$.
    Then, we have
    \[
    \bbE_{P}[X_\bk]
    = \sum_{\sigma \in \cG_\bk} \bbE_{P}[X_\sigma]
    = \sum_{\sigma \in \cG_\bk} \bbP_{P}[X_\sigma = 1]
    \ge \frac{m}{2^de}.
    \]
    Summing over all $\bk \in K$, we have
    \[
    \bbE_{P}[\sum_{\bk \in K} X_\bk]
    = \sum_{\bk \in K} \bbE_{P}[X_\bk]
    \ge \frac{m\abs{K}}{2^de},
    \]
    which implies that there exists a point set $P$ of size $m$ such that there are at least $m\abs{K}/(2^de)$ good cells across all grids.

    With this point set $P$, we now prove the lemma by contradiction.
    Suppose the lemma doesn't hold, then there are at most $m/(2^{d+1}e)-1$ points that are contained in a good cell for at least $\abs{K}/(2^{d+1}e)$ grids.
    Therefore, the total number of good cells across all grids is less than 
    \[
    m/(2^{d+1}e) \cdot \abs{K} + (m - m/(2^{d+1}e)) \cdot \abs{K}/(2^{d+1}e) < m\abs{K}/(2^d \cdot e) ,
    \]
    which contradicts the previous conclusion that there are at least $m\abs{K}/(2^de)$ good cells across all grids.
    Therefore, there are at least $m/(2^{d+1}e)$ points in $P$ that each is contained in a good cell for at least $\abs{K}/(2^{d+1}e)$ grids, which concludes the proof.

\end{proof}

Fix a point set $P$ of size $n$ in $[0, 1]^d$ that is guaranteed by \cref{lem:rectangles-good-point-set}. We now show that for any coloring $c\colon P \to \{\pm 1\}$ and any distribution $\cD$ over $P$, there exists a rectangle $R$ such that $\abs{\corr_\cD(h_R)} = \Omega_d(\log^{(d-1)/2}n/n)$.

Consider a square grid $\cG$ that in each dimension divide $[0, 1]$ into $m^2$ sub-intervals of the same length $1/m^2$.
Then $\cG$ has $m^{2d}$ cells and $N = (m^2 + 1)^d$ grid points.
For the point set $P$, we construct a $N \times n$ matrix $B$ as follows: the rows of $B$ are indexed by the grid points of $\cG$, and the columns of $B$ are indexed by the points in $P$.
For any grid point $\bq = (q_1, \dotsc, q_d)$ and any point $\bp = (p_1, \dotsc, p_d) \in P$, we set $B(\bq, \bp) = 1$ if $\bp\le \bq$, i.e., $p_i \le q_i$ for all $i\in [d]$, and set $B(\bq, \bp) = 0$ otherwise.

Moreover, for a coloring $c\colon P \to \{\pm 1\}$, and a distribution $\cD$ over $P$, we define a weight vector $\bw \in \bbR^n:\norm{\bw}_1 = 1$ such that for any point $\bp \in P$, $w_\bp = \cD(\bp) \cdot c(\bp)$.
Consider the vector $B\bw \in \bbR^N$, each entry corresponds to a grid point $\bq$ and equals $\sum_{\bp \in P: \bp \le \bq} w_\bp$.

\begin{lemma}\label{lem:rectangles-large-infinity-norm}
    Let $P$ be a point set of size $n$ in $[0, 1]^d$, for which each point is contained in a good cell for at least $\abs{K}/(2^{d+1}e)$ grids.
    For every weight vector $\bw \in \bbR^n$ with $\norm{\bw}_1 = 1$ over the points in $P$, $\norm{B\bw}_\infty = \Omega_d(\log^{(d-1)/2} n/n)$.
\end{lemma}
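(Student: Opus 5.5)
Following the discretized Roth method, I will build a test vector $\bz \in \bbR^N$, indexed by the grid points of $\cG$, and use the duality bound
\[
\abs{\bz^\top B\bw} \le \norm{\bz}_1 \cdot \norm{B\bw}_\infty ,
\]
so it suffices to show $\abs{\bz^\top B \bw} = \Omega_d(\abs{K}^{1/2}/n)$ while $\norm{\bz}_1 = O_d(1)$, or with the two sides rescaled compatibly. The test vector will be a sum $\bz = \sum_{\bk\in K} \bz_\bk$ of Haar-like functions, one per grid $\cG_\bk$. For each cell $\sigma$ of $\cG_\bk$, let $\psi_\sigma$ be the discrete mixed-difference functional on grid points: on each axis it places $+1,-2,+1$ on the three grid points of $\cG$ nearest $a_i+(b_i-a_i)/4$, the cell midpoint, and $a_i+3(b_i-a_i)/4$ (with appropriate spacing), and it is tensored over the $d$ axes. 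Because $(\bq\mapsto B(\bq,\bp))$ is a product of step functions, $\psi_\sigma^\top B e_\bp$ factors as a product of one-dimensional terms. This product vanishes unless $\bp$ lies inside $\sigma$, and it equals a fixed sign $\pm 1$ times a constant whenever $\bp$ lies in a particular orthant of the center subcell. To obtain a nonzero contribution for well-centered points, I will instead use the second-difference version with points at $a_i, (a_i+b_i)/2, b_i$. Then $\psi_\sigma^\top B e_\bp = \prod_i \varepsilon_i(\bp)$, where $\varepsilon_i \in \{\pm1\}$ records which half of $\sigma$ the point lies in along axis $i$, and this value is $0$ outside $\sigma$. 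Each grid point appearing in $\psi_\sigma$ has weight $O_d(1)$, and the total $\ell_1$ mass of $\sum_\sigma \psi_\sigma$ over one grid is $O_d(m)$.

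Next I choose signs. Set $\bz_\bk = \sum_{\sigma \text{ good in } \cG_\bk} s_\sigma \psi_\sigma$, with $s_\sigma = \sign\bigl(w_\bp \prod_i\varepsilon_i(\bp)\bigr)$ for the unique point $\bp$ in $\sigma$. Then $\bz_\bk^\top B\bw = \sum_{\bp \text{ good in } \cG_\bk} \abs{w_\bp}$, since good cells contain a single point and $\psi_\sigma$ sees only points in $\sigma$. Non-good cells get coefficient $0$. Summing over $\bk$ and using \cref{lem:rectangles-good-point-set},
\[
\bz^\top B\bw \ge \sum_\bp \abs{w_\bp}\cdot \frac{\abs{K}}{2^{d+1}e} = \Omega_d(\abs{K}).
\]
The key point for the $\ell_1$ bound is that I will not use the triangle inequality, which would only give $\norm{\bz}_1 \le O_d(m\abs{K})$. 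Instead I will randomize: multiply each $\bz_\bk$ by an independent random sign $\tau_\bk$, so that $\bbE\norm{\sum_\bk \tau_\bk\bz_\bk}_1 \le \sum_\bq \bigl(\sum_\bk \bz_\bk(\bq)^2\bigr)^{1/2}$ by Khintchine.

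The main obstacle is this $\ell_2$ bound, and there is a tension with the sign choice. Random signs $\tau_\bk$ destroy the positive accumulation, because the contributions $\sum_{\bp\text{ good in }\cG_\bk}\abs{w_\bp}$ are all positive. I would therefore resolve it the standard Roth way: compare the $\ell_2$ norm instead. Using $\abs{\bz^\top B\bw}\le \norm{\bz}_2\norm{B\bw}_2$ and $\norm{B\bw}_2\le \sqrt N\norm{B\bw}_\infty$, I need $\norm{\bz}_2^2 = O_d(N\abs{K}/m^{2})$-type orthogonality. Distinct $\bk$ give functionals $\psi_\sigma$ that are (nearly) orthogonal as grid functions, since the Haar functions of different dyadic shapes are orthogonal. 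Each $\bz_\bk$ has squared norm about $N/m^{c}$ after normalizing by grid-point density, so $\norm{\bz}_2 \approx \sqrt{\abs{K}}$ times a per-grid norm. This yields $\norm{B\bw}_\infty \gtrsim \abs{K}/(\sqrt{\abs{K}}\cdot m) = \Omega_d(\log^{(d-1)/2}m/m)$. Since $m=\Theta_d(n)$, this is the claim. The delicate part I expect is making the discrete Haar functionals exactly orthogonal across different $\bk$; to get this, I will align the $\pm$ evaluation points to the dyadic structure at resolution $1/m^2$, which is finer than any $2^{-k_i}/4$ scale. I will then normalize each $\psi_\sigma$ as a discrete integral weighted by $1/m^{2d}$, so that $\norm{\cdot}_2$ matches the continuous Haar computation.
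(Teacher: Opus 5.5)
Your first two steps are correct. The tensored second differences $\psi_\sigma$ at $a_i,(a_i+b_i)/2,b_i$ do satisfy $\psi_\sigma^\top Be_\bp=\prod_i\varepsilon_i(\bp)$, and with your signs $\mathbf{z}^\top B\bw=\Omega_d(\abs{K})$. The gap is the last step, where you transfer the orthogonality of Haar functions to the $\psi_\sigma$. These are point-mass functionals (mixed differences of Haar functions), not Haar functions, and they are not orthogonal across grids. For instance, the cells of $\cG_{\bk_1}$ and $\cG_{\bk_2}$ cornered at the origin give an inner product that factors over coordinates, with each one-dimensional factor equal to $6$, $-1$ or $1$, so it is nonzero.

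More fundamentally, no orthogonality or reweighting of the $\psi_\sigma$ can make this route work, because $\mathbf{z}$ is too sparse. It is supported on $s=O_d(m\abs{K})$ of the $N\ge m^{2d}$ grid points, and $\norm{B\bw}_\infty\le\norm{\bw}_1=1$, so $\mathbf{z}^\top B\bw\le\norm{\mathbf{z}}_1\le\sqrt{s}\,\norm{\mathbf{z}}_2$. Hence the bound extracted from $\mathbf{z}^\top B\bw\le\norm{\mathbf{z}}_2\sqrt{N}\,\norm{B\bw}_\infty$ can never exceed $\sqrt{s/N}=O_d(\sqrt{\abs{K}}\,m^{1/2-d})$. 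This is far below $\sqrt{\abs{K}}/m$ for $d\ge 2$. If you instead restrict Cauchy--Schwarz to the support of $\mathbf{z}$, you only recover the trivial $\Omega_d(1/m)$.

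The missing idea, which is what the paper does, is to test $B\bw$ against the Haar functions themselves rather than their mixed differences. For each good cell of $\cG_\bk$, put the value $\pm1$ (by the parity of the quadrant) on every fine-grid point of its $2^d$ quadrant sub-cells, giving $\bg_\bk\in\{0,\pm1\}^N$. Each cell now occupies $\Theta(N/m)$ grid points rather than $3^d$. These vectors are exactly orthogonal for distinct $\bk$ by the dyadic antisymmetry argument, for any per-cell signs; the paper keeps only positively good cells after assuming $\sum_{w_\bp>0}w_\bp\ge 1/2$. This gives $\norm{\sum_\bk\bg_\bk}_2^2\le N\abs{K}$.

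The price is that the pairing is no longer $\pm1$ but the discrete product of tents $\sum_{\bq\ge\bp}g_{\bk,\bq}$. This is at least $N/(5^d m)$ precisely because $\bp$ is well-centered in its cell. Your argument never uses that hypothesis, which is a symptom of the problem. Cauchy--Schwarz then reads $\Omega_d(N\abs{K}/m)\le\sqrt{N\abs{K}}\cdot\sqrt{N}\,\norm{B\bw}_\infty$, giving $\norm{B\bw}_\infty=\Omega_d(\sqrt{\abs{K}}/m)=\Omega_d(\log^{(d-1)/2}n/n)$.
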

\begin{proof}[Proof of \cref{lem:rectangles-large-infinity-norm}]
    Such a point set $P$ exists by \cref{lem:rectangles-good-point-set} and that $n \le m/(2^{d+1}e)$.
    Because $\norm{B(-\bw)}_\infty = \norm{B\bw}_\infty$, we can assume that
    \begin{equation}\label{eq:rectangles-assumption-positive-weight}
    \sum_{w_\bp>0} w_\bp \ge \frac{1}{2}.
    \end{equation}

    For any $\bk \in K$, we say that a cell $\sigma$ of $\cG_\bk$ is \emph{positively good} if $\sigma$ is good and the point contained in $\sigma$ has a positive weight, i.e., $\sigma$ contains exactly one point $\bp \in P$, which is contained in the center sub-cell of $\sigma$ and $w_\bp > 0$.
    Next, for each grid $\cG_\bk$, we assign a value $g_{\bk, \bq} $ to each grid point $\bq$ of $\cG$, and collect these values into a vector $\bg_\bk \in \bbR^N$, in which the points are ordered in the same order as the rows of $B$.
    The value $g_{\bk, \bq} $ is defined as follows:
    \begin{itemize}
        \item If $\bq$ lies on the boundary of more than one cells or $\bq$ is not contained in any positively good cell, then we set $g_{\bk, \bq} = 0$.
        \item Else, $\bq$ lies in a positively good cell $\sigma$ of $\cG_\bk$, then we divide the cell into $2^d$ sub-cells by cutting $\sigma$ in the middle of each dimension,
        \begin{itemize}
            \item If $\bq$ lies on the boundary of these sub-cells, then we set $g_{\bk, \bq} = 0$.
            \item Else, $\bq$ lies in the interior of a sub-cell $\sigma'$, let the center point of $\sigma$ be $\bq_{\sigma}$ and the center point of $\sigma'$ be $\bq_{\sigma'}$.
            Then, if $\sum_{i=1}^{d} \ind\{q_{\sigma', i} < q_{\sigma, i}\}$ is even, we set $g_{\bk, \bq} = 1$, otherwise we set $g_{\bk, \bq} = -1$.
        \end{itemize}
    \end{itemize}

    An illustration of such a construction is shown in \cref{fig:rec-value-function}.
    We can verify that for any two different grids $\cG_{\bk_1}$ and $\cG_{\bk_2}$, the corresponding value vectors $\bg_{\bk_1}$ and $\bg_{\bk_2}$ are orthogonal, i.e., $\bg_{\bk_1}^\top \bg_{\bk_2} = 0$, see \cref{fig:rec-orthogonality} for an illustration.
    Recall that $\sum_i k_{1,i} = \sum_i k_{2,i} = \log m$, so since $\bk_1 \neq \bk_2$ there is a dimension $i$ with $k_{2,i} > k_{1,i}$. 
    In this dimension a sub-cell $\sigma_1'$ of a (positively good) cell of $\cG_{\bk_1}$ has length $2^{-k_{1,i}-1} \ge 2^{-k_{2,i}}$, an integer multiple of the $\cG_{\bk_2}$-cell
    length, which means $\sigma_1'$ is a union of some whole $\cG_{\bk_2}$-cells in dimension $i$. Fix such a $\sigma_1'$, on which $g_{\bk_1}$ is a constant value, and a $\cG_{\bk_2}$-cell $\sigma_2$ intersecting it. 
    If $\sigma_2$ is not positively good then $g_{\bk_2} \equiv 0$ on $\sigma_1' \cap \sigma_2$; otherwise, $\sigma_2$ is positively good and $g_{\bk_2}$ takes values $1$ and $-1$ on the two halves of $\sigma_2$ in dimension $i$, which are both contained in $\sigma_1'$.
    Hence, these grid points contribute $0$ to $\bg_{\bk_1}^\top \bg_{\bk_2}$. 
    Summing over all sub-cells $\sigma_1'$ of cells of $\cG_{\bk_1}$ and all cells $\sigma_2$ of $\cG_{\bk_2}$ gives $\bg_{\bk_1}^\top \bg_{\bk_2} = 0$.
    
    Let $G$ denote a $N \times |K|$ matrix whose columns are the vectors $\bg_\bk$ for $\bk \in K$ and let $\bu$ be the all-ones vector of length $|K|$, the orthogonality property implies that
    \begin{align}
    \norm{G\bu}_2^2
    = \sum_{\bk \in K} \norm{\bg_\bk}_2^2
    \le \sum_{\bk \in K} N \cdot 1^2
    = N \cdot |K|. \label{eq:rectangles-Gu-upper-bound}
    \end{align}

    \begin{figure}[htbp]
        \centering

        \begin{subfigure}{0.48\textwidth}
            \centering
            \includegraphics[width=\textwidth]{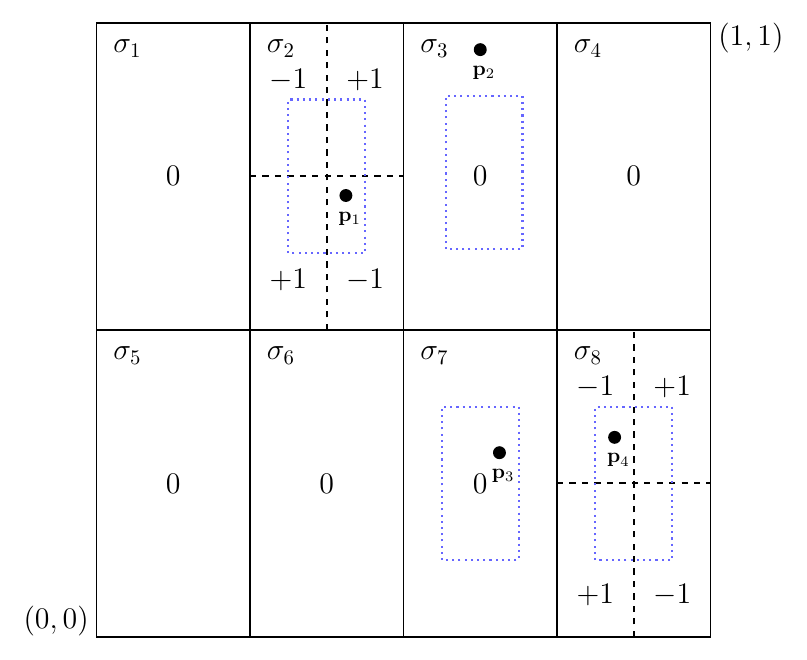}
            \caption{Illustration of a value function $g_\bk$ in the 2D grid $\cG_\bk$ with $\bk = (2, 1)$.
            There are 4 points $\bp_1, \bp_2, \bp_3, \bp_4$ with $w_{\bp_1}, w_{\bp_2}, w_{\bp_4} > 0$ and $w_{\bp_3} < 0$.
            The blue boxes show the center sub-cells. $\sigma_2, \sigma_8$ are positively good cells, and are split into four sub-cells to assign different values. }
            \label{fig:rec-value-function}
        \end{subfigure}
        \hfill
        \begin{subfigure}{0.48\textwidth}
            \centering
            \includegraphics[width=\textwidth]{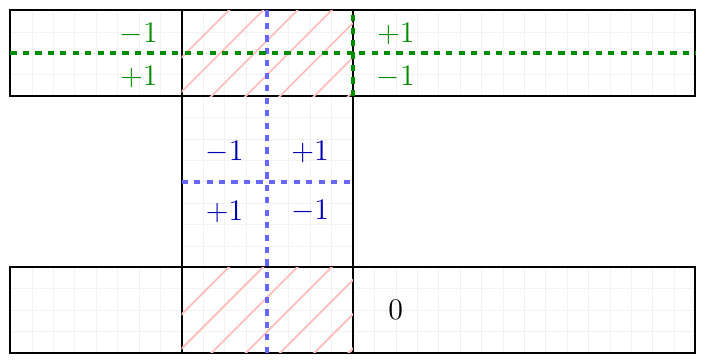}
            \caption{Illustration of the orthogonality property.
            The vertical rectangle represents a positively good cell $\sigma$ of $\cG_{\bk_1}$ where $\bk_1 = (2, 1)$.
            The two horizontal rectangles represent two cells of $\cG_{\bk_2}$ where $\bk_2 = (0, 3)$, one of which is a positively good cell.
            The grid points in the overlap of these cells have contribution 0 to $\bg_{\bk_1}^\top \bg_{\bk_2}$.}
            \label{fig:rec-orthogonality}
        \end{subfigure}

        \caption{Illustrations of the construction.
        The numbers $\{-1, 0, +1\}$ in a cell or sub-cell represent the value of $g_\bk$ at the grid points in that cell or sub-cell.}
        \label{fig:two-figures}
    \end{figure}

    We now consider the product $\bg_\bk^\top B\bw$, which can be written as
    \[
    \bg_\bk^\top B\bw
    = \sum_{\bq} \left(g_{\bk, \bq} \cdot \sum_{\bp \in P: \bp \le \bq} w_\bp\right)
    = \sum_{\bp \in P} \left(w_\bp \cdot \sum_{\bq: \bq \ge \bp} g_{\bk, \bq}\right),
    \]
    By the construction, if $\bp$ is not contained in any positively good cell, then $\sum_{\bq: \bq \ge \bp} g_{\bk, \bq} = 0$.
    Else, if $\bp$ is contained in a positively good cell $\sigma$, the sum of contribution from the grid points outside $\sigma$ is also 0, since the number of grid points outside $\sigma$ that satisfy $\bq \ge \bp$ and have value 1 is the same as the number of grid points outside $\sigma$ that satisfy $\bq \ge \bp$ and have value -1.
    What remains is the contribution from the grid points inside $\sigma$.

    Let $\sigma$ be a positively good cell of $\cG_\bk$ that contains $\bp$, and let $\bq_{\sigma}$ be its center point.
    For each grid point $\bq$ in the interior of $\sigma$, we group it with its $2^d - 1$ reflections (one in each sub-cell, and each reflection $\bq'$ satisfies that on every dimension, the distance from $\bq'$ to $\bq_{\sigma}$ is the same as the distance from $\bq$ to $\bq_{\sigma}$).
    For such a group of $2^d$ grid points, the contribution to $\sum_{\bq: \bq \ge \bp} g_{\bk, \bq}$ is 1 if and only if $\bp$ lies in the $d$-dimensional box defined by these $2^d$ grid points, otherwise the contribution is 0.

    Let $\sigma'$ be the sub-cell of $\sigma$ whose center point is closest to the origin, and divide $\sigma'$ into $2^d$ sub-cells by cutting $\sigma'$ in the middle of each dimension, and let $\sigma''$ be the sub-cell whose center point is closest to the origin.
    Then, the box defined by any grid point $\bq\in \sigma''$ and its reflections contains $\bp$. Thus the contribution from this point group is 1.

    \begin{figure}[htbp]
        \centering
        \includegraphics[width=0.5\textwidth]{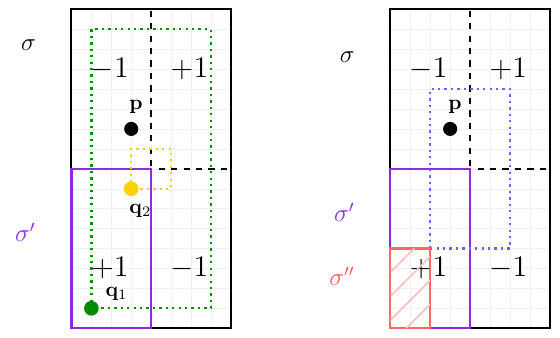}
        \caption{Illustration of the contribution from the grid points inside a positively good cell $\sigma$ to $\sum_{\bq: \bq \ge \bp} g_{\bk, \bq}$ for a point $\bp$ contained in $\sigma$.
        In the left figure, the green (yellow) box represents the box formed by $\bq_1 (\bq_2)$ and its reflections.
        The contribution from the 4 grid points which form the green (yellow) boxes is 1 (0).
        In the right figure, the blue box represents the center sub-cell of $\sigma$.
        For every grid point in the red box, the contribution from it and its reflections is 1.}
    \end{figure}

    Notice that the volume of $\sigma''$ is $1/(2^{2d}m)$, and these grid points are uniformly distributed with distance $1/m^2$ on each dimension.
    Thus, the number of grid points in $\sigma''$ with non-zero value (excluding the grid points on the boundary of $\sigma$) is $m^{2d}/(2^{2d}m) \ge N/(5^d m)$.
    Then the contribution from the point $\bp$ satisfies $w_\bp \cdot \sum_{\bq: \bq \ge \bp} g_{\bk, \bq} \ge N/(5^d m) \cdot w_\bp$.
    Let $\text{PGC}(\cG_\bk)$ denote the subset of $P$ such that every point in $\text{PGC}(\cG_\bk)$ is contained in a positively good cell of $\cG_\bk$, we have
    \[
    \bg_\bk^\top B\bw
    \ge \sum_{\bp\in \text{PGC}(\cG_\bk)} w_\bp \cdot \frac{N}{5^d m}.
    \]
    Since each point in $P$ is contained in a good cell for at least $\abs{K}/(2^{d+1}e)$ grids, it follows that
    \begin{align}
    (G\bu)^\top B\bw
    &= \sum_{\bk \in K} \bg_\bk^\top B\bw \notag\\
    &\ge \sum_{\bk \in K} \sum_{\bp \in \text{PGC}(\cG_\bk)} w_\bp \cdot \frac{N}{5^d m} \notag\\
    &= \frac{N}{5^d m} \cdot \sum_{\bp \in P: w_\bp > 0} \sum_{\bk \in \text{PGC}(\bp)} w_\bp \notag\\
    &\ge \frac{N}{5^d m} \cdot \frac{|K|}{2^{d+1}e} \cdot \sum_{\bp \in P: w_\bp > 0} w_\bp \notag\\
    &\ge \frac{N}{5^d m} \cdot \frac{|K|}{2^{d+1}e} \cdot \frac{1}{2} \label{eq:rectangles-contribution-lower-bound}
    \end{align}
    where $\text{PGC}(\bp)$ denotes the subset of $K$ such that the point $\bp$ is contained in a positively good cell of $\cG_\bk$ for every $\bk \in \text{PGC}(\bp)$, and the last inequality holds because of the assumption in \cref{eq:rectangles-assumption-positive-weight}.
    Combining \cref{eq:rectangles-Gu-upper-bound} and \cref{eq:rectangles-contribution-lower-bound}, we have
    \[
    \frac{N}{5^d m} \cdot \frac{|K|}{2^{d+1}e} \cdot \frac{1}{2}
    \le (G\bu)^\top B\bw
    \le \norm{G\bu}_2 \cdot \norm{B\bw}_2
    \le \sqrt{N \cdot |K|} \cdot \sqrt{N} \cdot \norm{B\bw}_\infty,
    \]
    where the second inequality holds by Cauchy-Schwarz inequality, and the last inequality holds because $\norm{B\bw}_2 \le \sqrt{N} \cdot \norm{B\bw}_\infty$.
    Substituting $|K|= \Omega_d(\log^{d-1} m)$ and $n \le m/(2^{d+1}e) < 2n$ into the inequality, we get $\norm{B\bw}_\infty = \Omega_d\left(\log^{(d-1)/2} n/n\right), $ which concludes the proof.

\end{proof}

With this lemma in hand, we can prove the following lower bound for the $\gamma$-VC dimension of $\R_d$.

\begin{theorem}[\cref{thm:intro-rectangles-lower} restated]
    \label{thm:rectangles-lower-bound}
    For every fixed integer $d \ge 2$ and $\gamma \in (0, 1]$, $\VC_\gamma(\R_d) = \Omega_d(\log^{(d-1)/2}(1/\gamma)/\gamma)$.
\end{theorem}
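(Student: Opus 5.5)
The plan is to combine \cref{lem:rectangles-good-point-set} and \cref{lem:rectangles-large-infinity-norm} with a translation from anchored-box weight to correlation of a rectangle hypothesis, and then choose $n$ as a function of $\gamma$. Fix $n$ and the point set $P\subseteq[0,1]^d$ of size $n$ given by \cref{lem:rectangles-good-point-set}. Take any coloring $c$ and distribution $\cD$ on $P$, and set $w_\bp=\cD(\bp)c(\bp)$, so that $\norm{\bw}_1=1$. By \cref{lem:rectangles-large-infinity-norm} there is a grid point $\bq$ with
\[
\abs*{\sum_{\bp\le\bq} w_\bp}\ge \frac{\alpha_d\log^{(d-1)/2}n}{n}
\]
for some constant $\alpha_d>0$ depending only on $d$.

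The anchored box $R=\prod_i[-1,q_i]$ is an axis-parallel rectangle, and $P\cap R=\{\bp\in P:\bp\le\bq\}$. For this box,
\[
\corr_\cD(h_R)=\sum_{\bp\in R}w_\bp-\sum_{\bp\notin R}w_\bp=2\sum_{\bp\in R}w_\bp-\sum_{\bp\in P}w_\bp .
\]
The total sum $\sum_{\bp\in P}w_\bp$ would spoil this, so I would split into two cases according to $s:=\sum_{\bp\in P}w_\bp$. If $\abs{s}\ge \alpha_d\log^{(d-1)/2}n/n$, use the rectangle containing all of $P$: then $h_R\equiv 1$ on $P$ and $\corr_\cD(\pm h_R)=\pm s$. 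Otherwise $\abs{s}$ is small, and $\abs{2\sum_{R}w_\bp - s}\ge 2\alpha_d\log^{(d-1)/2}n/n-\alpha_d\log^{(d-1)/2}n/n$. In either case some $h\in\{h_R,-h_R\}\subseteq\R_d$ (the class is symmetric) achieves $\corr_\cD(h)\ge \alpha_d\log^{(d-1)/2}n/n$.

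It remains to pick $n$ so that this is at least $\gamma$. Take $n=\lfloor \beta_d\log^{(d-1)/2}(1/\gamma)/\gamma\rfloor$ for a small constant $\beta_d$. For $\gamma$ below a $d$-dependent threshold, $\log n\ge \tfrac12\log(1/\gamma)$, so $\alpha_d\log^{(d-1)/2}n/n\ge \alpha_d 2^{-(d-1)/2}/\beta_d\cdot\gamma\ge\gamma$ once $\beta_d$ is small enough. Hence $P$ is $\gamma$-interpolated and $\VC_\gamma(\R_d)\ge n$. For $\gamma$ above the threshold the claimed bound is $O_d(1)$, and it follows from $\VC_\gamma(\R_d)\ge\VC(\R_d)\ge 1$, after adjusting constants.

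The main obstacle is the subtraction of the total mass $s$ in the signed correlation. The case split above, using the full rectangle as an extra hypothesis, handles it at the cost of a factor of $2$. I would also check that $m$ is well defined relative to $n$ and that $n\ge 2$, so that $\log n$ makes sense; this is routine.
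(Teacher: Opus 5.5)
Your proof is correct and follows the paper's route: take the point set from \cref{lem:rectangles-good-point-set}, invoke \cref{lem:rectangles-large-infinity-norm} to get an anchored box with large signed mass, pass to $\pm h_R$, and choose $n=\Theta_d(\log^{(d-1)/2}(1/\gamma)/\gamma)$. The only difference is in handling the mass outside $R$: where you split into cases on the total mass $s$, the paper takes $\bq$ to be a maximizer of $\abs{(B\bw)_\bq}$ and uses that $(1,\dotsc,1)$ is itself a row of $B$, so the mass outside $R$ cannot share the sign of the mass inside, which gives $\corr_\cD(\pm h_R)\ge\norm{B\bw}_\infty$ directly.
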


\begin{proof}[Proof of \cref{thm:rectangles-lower-bound}]
    Let $P$ be a point set of size $n$ in $[0, 1]^d$ as guaranteed by \cref{lem:rectangles-good-point-set}.
    According to \cref{lem:rectangles-large-infinity-norm}, $\norm{B\bw}_\infty = \Omega_d(\log^{(d-1)/2} n/n)$, which implies that for any distribution $\cD$ over $P$, and any coloring $c\colon P \to \{\pm 1\}$, there exists a $d$-dimensional rectangle $R$ such that
    \[
    \abs*{\sum_{\bp \in R}w_\bp}
    = \norm{B\bw}_\infty
    = \Omega_d\left(\frac{\log^{(d-1)/2} n}{n}\right),
    \]
    and $\sum_{\bp \notin R}w_\bp$ is 0 or has a different sign with $\sum_{\bp \in R}w_\bp$.
    Otherwise, if $\sum_{\bp \notin R}w_\bp$ has the same sign with $\sum_{\bp \in R}w_\bp$, then the unit box $[0, 1]^d$ satisfies
    \[
    \abs*{\sum_{\bp \in P}w_\bp}
    = \abs*{\sum_{\bp \in R}w_\bp + \sum_{\bp \notin R}w_\bp}
    > \abs*{\sum_{\bp \in R}w_\bp},\] contradicting the definition of $\norm{B\bw}_\infty$.
    Therefore, if $\sum_{\bp \in R}w_\bp$ is positive, then $\sum_{\bp \notin R}w_\bp$ is non-positive, then the corresponding hypothesis $h_R$ satisfies
    \[
    \corr_\cD(h_R)
    = \bbE_{\cD}[h_R(\bp) \cdot c(\bp)]
    = \sum_{\bp \in R}w_\bp - \sum_{\bp \notin R}w_\bp
    \ge \sum_{\bp \in R}w_\bp
    = \Omega_d\left(\frac{\log^{(d-1)/2} n}{n}\right),
    \]
    and $-h_R$ satisfies $\corr_\cD(-h_R) = \Omega_d(\log^{(d-1)/2} n/n) $ for the case $\sum_{\bp \in R}w_\bp$ is negative.
    Now, choose $n=\Theta_d( \log^{(d-1)/2}(1/\gamma)/\gamma))$ with a sufficiently small constant. Then the preceding bound implies that either $\corr_\cD(h_R)$ or $\corr_\cD(-h_R)$ is at least $\gamma$. Thus $P$ is $\gamma$-interpolated by $\R_d$, which concludes the proof.
\end{proof}

\section{Weak-to-Strong Learning}
\label{sec:baggedada}
In the worst case over base classes of VC dimension $d$, weak-to-strong learning has a sample complexity of $\Theta\left(d/(\eps\gamma^2) + \log(1/\delta)/\eps\right)$.
A natural question is: if the $\gamma$-VC dimension of the base hypothesis class is $d_\gamma$, can we establish a similar result where the sample complexity depends on $d_\gamma$ instead of $d$?

We answer this question by proving the following result, which shows that combining AdaBoost with bagging~\citep{baggingLarsen} achieves a sample complexity of $O\left((d_{C\gamma} + \log(1/\delta))/\eps\right)$ when we have a $\gamma$-weak learner and the base hypothesis class has $C\gamma$-VC dimension $d_{C\gamma}$ for a sufficiently small constant $C\in (0, 1)$. Besides, we also establish a lower bound of $\Omega\left((d_\gamma + \log(1/\delta))/\eps\right)$

\subsection{Upper bound}

\begin{theorem}[\cref{thm:algo} restated]\label{thm:weak_to_strong-upper_bound}
    There is a universal constant $C \in (0, 1)$ such that the following holds.
    Suppose $\cW$ is a $\gamma$-weak learner with base class $\cH$, and let $d_{C\gamma} = \VC_{C\gamma}(\cH)$.
    Then the {\upshape\textsc{BaggedAdaBoost}} algorithm, using $\cW$, satisfies the following guarantee. For every distribution $\cD$ over $\cX \times \{\pm 1\}$ that is $\gamma$-realizable by $\cH$, and every $\delta \in (0, 1)$, given $n$ i.i.d. samples from $\cD$, the algorithm returns a classifier with error at most
    \[
        O\left(
            \frac{d_{C\gamma} + \log(1/\delta)}{n}
        \right)
    \]
    with probability at least $1 - \delta$. Equivalently, error at most $\eps$ is achieved with $O\left(\frac{d_{C\gamma} + \log(1/\delta)}{\eps}\right)$ samples.
\end{theorem}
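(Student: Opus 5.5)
We follow the framework of~\citet{larsen2022optimal} combined with the bagging analysis of~\citet{baggingLarsen}. As recalled in the overview, that framework reduces the theorem to a single \emph{constant-error margin generalization} statement. We must show that for $m_\delta = O(d_{C\gamma} + \log(1/\delta))$ i.i.d.\ samples $S\sim\cD^{m_\delta}$ from a $\gamma$-realizable $\cD$, the following holds with probability $1-\delta$: every voting classifier $f\in\conv(\cH)$ with $yf(x)\ge c_0\gamma$ for all $(x,y)\in S$ has $\er_\cD(\sign f)\le 1/200$. Here $c_0\gamma$ is the margin AdaBoost reaches on each bootstrap sample $S_t$ after $O(\gamma^{-2}\log n)$ rounds~\citep{boostingbook}. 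Given this, the bagging argument of~\citet{baggingLarsen} runs verbatim with $m_\delta$ in place of $O(\VC(\cH)/\gamma^2+\log(1/\delta))$ and yields error $O(m_\delta/n)$. So we take that reduction as a black box and prove the constant-error statement.

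\textbf{Step 1 (sparsification).} Fix such an $f=\sum_i w_ih_i$. Draw $T=\Theta(1/\gamma^2)$ hypotheses i.i.d.\ from $w$ and let $g$ be their average. For a fixed point $x$ with $yf(x)\ge c_0\gamma$, Hoeffding gives $\Pr[yg(x)< c_0\gamma/2]\le e^{-\Omega(c_0^2\gamma^2 T)}$, which is a small constant $\eta$. Conversely, for points $x$ with $yf(x)\le 0$ (errors of $\sign f$), anti-concentration of the sum of $T$ bounded i.i.d.\ terms with nonpositive mean gives $\Pr[yg(x)\le c_0\gamma/4]\ge 1/4$, say. The key constants are chosen so that both effects hold simultaneously. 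By Markov/averaging there is then a fixed $g$, a vote of $T$ hypotheses, that has margin at least $c_0\gamma/2$ on all but an $O(\eta)$ fraction of $S$, and margin at most $c_0\gamma/4$ on a constant fraction of the $\cD$-mass where $\sign f$ errs.

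\textbf{Step 2 (counting via $\VC_{C\gamma}$).} Set $C=c_0/4$. The key combinatorial claim is a Sauer--Shelah-type bound for the family of ``high-margin patterns''. On any set $Z$ of $M$ points, consider the number of distinct sets of the form $\{x\in Z: yg(x)>C\gamma\}$ over all such $g$, or more precisely the number of sign patterns $Z\to\{\pm1\}$ for which some $g\in\conv(\cH)$ achieves margin $>C\gamma$ on every point. The claim is that this number is at most $\binom{M}{\le d_{C\gamma}}$. To prove it, note that a pattern admits such a $g$ exactly when the labelled sample is $C\gamma$-realizable, by minimax, as discussed after \cref{thm:algo}. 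The family of $C\gamma$-realizable labelings of $Z$ is closed under restriction, so it is a partial concept class whose shattered sets are exactly $C\gamma$-interpolated sets. Hence its VC dimension is at most $d_{C\gamma}$, and Sauer--Shelah applies. This must be handled with care, since we need patterns that are realizable only on a subset with the remaining points free. We therefore count pairs consisting of a subset of $Z$ of size at least $(1-O(\eta))M$ together with a realizable labeling on it. This costs an extra $\binom{M}{O(\eta)M}\le 2^{O(\eta\log(1/\eta))M}$ factor, which is affordable when $\eta$ is a small constant.

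\textbf{Step 3 (double sample).} Use the standard ghost-sample/symmetrization argument with $S,S'$ of size $m$ and a random swap. The bad event is that some $f$ with margin $c_0\gamma$ on $S$ has $\er_\cD(\sign f)>1/200$. Via Step 1, this implies that some $g$ in the finite-pattern family has margin $\ge 2C\gamma$ on $99\%$ of $S$ but $\le C\gamma$ on a $\ge 1/1000$ fraction of $S'$. Chernoff over random swaps bounds the probability for each fixed pattern on $S\cup S'$ by $e^{-\Omega(m)}$. A union bound over the $\binom{2m}{\le d_{C\gamma}}2^{O(\eta\log(1/\eta))m}$ patterns from Step 2 then gives failure probability $\delta$ once $m=O(d_{C\gamma}+\log(1/\delta))$, because $\binom{2m}{\le d}\le (2em/d)^d$ is absorbed when $m\gg d$.

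\textbf{Main obstacle.} I expect the main obstacle to be Step 1 combined with the pattern definition in Step 2. Sparsification must use thresholds $2C\gamma$ versus $C\gamma$ with a gap, so that patterns are determined by realizability and not by exact margins. The anti-concentration bound must hold uniformly for points where $f$ has margin near zero. I would first try Paley--Zygmund or Berry--Esseen on the $T$-sample average, using variance at least a constant because each term is $\pm1$.
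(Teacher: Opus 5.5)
\textbf{Verdict: there is a genuine gap in your Step~3; Steps~1 and~2 are fine in themselves.} Your reduction to the constant-error margin statement via the bagging analysis of \citet{baggingLarsen} is what the paper does. Your Step~2 count (shattered sets of confident sign patterns are $C\gamma$-interpolated, then Sauer--Shelah times a binomial for the free set) is essentially \cref{lem:weak_to_strong-size_hypotheses}.

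\textbf{Where Step~3 fails.} The family you union-bound over only contains votes $g$ with $|g(x)|>C\gamma$ on all but an $O(\eta)$ fraction of the \emph{whole} double sample $S\cup S'$. Step~1 only controls $g$ on $S$. On $S'$ nothing bounds the number of points with $|g(x)|\le C\gamma$.
\begin{itemize}
\item The margin hypothesis on $f$ holds only on $S$. Off $S$, $f$ may be close to $0$ on a large part of $\cD$, and the error region in the bad event is typically of this kind.
\item At such a point $g(x)$ is an average of $T$ nearly fair signs. Its standard deviation is about $1/\sqrt T=\Theta(c_0\gamma/\sqrt{\log(1/\eta)})$, which is at or below your threshold $C\gamma=c_0\gamma/4$.
\item Hence $|g(x)|\le C\gamma$ with probability bounded below by a constant, and no choice of $g$ in Step~1 can prevent this.
\end{itemize}
So the free set on $S'$ can be an arbitrary subset of size $\Theta(m)$. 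The relevant family then has $2^{\Theta(m)}$ members, with a rate the $e^{-\Omega(m/1000)}$ swap bound cannot absorb. The gap thresholds $2C\gamma$ versus $C\gamma$ do not help, because the obstruction is the number of possible free sets, not the exact margin values.

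\textbf{The missing idea.} This is what \citet{larsen2022optimal} supply and the paper reuses via \cref{lem:weak_to_strong-ghost-size_hypotheses}.
\begin{itemize}
\item Compare $0$-$1$ losses of the random sparsification rather than margins: $\cL_S(f,t)=\bbE_g\bigl[\bbP_{S}[yg(x)\le 0]\bigr]$, which on $P$ depends only on $\sign(g)_{|P}$.
\item Place the confidence threshold far \emph{below} the fluctuation scale of $g$, namely $1/t\le\mu\le\rho/(9600\sqrt t)$. The paper takes $\mu=10^{-33}\gamma$.
\item Small-ball anti-concentration then gives $\bbP_g[|g(x)|<\mu]=O(\mu\sqrt t)$ at \emph{every} point, whatever the value $f(x)$. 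So for every fixed $P$, a typical $g$ lies in $\cF_{\mu,\rho}(P)$, including on $S'$.
\item The union bound is then over $\hat\cF_{\mu,\rho}(P)$, whose size is controlled by $\VC_\mu$ with only a $\rho$-fraction free set.
\end{itemize}
This is why the theorem is stated with $\VC_{C\gamma}$ for a tiny universal constant $C$. Your choice $C=c_0/4$, at the noise scale of $g$, is exactly what breaks the argument. The anti-concentration you propose (error points have $yg\le c_0\gamma/4$ with constant probability) is true but is not the one that is needed.
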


\citet{larsen2022optimal} proposed the first weak-to-strong learning algorithm with optimal sample complexity, combining a variant of AdaBoost with Hanneke's subsampling procedure~\citep{hanneke2016optimal}. \citet{baggingLarsen} later showed that the subsampling procedure can be replaced by bagging, yielding the simpler and more practical \textsc{BaggedAdaBoost} algorithm. Both analyses reduce the strong-learning guarantee to the constant-error statement of \citep[Theorem~4]{larsen2022optimal}: it suffices to show that, given $O\!\left(\gamma^{-2}\VC(\cH)+\log(1/\delta)\right)$ samples, every voting classifier in $\sign(\conv(\cH))$ that achieves margin at least $\gamma$ on every sample has generalization error at most $1/200$ under the underlying distribution. Our main technical contribution is to show that the dependence on $\gamma^{-2}\VC(\cH)$ in this statement can be replaced by $\VC_{C\gamma}(\cH)$ for a universal constant $C\in(0,1)$. 

We therefore focus on establishing this refined constant-error guarantee. \cref{thm:weak_to_strong-upper_bound} follows from the bagging analysis of~\citet{baggingLarsen}, replacing their Corollary~15 with \cref{thm:weak_to_strong-constant_error} below.

\begin{theorem}\label{thm:weak_to_strong-constant_error}
    Let $\gamma \in (0, 1]$.
    Let $\cH \subseteq \{\pm 1\}^\cX$ be a hypothesis class with ${C \gamma}$-VC dimension $d_{C \gamma} \ge 1$ where $C\in (0,1)$ is a constant.
    Let $\cD$ be an arbitrary distribution over $\cX \times \{\pm 1\}$.
    Let $S \sim \cD^n$ be a sample of size $n$.
    If $n \ge C_0 \cdot (d_{C \cdot\gamma} + \log(1/\delta))$ for a universal constant $C_0$, then with probability at least $1 - \delta$ over the choice of $S$, any voting classifier $\hat{f}(x) = \sign(f(x))$ with $f \in \conv(\cH)$ that satisfies $yf(x) \ge \gamma$ for all $(x, y) \in S$ also satisfies
    \[
    \cL_{\cD}(\hat{f})
    = \bbP_{x \sim \cD}[\hat{f}(x) \neq y]
    \le \frac{1}{200}
    .
    \]
\end{theorem}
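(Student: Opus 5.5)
We would follow the outline already sketched in the technical overview: random sparsification of large-margin voters, followed by a double-sample / Sauer--Shelah counting argument in which the role of the VC dimension is played by $d_{C\gamma}$. Fix $f=\sum_i w_i h_i\in\conv(\cH)$ with $yf(x)\ge\gamma$ on all of $S$. Draw $t$ hypotheses $h_{j_1},\dots,h_{j_t}$ i.i.d.\ according to $w$ and set $g=\frac1t\sum_{s}h_{j_s}$.

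The first step is to choose $t$ suitably. We take $t=\Theta(\gamma^{-2})$ with a large constant, so that Hoeffding gives $yg(x)\ge\gamma/2$ for all but a small constant fraction of the points of $S$ in expectation. We then apply Markov over the sparsification to get a fixed $g$ for which, say, a $1-1/1000$ fraction of $S$ has $yg(x)\ge\gamma/2$.

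Conversely, on any point where $\hat f(x)\ne y$ we have $yf(x)\le 0$. Anti-concentration is needed here only in the weak form that $\Pr[yg(x)\le 0]\ge c$ for some constant $c$. This follows from the mean being $\le0$ and the sum being a bounded i.i.d.\ average; Paley--Zygmund or a CLT-type bound handles the case of a large variance, and a mean $\le 0$ with tiny variance is even easier. Consequently, if $\cL_\cD(\hat f)>1/200$, then in expectation over the sparsification $g$ errs (has $yg\le 0$) on at least a $c/200$ fraction of $\cD$. So it suffices to show the following: with probability $1-\delta$ over $S$, every $g\in\cG_t$ (averages of $t$ hypotheses) with margin $\ge\gamma/2$ on a $0.999$ fraction of $S$ has $\Pr_\cD[yg(x)\le 0]<c/400$. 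Combining this with the two Markov steps yields a contradiction.

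To prove this uniform statement we use symmetrization with a ghost sample $S'$ of size $n$. We need to bound the probability that some $g$ has margin $\ge\gamma/2$ on most of $S$ but $yg\le0$ on a $c/800$ fraction of $S'$. Conditioned on $S\cup S'=T$ of size $2n$, we count the relevant sign patterns. The key claim, and the place where $\VC_{C\gamma}$ enters, is a Sauer--Shelah analogue: the number of distinct patterns $\bigl(\ind\{yg(x)\ge\gamma/2\}\bigr)_{x\in T}$, restricted to the ``large-margin'' behaviour that matters, is at most $2^{O(d_{C\gamma})}\cdot\binom{2n}{\le \epsilon n}$ for small constant $\epsilon$.

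We intend to obtain this via the partial-concept framework. The partial class of $C\gamma$-margin functions induced by $\conv(\cH)$ has VC dimension $\le d_{C\gamma}$, by the minimax theorem (a $C\gamma$-margin voter on every labeling of $A$ is equivalent to $A$ being $C\gamma$-interpolated). Its shatter function admits a Sauer--Shelah-type bound once we allow disregarding an $\epsilon$-fraction of points: we would use the known bound for partial classes (Alon et al.) with a disambiguation of size $(2n)^{O(d_{C\gamma})}$, or, to avoid the $\log n$, a chaining/packing argument. The loss of a $\log n$ factor here is the main obstacle. We would first try to circumvent it by working with a constant-fraction error target (only $1/200$ is needed) and a random restriction to $O(d_{C\gamma})$-sized subsamples, where the $\log$ is only of a constant.

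With $C\le 1/2$ and the margin threshold handled, the standard permutation argument then gives the failure probability $2^{O(d_{C\gamma})}e^{-\Omega(n)}\le\delta$ for $n\ge C_0(d_{C\gamma}+\log(1/\delta))$.
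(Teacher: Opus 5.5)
Your outline (sparsify, symmetrize, count sign patterns with $\VC_{C\gamma}(\cH)$ in place of $\VC(\cH)$) has the same architecture as the paper's proof. The steps you actually carry out are fine. Hoeffding on $S$ plays the role of Lemma~\ref{lem:weak_to_strong-small_empirical_error}. The weak anti-concentration is even easier than you suggest: if $yf(x)\le 0$, then $\sum_s y h_{j_s}(x)=2B-t$ with $B$ binomial of parameter at most $1/2$, so $\Pr[yg(x)\le 0]\ge 1/2$. There is, however, a genuine gap at exactly the step you call the main obstacle, and it is made worse by fixing $g$ deterministically before symmetrizing.

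\textbf{First gap: the counting lemma.} The counting lemma is never established, and the routes you suggest (partial-class disambiguation, chaining, random restriction) are speculative and unnecessary. The paper's proof (Lemma~\ref{lem:weak_to_strong-size_hypotheses}) is elementary.
\begin{enumerate}
\item Enumerate the set $P'$ of at most $2\rho n$ points where $|f|<\mu$, at a cost of $\binom{2n}{\le 2\rho n}\le (e/\rho)^{2\rho n}$.
\item For fixed $P'$, every $f$ in the family has $|f(x)|\ge\mu$ on all of $P\setminus P'$. So its sign patterns form an ordinary total class, and any subset of $P\setminus P'$ it shatters is $\mu$-interpolated: if $\sign(f)=c$ with $|f|\ge\mu$ there, then $\corr_\cQ(f)\ge\mu$ for every $\cQ$, and some $h\in\cH$ does at least as well.
\item Hence the VC dimension on $P$ is at most $d_\mu+2\rho n$. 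Sauer--Shelah with this \emph{inflated} bound gives $(2en/(d_\mu+2\rho n))^{d_\mu+2\rho n}\le (e/\rho)^{d_\mu+2\rho n}$.
\end{enumerate}
The logarithm is only $\log(e/\rho)$, a constant, because the dimension bound is at least $2\rho n$. No $\log n$ appears, and no disambiguation of partial classes is needed.

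\textbf{Second gap: no control of $|g|$ on the ghost sample.} This is the more serious problem. Any such count needs the functions to satisfy $|g(x)|\ge C\gamma$ on all but a small fraction of the \emph{whole} double sample $S\cup S'$, and your reduction gives no control on $S'$.
\begin{itemize}
\item You fix a deterministic $g$ by Markov, and then need uniform convergence over deterministic $g\in\cG_t$ with margin $\gamma/2$ on most of $S$.
\item In the bad event, such a $g$ may be within $O(1/t)=O(\gamma^2)$ of zero on a constant fraction of $S'$. Your factor $\binom{2n}{\le\epsilon n}$ does not cover those points.
\item An honest count there would involve margins of order $\gamma^2$ (or the usual $t\,\VC(\cH)\log n$), not $d_{C\gamma}$.
\end{itemize}
The paper avoids this by keeping $g$ random. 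It compares the averaged losses $\cL_S(f,t)$ and $\cL_{S'}(f,t)$ (imported Lemmas~\ref{lem:weak_to_strong-ghost_sample} and~\ref{lem:weak_to_strong-ghost-size_hypotheses}). It then uses anti-concentration of the thinned average of roughly $t/2$ signs: for \emph{every} $f\in\conv(\cH)$, a typical $g\sim\cD_{f,t}$ has $|g(x)|<\mu$ on at most a $\rho$-fraction of $P$. So typical $g$ lie in $\cF_{\mu,\rho}(P)$ on all of $P$, including ghost points where $f$ is unconstrained.

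This anti-concentration only holds below scale $1/\sqrt t$. That is what the hypotheses $t\ge 9600^2\rho^{-2}$ and $\mu\le\rho/(9600\sqrt t)$ encode, and why the paper ends with $\mu=10^{-33}\gamma$. At your scale $\gamma/2$, or $C\gamma$ with any moderate constant $C\le 1/2$, a ghost point with $f(x)\approx 0$ has $|g(x)|<C\gamma$ with probability close to $1$, because $g(x)$ has standard deviation about $1/\sqrt t\ll\gamma$.

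A minor point: your constants also need adjusting. The fraction of sub-threshold points allowed on $S$ must be much smaller than the error fraction demanded on $S'$. Otherwise a fixed set split evenly between $S$ and $S'$ already triggers the event with non-negligible probability.
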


The proof of \cref{thm:weak_to_strong-constant_error} follows that of~\citet[Theorem 4]{larsen2022optimal}, and the key difference is that we use the $C \gamma$-VC dimension instead of the VC dimension to bound the size of a certain set of classifiers, as shown in \cref{lem:weak_to_strong-size_hypotheses}.
We first introduce some notations.

Since $f \in \conv(\cH)$, we can write $f$ as a convex combination of some hypotheses in $\cH$, i.e., $f(x) = \sum_h w_h h(x)$ where $\sum_{h} w_h = 1$ and $w_h \ge 0$ for all $h \in \cH$.
Let $\cD_f$ be the distribution over $\cH$ where $h$ has probability $w_h$.
Draw $t$ i.i.d. hypotheses $h_1, \dotsc, h_t$ from $\cD_f$ and then throw away each $h_i$ independently with probability $1/2$.
Let $h'_1, \dotsc, h'_{t'}$ be the remaining hypotheses, and let $g = (1/t') \sum_{i=1}^{t'} h'_i$.
Denote the distribution of $g$ by $\cD_{f, t}$, and define the expected error of $g$ over its internal randomness as $\cL_\cD(f, t) = \bbP_{(x, y) \sim \cD, g \sim \cD_{f, t}}[yg(x) \le 0]$, and the empirical error of $g$ over $S$ as $\cL_S(f, t) = \bbP_{(x, y) \in S, g \sim \cD_{f, t}}[yg(x) \le 0]$.
Besides, let $\cL_\cD(f) = \bbP_{(x, y) \sim \cD}[yf(x) \le 0]$.

We recall that if $t$ is sufficiently large, then $\cL_\cD(f)$ can be approximated by $\cL_\cD(f, t)$, and if $f$ has margin at least $\gamma$ on $S$, then $\cL_S(f, t)$ is small.
The following lemmas establish these properties.

\begin{lemma}[{\citealp[Lemma 3]{larsen2022optimal}}]\label{lem:weak_to_strong-approximation}
    Let $\cH \subseteq \{\pm 1\}^\cX$ be a hypothesis class, and $\cD$ be an arbitrary distribution over $\cX \times \{\pm 1\}$.
    For any $f \in \conv(\cH)$ and any $t \ge 36$, we have $\cL_\cD(f) \le 3\cL_\cD(f, t)$.
\end{lemma}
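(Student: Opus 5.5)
Fix $(x,y)$ and $f=\sum_h w_h h$ with $yf(x)\le 0$. It suffices to show that, conditional on this $x$, the random sparsified vote $g\sim\cD_{f,t}$ satisfies $\bbP_g[yg(x)\le 0]\ge 1/3$ for every $t\ge 36$. Integrating over $(x,y)\sim\cD$ then gives $\cL_\cD(f,t)\ge \tfrac13\cL_\cD(f)$, which is the claim. The factor $3$ leaves room for a constant-probability event, so I will aim for a crude anti-concentration bound rather than a sharp one.

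The steps in order are as follows. Write $Z_i = y h_i(x)\in\{\pm1\}$ for the $t$ i.i.d.\ draws from $\cD_f$. Then $\bbE[Z_i]=yf(x)=:\mu\le 0$. Let $B_i\in\{0,1\}$ be the independent fair coins deciding whether $h_i$ is kept. Then $yg(x)\le 0$ iff $\sum_i B_iZ_i\le 0$; when $t'=0$, $g$ is identically $0$ and the event holds trivially. So I need $\bbP[\sum_{i=1}^t B_iZ_i\le 0]\ge 1/3$. The variables $X_i=B_iZ_i\in\{-1,0,1\}$ are i.i.d.\ with mean $\mu/2\le 0$.

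\textbf{Symmetry reduction.} First I would reduce to the worst case $\mu=0$. One route is a coupling: if $\bbP[Z_i=1]=p\le 1/2$, realize $Z_i$ by first drawing $Z_i^0$ uniform on $\{\pm1\}$ and then flipping some $+1$'s to $-1$. This makes $\sum B_iZ_i\le \sum B_iZ_i^0$ pointwise, so it suffices to treat $Z_i$ uniform on $\{\pm1\}$. In that case each $X_i$ is symmetric on $\{-1,0,1\}$ with $\bbP[X_i=0]=1/2$. Hence $S=\sum X_i$ is symmetric, and $\bbP[S\le 0]=\tfrac12(1+\bbP[S=0])\ge \tfrac12\ge\tfrac13$.

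This seems to make the bound trivially $1/2$, so the only delicate point is the coupling argument and the $t'=0$ convention. The hypothesis $t\ge 36$ would then be unused. Perhaps the cited lemma defines $\cL_\cD(f,t)$ with a strict margin or threshold, or perhaps the factor $3$ arises from a Chernoff or Paley--Zygmund estimate on $t'$. As a fallback, I would condition on $t'$ being in $[t/4,3t/4]$, which holds with probability $\ge 1-2e^{-t/8}$, large for $t\ge36$. Given $t'$, the kept hypotheses are $t'$ i.i.d.\ draws with mean $\mu\le0$, and the same symmetric-coupling argument gives conditional probability $\ge 1/2$. So the overall probability is $\ge \tfrac12(1-2e^{-4.5})\ge 1/3$.

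I expect the main obstacle to be making the comparison with the mean-zero case rigorous: monotone coupling of $\pm1$ variables with biased versus unbiased coins, then symmetry of the sum under the mean-zero law. I would write this coupling explicitly via a uniform variable $U_i$, setting $Z_i^0=\sign(1/2-U_i)$ and $Z_i=\sign(p-U_i)$.
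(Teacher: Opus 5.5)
Your proof is correct and complete under the definitions in this paper. The paper does not prove this lemma; it imports it verbatim from Larsen and Ritzert, so there is no in-paper argument to match.

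Your route is fully self-contained. It has three steps:
\begin{itemize}
\item a pointwise reduction by Fubini, valid because $g\sim\cD_{f,t}$ is drawn independently of $(x,y)$;
\item the monotone coupling $Z_i=\sign(p-U_i)\le \sign(1/2-U_i)=Z_i^0$, with $p\le 1/2$ and $U_i$ independent of the keep-coins $B_i$;
\item the symmetry of $\sum_i B_iZ_i^0$.
\end{itemize}
Together these give $\bbP_g[yg(x)\le 0]\ge 1/2$ whenever $yf(x)\le 0$. This proves the stronger statement $\cL_\cD(f)\le 2\,\cL_\cD(f,t)$ for every $t\ge 1$, with no anti-concentration or Chernoff estimate needed.

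The only convention-dependent point is $t'=0$. Setting $g\equiv 0$ counts this case as an error, which is consistent with the non-strict inequality $yg(x)\le 0$ in the definition of $\cL_\cD(f,t)$. Under any other convention you lose at most $\bbP[t'=0]=2^{-t}$, so $t\ge 3$ already suffices for the factor $3$.

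So, under these definitions, the hypothesis $t\ge 36$ and the constant $3$ are not needed for this statement. This is harmless for the paper, which only invokes the lemma with $t=10^{34}\gamma^{-2}$. Your fallback of conditioning on $t'\in[t/4,3t/4]$ is valid but unnecessary.
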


\begin{lemma}[{\citealp[Lemma 4]{larsen2022optimal}}]\label{lem:weak_to_strong-small_empirical_error}
    Let $\gamma \in (0, 1]$.
    Let $S$ be a sample of size $n$ in $\cX \times \{\pm 1\}$, and $f \in \conv(\cH)$ satisfies $yf(x) \ge \gamma$ for all $(x, y) \in S$.
    If $t \ge 1024 \gamma^{-2}$, we have $\cL_S(f, t) \le 1/1200$.
\end{lemma}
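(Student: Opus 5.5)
The plan is to prove the bound pointwise: show that for every fixed $(x,y)\in S$ we have $\bbP_{g\sim\cD_{f,t}}[yg(x)\le 0]\le 1/1200$, and then average over the (uniform) empirical distribution on $S$. Since $\cL_S(f,t)$ is exactly the average over $(x,y)\in S$ of this probability, the pointwise bound suffices. If $t'=0$, the vote $g$ is undefined. I will treat this case pessimistically as an error and charge it to the event that $t'$ is small.

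Fix $(x,y)\in S$. The number of retained hypotheses satisfies $t'\sim\mathrm{Bin}(t,1/2)$. The thinning is independent of the draws, so conditioned on $t'=s$, the retained hypotheses $h'_1,\dotsc,h'_s$ are i.i.d.\ from $\cD_f$. Hence the variables $Z_i\coloneqq y h'_i(x)\in\{\pm1\}$ are i.i.d.\ with mean $\sum_h w_h\, y h(x)=yf(x)\ge\gamma$. The event $yg(x)\le 0$ is the event $\sum_{i=1}^s Z_i\le 0$, i.e.\ the empirical mean falls below its expectation by at least $\gamma$. Hoeffding's inequality (range $2$) gives
\[
\bbP\Bigl[\textstyle\sum_{i=1}^s Z_i\le 0 \,\Bigm|\, t'=s\Bigr]\le \exp(-s\gamma^2/2).
\]

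Next I split on whether $t'\ge t/4$. By a multiplicative Chernoff bound, $\bbP[t'<t/4]\le \exp(-t/16)$, and this case also covers $t'=0$. On the complementary event we have $s\ge t/4\ge 256\gamma^{-2}$, so the conditional error probability is at most $\exp(-128)$. Therefore, for every $(x,y)\in S$,
\[
\bbP_{g}[yg(x)\le 0]\le \exp(-t/16)+\exp(-128).
\]
Since $\gamma\le 1$, we have $t\ge 1024$, so this is at most $\exp(-64)+\exp(-128)$, which is far below $1/1200$. Averaging over $S$ yields $\cL_S(f,t)\le 1/1200$.

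No step is a real obstacle; the constants are very generous. The only points needing care are the following. First, the thinning must be independent of the draws, so that the conditional i.i.d.\ structure with mean $yf(x)$ holds given $t'$. Second, the case $t'=0$ (where $g$ is undefined) must be absorbed into the small-$t'$ event.
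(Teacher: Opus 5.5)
Your proof is correct. Conditioning on $t'=s$ leaves the retained hypotheses i.i.d.\ from the weight distribution of $f$, because the retention coins are independent of the draws. Both constants check out: Hoeffding gives $\exp(-s\gamma^2/2)$ for the conditional error, and Chernoff gives $\exp(-t/16)$ for $\{t'<t/4\}$. Counting $t'=0$ as an error is the pessimistic convention. So your bound $e^{-64}+e^{-128}$ gives $\cL_S(f,t)\le 1/1200$.

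The paper does not prove this lemma; it imports it as a black box from Larsen and Ritzert, so there is no in-paper argument to compare with. Your per-point concentration argument is the natural one.

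If you want to skip the conditioning step, let $b_i\in\{0,1\}$ be the independent retention indicators. For $t'\ge 1$, the event $yg(x)\le 0$ is exactly the event $\sum_{i=1}^t b_i\,y h_i(x)\le 0$, and this event also contains $t'=0$. The $t$ summands are i.i.d.\ in $[-1,1]$ with mean $yf(x)/2\ge\gamma/2$. A single application of Hoeffding then gives $\exp(-t\gamma^2/8)\le e^{-128}$ directly, with no case split on $t'$.
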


Thus, the key step is to show that $\cL_\cD(f, t)$ can be approximated by $\cL_S(f, t)$ for all $f \in \conv(\cH)$.
\cref{lem:weak_to_strong-uniform_convergence} establishes this uniform convergence result.

\begin{lemma}\label{lem:weak_to_strong-uniform_convergence}
    Let $\gamma \in (0, 1]$ and $t = 10^{34}\gamma^{-2}$.
    Let $\cH \subseteq \{\pm 1\}^\cX$ be a hypothesis class with $C\gamma$-VC dimension $d_{C \gamma} \ge 1$ where $C\in (0,1)$ is a constant.
    Let $\cD$ be an arbitrary distribution over $\cX \times \{\pm 1\}$.
    Let $S \sim \cD^n$ be a sample of size $n$.
    If $n \ge C_0 \cdot (d_{C\gamma} + \log(1/\delta))$ for a universal constant $C_0$, then we have
    \[
    \bbP_{S \sim \cD^n}\left[\sup_{f \in \conv(\cH)}|\cL_\cD(f, t) - \cL_S(f, t)| > \frac{1}{1200}\right]
    < \delta
    .
    \]
\end{lemma}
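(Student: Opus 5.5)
The plan is to follow the symmetrization argument behind \citet[Theorem~4]{larsen2022optimal}. The one change is the counting step: the VC dimension there is replaced by $d_{C\gamma}$.

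\textbf{Step 1: reduce the loss to a function of the margin.} For fixed $(x,y)$, the values $y h'_i(x)$ of the retained hypotheses are i.i.d.\ $\pm 1$ variables with mean $yf(x)$. The number $t'$ of retained hypotheses is $\mathrm{Bin}(t,1/2)$ and independent of these values. Hence
\[
\bbP_{g\sim\cD_{f,t}}[yg(x)\le 0]=\phi_t\bigl(yf(x)\bigr)
\]
for a fixed function $\phi_t\colon[-1,1]\to[0,1]$ that does not depend on $f$. So $\cL_\cD(f,t)=\bbE_\cD[\phi_t(yf(x))]$ and $\cL_S(f,t)$ is its empirical counterpart. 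We need uniform convergence for the class $\{(x,y)\mapsto\phi_t(yf(x)) : f\in\conv(\cH)\}$.

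Two properties of $\phi_t$ drive the rest. With $t=10^{34}\gamma^{-2}$, Hoeffding gives $\phi_t(v)\le 10^{-6}$ for $v\ge K\gamma$ and $\phi_t(v)\ge 1-10^{-6}$ for $v\le -K\gamma$, for a small absolute constant $K$. In the window $|v|<K\gamma$, the random thinning (the Binomial $t'$) smooths $\phi_t$, so it is Lipschitz with constant $O(1/\gamma)$ there. This smoothing is the anti-concentration ingredient: shifting $v$ by $C\gamma$ moves $\phi_t$ by only $O(C)$.

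\textbf{Step 2: symmetrize and quantize.} Take a ghost sample $S'\sim\cD^n$ and random swaps, so it suffices to control the deviation conditioned on the $2n$ points of $S\cup S'$. On these points, replace $yf(x)$ by its rounding to the grid $C\gamma\bbZ\cap[-K\gamma,K\gamma]$, with $O(K/C)=O(1)$ levels and saturation outside the window. By the Lipschitz bound, this changes every $\phi_t$ value by at most $O(C)\le 1/10^4$ once $C$ is chosen small. It therefore suffices to bound the number $M$ of distinct quantized patterns that $\conv(\cH)$ induces on the $2n$ points, in the form $\log M=O(d_{C\gamma}+n/10^{8})$. Hoeffding over the swaps plus a union bound over these $M$ patterns then gives deviation $\le 1/1200$ with probability $1-\delta$ when $n\ge C_0(d_{C\gamma}+\log(1/\delta))$.

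\textbf{Step 3: the counting, which is the main obstacle.} A quantized pattern is determined by the $O(1)$ threshold patterns, i.e.\ for each grid level $\theta$, which points satisfy $yf(x)\ge\theta$. Points within distance $C\gamma$ of $\theta$ are ambiguous anyway, since by the Lipschitz bound we may round them either way at cost $O(C)$. So we only need to count partial patterns on points whose margin from $\theta$ is at least $C\gamma$.

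The tool is minimax. If a set $A$ is shattered by $\conv(\cH)$ with margin $C\gamma$ around $0$, then for each labeling $c$ some $f$ satisfies $c(x)f(x)\ge C\gamma$ on $A$. Averaging over any distribution $\cQ$ then gives some $h\in\cH$ with correlation at least $C\gamma$. Hence $A$ is $C\gamma$-interpolated and $|A|\le d_{C\gamma}$.

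Two things are delicate. First, nonzero thresholds $\theta$ do not directly give interpolation. I would handle this by restricting to thresholds $\theta$ with $|\theta|\le\gamma/2$ and absorbing the $\theta\,\bbE[c]$ term: either use symmetry of $\cH$, or shrink to margin $C\gamma$ after replacing $C$ by a smaller constant so that the shift is dominated. Second, and more seriously, Sauer--Shelah fails for partial classes in general, so the bound "margin-VC $\le d_{C\gamma}$" does not by itself bound $M$. My first attempt here is to avoid counting partial patterns at all. Instead I would take an approximating family: sparsify $f$ by sampling $s=O(1/(C\gamma)^2)$ hypotheses. By concentration, all but a $10^{-8}$ fraction of the $2n$ points keep their quantized level. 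Then count only the disambiguated total patterns on the remaining points, using a shifting (Sauer--Shelah) argument applied to the total class of signs of margin-$C\gamma$ separable labelings restricted to that subset. A tolerance of a $10^{-8}$ fraction of wrong points costs only $2^{O(H(10^{-8})n)}$ extra patterns, which is absorbed into the $n/10^8$ term. Making this last counting step rigorous, namely that the number of relevant patterns is $(2n)^{O(d_{C\gamma})}2^{o(n)}$ rather than $(2n)^{O(d/\gamma^2)}$, is where I expect the real work to be.
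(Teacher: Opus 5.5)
There is a genuine gap, and it is where you expected it: Step 3 is not carried out, and the mechanism you sketch for it does not work. Steps 1--2 are fine as far as they go.

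\textbf{Why your Step 3 fails.} Concentration of a sparsification around $f$ cannot guarantee that all but a $10^{-8}$ fraction of the $2n$ points keep their quantized level. Nothing prevents $yf(x)$ from lying within $o(C\gamma)$ of a grid level (or of $0$) for a constant fraction, or even all, of the points. Each such point then flips level with constant probability. So the ambiguous set is not small, and you are back to the partial-class disambiguation problem you correctly flagged. The multi-threshold quantization adds a second problem: turning a margin around $\theta\neq 0$ into $C\gamma$-interpolation needs $\cH$ to be symmetric, which the statement does not assume.

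\textbf{The missing idea.} Do not collapse the sparsified voter into the deterministic $\phi_t$ in Step 1. The randomness of $g\sim\cD_{f,t}$ is exactly what makes the ambiguous set small. The paper, through \citet[Lemma~8]{larsen2022optimal}, writes $\cL_S(f,t)$ as an average over $g$ of (essentially) the $0$-$1$ loss of $\sign(g)$. It then uses anti-concentration of $g(x)$. Since $\mu\sqrt t\le\rho/9600$ and $1/\sqrt t\le\rho/9600$, the probability $\bbP_g[|g(x)|<\mu]$ is a small fraction of $\rho$ for every $x$. By Markov, most draws of $g$ therefore lie in $\cF_{\mu,\rho}(P)$, i.e.\ have $|g|<\mu$ on at most a $\rho$ fraction of $P$. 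The union bound is then taken only over sign patterns, at threshold $0$ alone, of $\cF_{\mu,\rho}(P)$.

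\textbf{The counting.} Enumerate the exceptional set $P'$ \emph{before} applying Sauer--Shelah; there are $\binom{2n}{\le 2\rho n}$ choices. Fix $P'$. The set $\hat\cF_{\mu,P'}(P)$ of sign patterns on all of $P$ of those $f\in\conv(\cH)$ with $|f|\ge\mu$ on $P\setminus P'$ is a total class. Suppose $T$ is shattered by it. Then every labeling of $T\setminus P'$ is realized by some $f$ with margin at least $\mu$. Averaging over any distribution shows $T\setminus P'$ is $\mu$-interpolated, so $|T|\le d_\mu+|P'|$. Sauer--Shelah then gives $|\hat\cF_{\mu,\rho}(P)|\le (e/\rho)^{d_\mu+4\rho n}$. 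With $\rho=10^{-12}$ and $\mu=10^{-33}\gamma$, the $4\rho n\log(e/\rho)$ term is absorbed by $2n/9600^2$.

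Fixing $P'$ first is what turns the partial class into a total one, and anti-concentration of $g$ is what makes $|P'|$ small. Your outline has neither step in working form.
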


Before proving \cref{lem:weak_to_strong-uniform_convergence}, we first show how to use it together with the above lemmas to prove \cref{thm:weak_to_strong-constant_error}.

\begin{proof}[Proof of \cref{thm:weak_to_strong-constant_error}]
    Setting $t = 10^{34}\gamma^{-2}$, by the above lemmas, we have with probability at least $1 - \delta$ over the choice of $S$,
    \begin{align*}
    \cL_\cD(\hat{f})
    \le \cL_\cD(f)
    \le 3\cL_\cD(f, t)
    \le 3 \cdot (\cL_S(f, t) + \frac{1}{1200})
    \le 3 \cdot (\frac{1}{1200} + \frac{1}{1200})
    = \frac{1}{200}
    ,
    \end{align*}
    where the four inequalities follows respectively from the definition of $\cL_\cD(f)$, \cref{lem:weak_to_strong-approximation}, \cref{lem:weak_to_strong-uniform_convergence}, and \cref{lem:weak_to_strong-small_empirical_error}.
    This completes the proof.
\end{proof}

\paragraph{Relating $\cL_S(f, t)$ to $\cL_\cD(f, t)$.}
What remains is to prove \cref{lem:weak_to_strong-uniform_convergence}, namely relating $\cL_S(f, t)$ to $\cL_\cD(f, t)$.
We first introduce a ghost sample $S'$ that is independent of $S$ and drawn i.i.d. from $\cD^n$.
Then by a standard symmetrization argument, we can relate the uniform convergence of $\cL_S(f, t)$ to $\cL_\cD(f, t)$ to the uniform convergence of $\cL_S(f, t)$ to $\cL_{S'}(f, t)$.

\begin{lemma}[{\citealp[Lemma 6]{larsen2022optimal}}]\label{lem:weak_to_strong-ghost_sample}
    Let $\cD$ be a distribution over $\cX \times \{\pm 1\}$.
    Let $S, S'$ be two independent samples of size $n$ drawn i.i.d. from $\cD$.
    For any integer $t \ge 1$ and any $f \in \conv(\cH)$, if $n \ge 2400^2$, we have
    \[
    \bbP_{S \sim \cD^n}\left[\sup_{f \in \conv(\cH)}|\cL_S(f, t) - \cL_\cD(f, t)| > \frac{1}{1200}\right]
    \le 2\cdot \bbP_{S, S' \sim \cD^n}\left[\sup_{f \in \conv(\cH)}|\cL_S(f, t) - \cL_{S'}(f, t)| > \frac{1}{2400}\right]
    .
    \]
\end{lemma}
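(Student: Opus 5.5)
This is the standard symmetrization (ghost-sample) argument, with the randomized classifier $g\sim\cD_{f,t}$ averaged inside both empirical losses. Write $\Phi(S)=\sup_{f\in\conv(\cH)}|\cL_S(f,t)-\cL_\cD(f,t)|$, and define the bad event $A=\{\Phi(S)>1/1200\}$. On $A$, fix (measurably, or via an $\eta$-approximate supremum) a witness $f_S\in\conv(\cH)$ depending only on $S$, with $|\cL_S(f_S,t)-\cL_\cD(f_S,t)|>1/1200$.

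The key step is to condition on $S$ and use the independence of $S'$. Since $f_S$ is then fixed, the quantity
\[
\cL_{S'}(f_S,t)=\frac1n\sum_{(x,y)\in S'} \bbP_{g\sim\cD_{f_S,t}}[yg(x)\le 0]
\]
is an average of $n$ i.i.d.\ $[0,1]$-valued random variables with mean $\cL_\cD(f_S,t)$. Its variance is at most $1/(4n)$. Chebyshev then gives
\[
\bbP_{S'}\bigl[|\cL_{S'}(f_S,t)-\cL_\cD(f_S,t)|>1/2400\bigr]\le \frac{2400^2}{4n}\le \frac14 \le \frac12
\]
once $n\ge 2400^2$. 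So with conditional probability at least $1/2$, the triangle inequality yields
\[
|\cL_S(f_S,t)-\cL_{S'}(f_S,t)|>\frac1{1200}-\frac1{2400}=\frac1{2400},
\]
and hence the supremum over $f$ of $|\cL_S(f,t)-\cL_{S'}(f,t)|$ exceeds $1/2400$.

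Taking expectation over $S$ gives
\[
\bbP_{S,S'}\Bigl[\sup_f|\cL_S-\cL_{S'}|>\tfrac1{2400}\Bigr]\ge \tfrac12\,\bbP_S[A],
\]
which rearranges to the claimed factor of $2$.

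The only delicate point is measurability, together with the requirement that the witness $f_S$ not depend on $S'$. To handle it, I would choose $f_S$ as a $\eta$-near-maximizer depending only on $S$ and then let $\eta\to0$, or restrict to the countable dense subfamily of rational convex combinations. I expect no other obstacle: the randomization in $g$ only enters through conditional means, which are deterministic functions of each sample point, so the argument is identical to the classical one.
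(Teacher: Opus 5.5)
Your proof is correct and is the standard ghost-sample symmetrization behind \citet[Lemma 6]{larsen2022optimal}, which the paper cites without reproducing: you condition on $S$, apply Chebyshev to the i.i.d.\ average $\cL_{S'}(f_S,t)$ (variance at most $1/(4n)$, so a deviation above $1/2400$ has probability at most $2400^2/(4n)\le 1/4$), and use the triangle inequality to obtain the factor $2$. As a minor remark, the $\eta$-near-maximizer is unnecessary, because the supremum strictly exceeding $1/1200$ already guarantees an exact witness $f_S$.
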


To bound the right-hand side, we let $P$ be a sample sequence of size $2n$ drawn i.i.d. from $\cD^{2n}$, and then draw a subsequence $S$ of size $n$ uniformly at random from $P$ without replacement, and let $S'$ be the remaining subsequence.
Then $S$ and $S'$ have the same distribution as two independent samples drawn i.i.d. from $\cD^n$.
From now on, we view $S$ and $S'$ as drawn from $P$ in this way.

For any sample sequence $P$ in the support of $\cD^{2n}$, define $\cF_{\mu, \rho}(P)$ as the set of all $f \in \conv(\cH)$ for which $\bbP_{(x, y) \sim P}[\abs{f(x)}<\mu]\leq \rho$.
And define $\hat{\cF}_{\mu, \rho}(P) = \{\sign(f)_{|P}: f \in \cF_{\mu, \rho}(P)\}$ to be the set of restrictions to $P$ of the classifiers induced by $\cF_{\mu, \rho}(P)$. Thus, $\hat{\cF}_{\mu, \rho}(P)$ has a finite size.
The following lemma shows that the uniform convergence of $\cL_S(f, t)$ to $\cL_{S'}(f, t)$ can be related to the size of $\hat{\cF}_{\mu, \rho}(P)$.

\begin{lemma}[{\citealp[Lemma 8]{larsen2022optimal}}]\label{lem:weak_to_strong-ghost-size_hypotheses}
    Let $\rho \in (0, 1)$ and $t \ge 9600^{2}\rho^{-2}$.
    For any data sequence $P$ in the support of $\cD^{2n}$, if $1/t \le \mu \le \rho/(9600\sqrt{t})$, we have
    \[
    \bbP_{S, S' \sim \cD^n}\left[\sup_{f \in \conv(\cH)}|\cL_S(f, t) - \cL_{S'}(f, t)| > \frac{1}{2400}\right]
    \le \sup_{P \in \supp(\cD^{2n})} 2 \cdot |\hat{\cF}_{\mu, \rho}(P)| \cdot \exp(-2n/9600^2)
    .
    \]
\end{lemma}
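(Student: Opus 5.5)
The plan is to fix the double sample $P$ and work only over the randomness of the split into $S,S'$. I would replace the continuum of $f \in \conv(\cH)$ by the finite set of sign patterns $\hat{\cF}_{\mu,\rho}(P)$, and then apply a union bound together with a concentration inequality for a fixed pattern. Since $S,S'$ arise as a uniformly random split of $P \sim \cD^{2n}$, it suffices to bound the conditional probability, given $P$, by $2|\hat{\cF}_{\mu,\rho}(P)|\exp(-2n/9600^2)$ and then take the supremum over $P$.

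\emph{Step 1: reduce to a random $g$.} I write $\cL_S(f,t) = \bbE_{g\sim\cD_{f,t}}[\cL_S(g)]$, where $\cL_S(g)$ is the fraction of $(x,y)\in S$ with $yg(x)\le 0$, and similarly for $S'$. Each realized $g$ is itself in $\conv(\cH)$. I call $g$ \emph{good} if $\bbP_{(x,y)\sim P}[\abs{g(x)}<\mu]\le\rho$, i.e.\ $g\in\cF_{\mu,\rho}(P)$. Then
\[
\abs{\cL_S(f,t)-\cL_{S'}(f,t)} \le \bbE_g\bigl[\abs{\cL_S(g)-\cL_{S'}(g)}\,\mathbf{1}\{g \text{ good}\}\bigr] + 2\,\bbP_g[g \text{ not good}] .
\]

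\emph{Step 2 (main obstacle): anti-concentration from the random discarding.} I need $\bbP_g[g\text{ not good}]$ to be tiny, say at most $1/19200$, uniformly in $f$. By Markov's inequality over the points of $P$, it suffices to show that $\bbP_g[\abs{g(x)}<\mu]\le \rho/C'$ for every fixed $x$, for a suitable constant $C'$. Fix $x$ and condition on the draws $h_1,\dots,h_t$. The sum $\sum_i \xi_i h_i(x)$, where $\xi_i\in\{0,1\}$ are the independent fair keep/discard coins, is a Bernoulli sum with $t$ terms of size $\pm1$. By Erdős--Littlewood--Offord (or Berry--Esseen), any interval of length $O(\mu t)$ in which it may fall has probability $O(\mu t/\sqrt t + 1/\sqrt t)=O(\mu\sqrt t)$. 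This uses $\mu\ge 1/t$ to absorb the additive term, and concentration of $t'$ around $t/2$ to translate $\abs{g(x)}<\mu$ into such an interval. The hypothesis $\mu\le\rho/(9600\sqrt t)$ then gives $\bbP[\abs{g(x)}<\mu]\le O(\rho/9600)$. The condition $t\ge 9600^2\rho^{-2}$ covers the lower-order terms and the rare event that $t'$ is far from $t/2$. Checking that the constants line up is where I expect most of the care to be needed.

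\emph{Step 3: finite class and concentration over the split.} For good $g$, the quantity $\cL_S(g)-\cL_{S'}(g)$ depends only on $\sign(g)_{|P}\in\hat{\cF}_{\mu,\rho}(P)$, with ties $g(x)=0$ counted as errors. I fix a convention for the sign at $0$ so that the map is well defined. Hence
\[
\sup_f\abs{\cL_S(f,t)-\cL_{S'}(f,t)} \le \max_{v\in\hat{\cF}_{\mu,\rho}(P)}\abs{\cL_S(v)-\cL_{S'}(v)} + \tfrac{1}{9600}.
\]
The event in the lemma therefore forces some $v$ to have discrepancy above $1/2400-1/9600$ between the two halves. For a fixed $v$, the error count on $S$ under a uniformly random split is hypergeometric. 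Hoeffding's inequality for sampling without replacement then gives tail probability at most $2\exp(-2n\epsilon^2)$ with $\epsilon$ of order $1/9600$, which is at most $2\exp(-2n/9600^2)$ after adjusting the constants in Step 2. A union bound over $\hat{\cF}_{\mu,\rho}(P)$, followed by the supremum over $P$, completes the argument.
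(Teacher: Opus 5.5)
Your architecture is the right one: condition on $P$, get anticoncentration of $\sum_i \xi_i h_i(x)$ from the random discarding, use Markov over $P$ to place $g$ in $\cF_{\mu,\rho}(P)$ with high probability, then take a union bound over $\hat{\cF}_{\mu,\rho}(P)$ with Hoeffding for the random split. The paper imports this lemma from Larsen--Ritzert without proof, and this is the argument the discarding in $\cD_{f,t}$ is built for. But the lemma has explicit constants, and your bookkeeping does not reach them.

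First, the factor $2$ in Step 1 is lossy. Since $\cL_S(g),\cL_{S'}(g)\in[0,1]$, a bad $g$ contributes at most $\bbP_g[g\text{ not good}]$. More seriously, Markov's inequality cannot give the target $\bbP_g[g\text{ not good}]\le 1/19200$. Nor can you ``adjust the constants in Step 2'', because the range of $\mu$ is fixed by the hypothesis. Markov would need $\bbP_g[|g(x)|<\mu]\le\rho/19200$, i.e.\ at most $\frac{1}{2}\mu\sqrt t$ when $\mu=\rho/(9600\sqrt t)$. Now take $t=9600^2\rho^{-2}$, which forces $\mu=1/t$, and a point where $h(x)$ is a fair coin under $\cD_f$. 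Then $\bbP_g[g(x)=0]\approx 1/\sqrt{\pi t}\approx 0.56\,\mu\sqrt t$. Your factor $2$ actually forces the weaker requirement $\bbP_g[g\text{ not good}]\le 1/9600$. Even that is not certified by the Littlewood--Offord count, which gives only about $1.2\,\mu\sqrt t$ when $\mu t$ is slightly above $2$ (three admissible values of the sum).

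The fix is to split the budget as $1/2400=1/4800+1/4800$. Suppose $\bbP_g[g\text{ not good}]\le 1/4800$. Then the event in the lemma forces some $v\in\hat{\cF}_{\mu,\rho}(P)$ with $|\cL_S(v)-\cL_{S'}(v)|>1/4800$. Since $\cL_P=(\cL_S+\cL_{S'})/2$, this means $|\cL_S(v)-\cL_P(v)|>1/9600$, and Hoeffding without replacement gives exactly $2\exp(-2n/9600^2)$ per pattern.

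It therefore suffices to show $\bbP_g[|g(x)|<\mu]\le\rho/4800$ for every $x$.
\begin{itemize}
\item Condition on the $h_i$ and let $b=\#\{i:h_i(x)=-1\}$. Then $D=\sum_i\xi_ih_i(x)$ satisfies $D+b\sim\mathrm{Bin}(t,1/2)$, so each value of $D$ has probability at most $\sqrt{2/(\pi t)}$.
\item On $\{t'\le t/2+s\sqrt t\}$, the event $|g(x)|<\mu$ forces $|D|<\mu t/2+\mu s\sqrt t$. Using $\mu t\ge 1$, this allows at most $2\mu t+2\mu s\sqrt t$ values, for a total of about $1.6\,\mu\sqrt t\le 1.6\,\rho/9600$.
\item Since $\rho$ may be arbitrarily small (the paper later uses $\rho=10^{-12}$), you must take $s\approx\sqrt{\ln(1/\rho)}$. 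Then $\bbP[t'>t/2+s\sqrt t]\le e^{-2s^2}$ is a small multiple of $\rho$, and the extra width $\mu s\sqrt t\le s\rho/9600$ is negligible.
\end{itemize}
The total stays below $\rho/4800$. The concentration of $t'$ is genuinely needed: using only $t'\le t$ gives $(2\mu t+1)\sqrt{2/(\pi t)}\approx 2.4\,\mu\sqrt t$, which does not certify $\rho/4800$.
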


The last step is to bound the size of $\hat{\cF}_{\mu, \rho}(P)$ by the $C\gamma$-VC dimension of $\cH$, instead of the VC dimension of $\cH$ as in~\citet{larsen2022optimal}.

\begin{lemma}\label{lem:weak_to_strong-size_hypotheses}
    Let $\mu \in (0, 1]$ and $\rho \in (0, 1)$.
    Let $\cH$ be a hypothesis class with $\mu$-VC dimension $d_\mu$.
    For any data sequence $P = ((x_1, y_1), \dotsc, (x_{2n}, y_{2n}))$ in the support of $\cD^{2n}$, we have $|\hat{\cF}_{\mu, \rho}(P)| \le (e/\rho)^{d_\mu+4\rho n}$.
\end{lemma}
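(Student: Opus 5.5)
We bound $|\hat{\cF}_{\mu,\rho}(P)|$ by splitting each labeling in it into a ``low-margin'' part and a ``high-margin'' part. Fix $f\in\cF_{\mu,\rho}(P)$. By definition, the set $L_f=\{i\in[2n]: |f(x_i)|<\mu\}$ has size at most $2\rho n$. On the complement $P\setminus L_f$, the sign pattern $c=\sign(f)$ is realized with margin $\mu$ by a convex combination of $\cH$: for every $i\notin L_f$ we have $c(x_i)f(x_i)\ge\mu$. Averaging over any distribution $\cQ$ on these points therefore gives $\bbE_{\cQ}[c\cdot f]\ge\mu$, so some $h\in\cH$ has correlation at least $\mu$. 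In other words, the labeled sample $(x_i,c(x_i))_{i\notin L_f}$ is $\mu$-realizable. Hence every element of $\hat{\cF}_{\mu,\rho}(P)$ is obtained by choosing a set $L$ of at most $2\rho n$ indices, an arbitrary labeling on $L$, and a labeling on $[2n]\setminus L$ that is $\mu$-realizable w.r.t.\ $\cH$.

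The key step is a Sauer--Shelah type lemma for $\mu$-realizable labelings. Let $Q$ be a finite point set (repetitions in $P$ can be merged, since equal $x$'s get equal $\sign(f)$; if the same $x$ appears with both labels, only labelings consistent with a single function matter). Let $\cC_\mu(Q)$ be the family of labelings $c\colon Q\to\{\pm1\}$ such that $(x,c(x))_{x\in Q}$ is $\mu$-realizable. This family is hereditary in the following sense: if $c$ is $\mu$-realizable on $Q$, then its restriction to any $Q'\subseteq Q$ is $\mu$-realizable, because distributions on $Q'$ are distributions on $Q$. Consequently, any set $T\subseteq Q$ shattered by $\cC_\mu(Q)$ has every labeling realizable with margin $\mu$, so $T$ is $\mu$-interpolated and $|T|\le d_\mu$. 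The classical Sauer--Shelah lemma, applied to $\cC_\mu(Q)$ viewed as a set of binary vectors, gives $|\cC_\mu(Q)|\le\sum_{j\le d_\mu}\binom{|Q|}{j}\le (e|Q|/d_\mu)^{d_\mu}$. I expect this to be the main conceptual point, and it is cleaner than working with $\cH$ directly, since $\cH$ itself may have huge VC dimension relative to $d_\mu$ only in the trivial sense $\VC\le d_\mu$.

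Counting then gives
\[
|\hat{\cF}_{\mu,\rho}(P)|\le \sum_{\ell\le 2\rho n}\binom{2n}{\ell}2^{\ell}\cdot\left(\frac{2en}{d_\mu}\right)^{d_\mu}.
\]
The first factor is at most $(2e\cdot 2n/(2\rho n))^{2\rho n}=(2e/\rho)^{2\rho n}\le (e/\rho)^{4\rho n}$ for $\rho<1$, after adjusting constants; $(2e/\rho)^2\le (e/\rho)^4$ holds iff $4\le e^2/\rho^2$, which is true. The second factor $(2en/d_\mu)^{d_\mu}$ is the real obstacle: it grows with $n$ rather than being $(e/\rho)^{d_\mu}$. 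To fix this, I would refine the argument by applying Sauer--Shelah not on all of $Q$ but in a way that charges the $\log n$ to the $\rho n$ budget. If $d_\mu\ge\rho n$, write $(2en/d_\mu)^{d_\mu}\le(2e/\rho)^{d_\mu}$, which is fine up to the constant in the base. If $d_\mu<\rho n$, use the bound $\sum_{j\le d_\mu}\binom{m}{j}\le\sum_{j\le \rho m}\binom{m}{j}\le(e/\rho)^{\rho m}$ with $m=2n$, which is absorbed into $(e/\rho)^{4\rho n}$-type terms. Combining the two cases, and tracking constants in the exponent (splitting $4\rho n$ as $2\rho n$ for the low-margin choice plus $2\rho n$ for this case), should yield $(e/\rho)^{d_\mu+4\rho n}$, possibly after minor constant reshuffling that I would verify at the end.
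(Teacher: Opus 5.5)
Your structure matches the paper's: fix the low-margin set $P'$ (your $L$) of size at most $2\rho n$, show that any set of high-margin points shattered by the surviving sign patterns is $\mu$-interpolated (your hereditary argument for $\mu$-realizable labelings is exactly this step), and finish with Sauer--Shelah. The gap is in the final count. As written it does not give $(e/\rho)^{d_\mu+4\rho n}$, and the proposed ``reshuffling'' cannot repair it.

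You pay separately for the free labels on $L$, so your first factor is $\binom{2n}{\le 2\rho n}2^{2\rho n}\le(2e/\rho)^{2\rho n}$. This already uses up the entire $(e/\rho)^{4\rho n}$ budget, and it does not fit into $(e/\rho)^{2\rho n}$. So the split ``$2\rho n$ for the low-margin choice plus $2\rho n$ for the other case'' is not available. Your two cases then give the following:
\begin{itemize}
\item for $d_\mu<\rho n$, a bound of $(e/\rho)^{6\rho n}$;
\item for $d_\mu\ge\rho n$, a bound of $(e/\rho)^{4\rho n}(2e/\rho)^{d_\mu}$, which carries an extra $2^{d_\mu}$.
\end{itemize}
Even with the sharper threshold $2\rho n$, a case split of this kind ends at $4^{\rho n}(e/\rho)^{d_\mu+4\rho n}$. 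That weaker bound would still suffice downstream in the uniform-convergence lemma after adjusting constants, but it is not the stated inequality.

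The missing idea is the one the paper uses: do not split off the labels on $P'$. Instead, bound the VC dimension of the whole restricted class $\hat{\cF}_{\mu,P'}(P)$ on all $2n$ points by $D=d_\mu+2\rho n$. This counts at most $d_\mu$ shattered points outside $P'$, plus at most $2\rho n$ points in $P'$. Because $D\ge 2\rho n$, Sauer--Shelah gives $(2en/D)^D\le(e/\rho)^{D}$ with no case analysis. The additive $2\rho n$ in $D$ is exactly what removes the $\log n$ you were fighting.

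Multiplying by $\binom{2n}{\le 2\rho n}\le(e/\rho)^{2\rho n}$ for the choice of $P'$ gives exactly $(e/\rho)^{d_\mu+4\rho n}$. If $D\ge 2n$, use the trivial bound $2^{2n}\le(e/\rho)^{D}$ instead. In your own notation, the bridge is the Vandermonde inequality $2^{\ell}\sum_{j\le d_\mu}\binom{2n-\ell}{j}\le\sum_{j\le d_\mu+\ell}\binom{2n}{j}$. It shows that your factorized count is dominated by the paper's single Sauer--Shelah count, so applying the standard estimate to the right-hand side closes your argument.
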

\begin{proof}[Proof of \cref{lem:weak_to_strong-size_hypotheses}]
    By definition of $\cF_{\mu, \rho}(P)$, for any $f \in \cF_{\mu, \rho}(P)$, we have at most $\rho$ fraction of the points in $P$ that satisfy $|f(x)| < \mu$.
    Thus, we can split $\cF_{\mu, \rho}(P)$ into at most $\binom{2n}{\le 2\rho n}$ subsets, where each subset corresponds to a specific choice of at most $2\rho n$ data points in $P$.
    In particular, for any $P' \subseteq P: |P'| \le 2\rho n$, we define
    \[
    \cF_{\mu, P'}(P)
    = \{f \in \conv(\cH): |f(x)| < \mu \text{ for all } (x, y) \in P', \text{ and } |f(x)| \ge \mu \text{ for all } (x, y) \in P\setminus P'\}
    ,
    \]
    and $\hat{\cF}_{\mu, P'}(P) = \{\sign(f)_{|P}: f \in \cF_{\mu, P'}(P)\}$ accordingly.
    Then we have $\hat{\cF}_{\mu, \rho}(P) = \cup_{P' \subseteq P: |P'| \le 2\rho n} \hat{\cF}_{\mu, P'}(P)$.

    Fix any $P' \subseteq P$ with $|P'| \le 2\rho n$.
    We analyse the VC dimension of $\hat{\cF}_{\mu, P'}(P)$, and use Sauer-Shelah-Perles lemma to bound its size.
    % We first study the maximum size of a subsequence of $P\setminus P'$ that can be shattered by $\hat{\cF}_{\mu, P'}(P)$.
    Let $S = (x_1, \dotsc, x_{|S|})$ be a subsequence of input points in $P\setminus P'$ that is shattered by $\hat{\cF}_{\mu, P'}(P)$.
    Then fix any binary labeling $c$, there exists a convex combination $f \in \cF_{\mu, P'}(P)$ whose corresponding voting classifier $\hat{f} \in \hat{\cF}_{\mu, P'}(P)$ realizes this labeling, i.e., $\hat{f}(x) = c(x)$ for all $(x, c(x)) \in S$.
    The definition of $\cF_{\mu, P'}(P)$ implies that $|f(x)| \ge \mu$ for all $(x, y) \in P\setminus P'$, and thus for any distribution $\cQ$ over $S$, we have
    \begin{align*}
    \corr_\cQ(f)
    &= \bbE_{x \sim \cQ}[f(x)c(x)]\\
    &= \bbE_{x \sim \cQ}[\hat{f}(x)c(x) \cdot |f(x)|]\\
    &\ge \mu \cdot \bbE_{x \sim \cQ}[\hat{f}(x)c(x)]\\
    &= \mu
    .
    \end{align*}
    Since $f$ is a convex combination of some hypotheses in $\cH$, we have $\mu \le \corr_\cQ(f) \le \max_{h \in \cH} \corr_\cQ(h)$.
    Thus, for any distribution and any labeling of $S$, there exists a hypothesis in $\cH$ that has correlation at least $\mu$ with the labeling.
    This implies that $S$ is $\mu$-interpolated by $\cH$.
    By definition of $\mu$-VC dimension, we have $|S| \le \VC_\mu(\cH) = d_\mu$.

    Therefore, the VC dimension of $\hat{\cF}_{\mu, P'}(P)$ is at most $\VC(\hat{\cF}_{\mu, P'}(P)) \le d_\mu + 2\rho n$, where the additional $2\rho n$ term is due to the fact that there are at most $2\rho n$ data points in $P'$ that can be shattered additionally. When $d_\mu + 2\rho n\geq 2n$, we trivially have $\abs*{\hat{\cF}_{\mu, \rho}(P)} \le 2^{2n}\leq (e/\rho)^{d_\mu+4\rho n}$.
    Otherwise, by Sauer-Shelah-Perles lemma, we have
    \begin{align*}
        \abs*{\hat{\cF}_{\mu, \rho}(P)}
        &=\abs*{ \cup_{P' \subseteq P: |P'| \le 2\rho n} \hat{\cF}_{\mu, P'}(P)}\\
        &\le \binom{2n}{\le 2\rho n} \cdot \sup_{P' \subseteq P: |P'| \le 2\rho n} |\hat{\cF}_{\mu, P'}(P)|\\
        &\le \left(\frac{2en}{2\rho n}\right)^{2\rho n} \cdot \left(\frac{2en}{d_\mu + 2\rho n}\right)^{d_\mu+2\rho n}\\
        &\le (e/\rho)^{d_\mu+4\rho n}
        .
    \end{align*}
\end{proof}

With these lemmas in hand, we are ready to prove \cref{lem:weak_to_strong-uniform_convergence}.
\begin{proof}[Proof of \cref{lem:weak_to_strong-uniform_convergence}]
    When $n \ge 2400^2$, $t \ge 9600^2\rho^{-2}$, and $1/t \le \mu \le \rho/(9600\sqrt{t})$, these lemmas together imply that
    \begin{align*}
    &\bbP_{S \sim \cD^{n}}\left[\sup_{f \in \conv(\cH)}|\cL_S(f, t) - \cL_\cD(f, t)| > \frac{1}{1200}\right]\\
    \text{(\cref{lem:weak_to_strong-ghost_sample})}
    &\le 2\cdot \bbP_{S, S' \sim \cD^{n}}\left[\sup_{f \in \conv(\cH)}|\cL_S(f, t) - \cL_{S'}(f, t)| > \frac{1}{2400}\right]\\
    \text{(\cref{lem:weak_to_strong-ghost-size_hypotheses})}
    &\le \sup_{P \in \supp(\cD^{2n})} 4 \cdot |\hat{\cF}_{\mu, \rho}(P)| \cdot \exp(-2n/9600^2)\\
    \text{(\cref{lem:weak_to_strong-size_hypotheses})}
    &\le 4 \cdot (e/\rho)^{d_\mu+4\rho n} \cdot \exp(-2n/9600^2)
    .
    \end{align*}

    Setting $\rho = 10^{-12}, t = 10^{10}\rho^{-2}\gamma^{-2} = 10^{34}\gamma^{-2}, \mu = \rho/(10000\sqrt{t}) = 10^{-33}\gamma$, we have
    \begin{align*}
    4 \cdot (e/\rho)^{d_\mu+4\rho n} \cdot \exp(-2n/9600^2)
    &\le \exp(2 + \log(e/\rho)(d_\mu + 4\rho n) - 2n/9600^2)\\
    &\le \exp(2 + 42d_\mu - 10^{-8}n    )
    .
    \end{align*}
    Setting $n \ge C_0 \cdot (d_\mu + \log(1/\delta))$ for a sufficiently large universal constant $C_0 \ge 10^{10}$, we have $\exp(2 + 42d_\mu - 10^{-8}n) \le \delta$.
    This completes the proof with $C = 10^{-33}$.
\end{proof}

\subsection{Lower bound} \label{sec:weak_to_strong-lower_bound}

In this section, we show that the sample complexity of weak-to-strong learning with a $\gamma$-weak learner is at least $\Omega\left((d_{\gamma} + \log(1/\delta))/\eps\right)$.
The proof follows the standard lower bound argument for realizable PAC learning due to~\citet{blumer1989learnability}.

\begin{theorem}\label{thm:weak_to_strong-lower_bound}
    Let $\delta \in (0, 1/14)$, $\eps \in (0, 1/16)$ and $\gamma \in (0, 1]$.
    Let $\cH \subseteq \{\pm 1\}^\cX$ have $\gammaVC(\cH) = d_\gamma \ge 2$, and let $\cA$ be a weak-to-strong learning algorithm that uses a $\gamma$-weak learner with base hypothesis class $\cH$.
    Let $n$ be an integer with $n\leq \max\{\frac{d_\gamma}{128\eps},\frac{\log(1/2\delta)}{4\eps}\}$.

    Then there exists a distribution $\cD$ over $\cX$ and a target $c:\cX\to \{\pm 1\}$ such that the distribution $(x,c(x))$, $x\sim \cD$, is $\gamma$-realizable by $\cH$, and let $S \sim \cD^n$ be a sample of size $n$, then
    \[
    \bbP[\cL_\cD(\cA(S))>\eps]\geq \delta.
    \]
\end{theorem}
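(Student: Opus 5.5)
We reduce to the classical lower bound for realizable PAC learning of the full class $\{\pm1\}^{P}$, where $P=\{x_1,\dots,x_{d_\gamma}\}$ is a set $\gamma$-interpolated by $\cH$. The key observation is the one from the introduction: for every target $c\colon \cX\to\{\pm1\}$ and every distribution $\cD$ supported on $P$, the labeled distribution $(x,c(x))$ is $\gamma$-realizable by $\cH$. Indeed, any i.i.d. sample from it is a multiset of points of $P$ labeled by $c$, and any distribution $\cQ$ over that sample pushes forward to a distribution over $P$. By $\gamma$-interpolation there is then some $h\in\cH$ with $\corr_\cQ(h)\ge\gamma$. Hence a $\gamma$-weak learner exists for every such instance. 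Fixing an arbitrary such weak learner (for instance one that deterministically returns a maximizing $h$ given the weighted sample), the whole procedure $\cA$ becomes a (possibly randomized) function of $S$ alone. So it suffices to show that no learner, given $n$ samples, can learn all targets on $P$ under the chosen distributions. We handle the two terms of the maximum separately.

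\emph{The $d_\gamma/\eps$ term.} Assume $n\le d_\gamma/(128\eps)$. Use the Ehrenfeucht--Haussler--Kearns--Valiant distribution: put mass $1-8\eps$ on $x_1$ and spread $8\eps$ uniformly over $x_2,\dots,x_{d_\gamma}$. Draw the target $c$ uniformly at random from $\{\pm1\}^P$, with $c(x_1)=1$ fixed. The expected number of draws landing in $P\setminus\{x_1\}$ is $8\eps n\le d_\gamma/16$. By Markov's inequality, with probability at least $1/2$ at most $(d_\gamma-1)/2$ of these points are seen. Conditional on this, the unseen points carry at least half of the $8\eps$ mass, and their labels are independent fair coins given $S$. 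So the expected error is at least $2\eps$, and since the error is bounded by $8\eps$, a reverse-Markov (Paley--Zygmund type) argument gives $\bbP[\text{error}>\eps]\ge c_0$ for a constant $c_0$. Averaging over the random $c$, some fixed target achieves this probability, and $c_0\ge 1/14>\delta$ once the constants are tracked.

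\emph{The $\log(1/\delta)/\eps$ term.} Assume $n\le \log(1/(2\delta))/(4\eps)$. Use two points $x_1,x_2\in P$ (this is where $d_\gamma\ge2$ is needed), with mass $1-2\eps$ on $x_1$ and $2\eps$ on $x_2$, and two targets that agree on $x_1$ and differ on $x_2$. With probability $(1-2\eps)^n\ge e^{-4\eps n}\ge 2\delta$ (using $\eps<1/16$), $x_2$ never appears in $S$. On that event the output distribution of $\cA$ is the same under both targets, so for one of them the predictor errs on $x_2$, incurring error $2\eps>\eps$, with probability at least $1/2$. Hence the failure probability is at least $\delta$ for one of the two targets.

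I expect the main obstacle to be the constant bookkeeping in the first case, so that the success probability of the adversary actually exceeds $\delta$ for all $\delta<1/14$. There is also some care needed in formally justifying that an arbitrary $\gamma$-weak learner may be fixed adversarially, and that realizability of every i.i.d. sample holds even when the sample contains repeated points.
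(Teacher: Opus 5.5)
Your proposal is correct and follows essentially the same route as the paper. Both use a Blumer et al.-style case split. In the first case, mass $1-8\eps$ sits on $x_1$ and the target is random with $c(x_1)=1$. In the second, there are two points and two targets, with $(1-2\eps)^n\ge e^{-4\eps n}\ge 2\delta$. In both, $\cA(S)$ is independent of the labels of unseen points.

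One bookkeeping detail: the reverse-Markov step must be applied to the error restricted to $x_2,\dots,x_{d_\gamma}$ (the paper's $\cL'_\cD$, which is at most $8\eps$), not to the total error. It must also be applied either conditionally on your Markov event, or using the bound $3/4$ that Markov actually gives for your threshold. Either way you get at least $1/14>\delta$.
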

\begin{proof}[Proof of \cref{thm:weak_to_strong-lower_bound}]
    We handle the two terms in the maximum separately. In both cases, the distribution $\cD$ is supported on a set $\cX'$ with $|\cX'| \leq d_\gamma$ that is $\gamma$-interpolated by $\cH$. Thus, any labeled distribution over $\cX'$ is $\gamma$-realizable by $\cH$, and every call to the weak learner is valid. By the learning model, $\cA(S)$ depends only on the labeled sample $S$ and its internal randomness, and thus is independent of the labels of unseen points. 

    \paragraph{Case 1: $n \le d_\gamma/(128\eps)$.} Let $\cX' = \{x_1, \dotsc, x_{d_\gamma}\}\subseteq \cX$ $\gamma$-interpolated by $\cH$.
    Let $\cD$ give probability $1 - 8\eps$ to $x_1$ and probability $8\eps/(d_\gamma - 1)$ to each of the remaining $x_2, \dotsc, x_{d_\gamma}$. 
    Draw $c$ uniformly from $\cC$, where $\cC$ is the set of all labelings of $\cX'$ that label $x_1$ as $1$.
    Let $S = ((s_1, c(s_1)), \dotsc, (s_{n}, c(s_{n})))$ be a training sequence of size $n$, where $s_1, \dotsc, s_n$ are drawn i.i.d. from $\cD$.

    Let $X$ be a random variable that counts the number of times $\{x_2, \dotsc, x_{d_\gamma}\}$ appears in $S$.
    Then, by linearity of expectation, we have
    \[
    \bbE[X]
    = \sum_{i=1}^n \bbP_{s_i \sim \cD}[s_i \in \{x_2, \dotsc, x_{d_\gamma}\}]
    = n \cdot (d_\gamma - 1) \cdot \frac{8\eps}{d_\gamma - 1}
    \le d_\gamma/16
    ,
    \]
    where the last inequality is by the assumption that $n \le d_\gamma/(128\eps)$.
    By Markov's inequality, we have $\bbP[X \ge d_\gamma/4] \le 1/4$.
    Thus, with probability at least $3/4$ over the choice of $S$, we have $X < d_\gamma/4$.

    Conditioned on $X < d_\gamma/4$, there are at least $3d_\gamma/4 - 1$ points in $\{x_2, \dotsc, x_{d_\gamma}\}$ that do not appear in $S$, and let $I$ be the set of such points.
    Since $c$ is uniformly at random chosen from $\cC$, the labels of these points are independent and uniformly at random chosen from $\{\pm 1\}$.
    Thus, over the randomness of $c$, any classifier $\cA(S)$ has error at least $1/2$ on each of these points.

    Let $\cL'_\cD(\cA(S), c)$ be the expected error of $\cA(S)$ with respect to $\cD$ and $c$ that only considers the points in $\{x_2, \dotsc, x_{d_\gamma}\}$, i.e.,
    \[
    \cL'_\cD(\cA(S), c)
    = \sum_{x_i \in \{x_2, \dotsc, x_{d_\gamma}\}} \cD(x_i) \cdot \ind\{c(x_i) \neq \cA(S)(x_i)\}
    .
    \]
    Over the randomness of $c$ and $S$, the error of $\cA(S)$ satisfies
    \begin{align*}
    \bbE_{c \sim \cC,S \sim \cD^n}[\cL'_\cD(\cA(S), c)]
    &\ge \bbE_{S}\left[\ind\{X < d_\gamma/4\} \cdot \sum_{x_i \in I}\cD(x_i) \cdot \bbP_{c}[c(x_i) \neq \cA(S)(x_i)]\right]\\
    &\ge \frac{3}{4} \cdot \frac{8\eps}{d_\gamma - 1} \cdot (\frac{3d_\gamma}{4} - 1) \cdot \frac{1}{2} \\
    &\ge 3\eps/2
    ,
    \end{align*}
    where the last inequality is by the assumption that $d_\gamma \ge 2$.
    Thus, there exists a choice of $c$ such that $\bbE_{S \sim \cD^n}[\cL'_\cD(\cA(S), c)] \ge 3\eps/2$.
    Since $\cL'_\cD(\cA(S), c)$ is at most $8\eps$, reverse Markov's inequality implies that for such a choice of $c$, we have
    \[
    \bbP_{S \sim \cD^n}[\cL_\cD(\cA(S)) > \eps]
    \ge \bbP_{S \sim \cD^n}[\cL'_\cD(\cA(S), c) > \eps]
    \ge \frac{3\eps/2 - \eps}{8\eps - \eps}
    = \frac{1}{14}
    > \delta
    ,
    \]
    which completes the proof of this part.

    \paragraph{Case 2: $n \le \log(1/(2\delta))/(4\eps)$.}
    Let $\cX' = \{x_1, x_2\}$ be $\gamma$-interpolated by $\cH$.
    Let $\cD$ give probability $1 - 2\eps$ to $x_1$ and probability $2\eps$ to $x_2$.
    Draw $c$ uniformly from $\cC = \{c_1, c_2\}$, where $c_1(x_1) = c_1(x_2) = c_2(x_1) = 1$ and $c_2(x_2) = -1$.
    
    Probability that $x_2$ does not appear in $S$ is $(1 - 2\eps)^n$.
    Since $c$ is uniformly at random chosen from $\cC$, the label of $x_2$ is independent and uniformly at random chosen from $\{\pm 1\}$.
    Thus, over the randomness of $c$ and $S$, we have
    \begin{align*}
    \bbP_{c \sim \cC,S \sim \cD^n}[\cL_\cD(\cA(S)) > \eps]
    &\ge \bbE_{S \sim \cD^n}[\ind\{x_2 \text{ does not appear in } S\} \cdot \bbP_{c \sim \cC}[c(x_2) \neq \cA(S)(x_2)]] \\
    &\ge (1 - 2\eps)^n \cdot \frac{1}{2}
    ,
    \end{align*}
    which implies that there exists a choice of $c$ such that
    \[
    \bbP_{S \sim \cD^n}[\cL_\cD(\cA(S)) > \eps]
    \ge (1 - 2\eps)^n \cdot 1/2
    \ge e^{-4\eps n}/2
    \ge \delta,
    \]
    where the last two inequalities come from $\eps\in(0, 1/16)$ and $n \le \log(1/(2\delta))/(4\eps)$.
    This completes the proof of this part.

\end{proof}

\section*{Acknowledgements}
\paragraph{AI Disclosure.} 
ChatGPT suggested using the discrepancy bound for intervals in permutations due to~\citet{spencer2001discrepancy}, which led to an improvement of the upper bound in \cref{thm:stumps-upper} from $O\!\left(\frac{\sqrt d}{\gamma}\log^{3/2}\!\left(\frac d\gamma\right)\right) $ to $O\!\left(\frac{\sqrt d}{\gamma}\log\!\left(\frac d\gamma\right)\right)$.
The original bound was derived by the authors using $\gamma_2$-norm techniques from~\citet{matouvsek2020factorization}. 
ChatGPT also suggested the argument used in \cref{lem:signed-discrepancy-bound}. 
ChatGPT and Claude were additionally used for calculations, verification, and language editing. 
The authors take full responsibility for all results and the content of this paper.

% \newpage
\addcontentsline{toc}{section}{References}
\bibliographystyle{plainnat}
\bibliography{references}

% Keep these appendix drafts disabled until the co-authors approve their proofs.
% The uniform scale comparison has an unresolved correctness objection.
% \newpage
% \appendix
% \phantomsection
% \addcontentsline{toc}{section}{Appendices}
% \clearpage
% \input{content/scale_stability}
% \clearpage
% \input{content/scale_comparison_bounds}

\end{document}